\newif\ifsupp 
\newif\ifcomments  
\newif\ifaistats 
\definecolor{DarkGreen}{rgb}{0.1,0.5,0.1}
\definecolor{DarkRed}{rgb}{0.5,0.1,0.1}
\definecolor{DarkBlue}{rgb}{0.1,0.1,0.5}
\newcommand{\comm}[1]{\textcolor{red}{\small  //#1}}
\newcommand{\var}[1]{\mathbb{V}\left[#1\right]}
\newcommand{\mypar}[1]{\noindent{\bf{#1}:}}
\newcommand\R{\mathbb{R}}
\newcommand{\cA}{\mathcal{A}}
\newcommand{\cD}{\mathcal{D}}
\newcommand{\cM}{\mathcal{M}}
\newcommand{\cO}{\mathcal{O}}
\newcommand{\cQ}{\mathcal{Q}}
\newcommand{\cW}{\mathcal{W}}
\newcommand{\cX}{\mathcal{X}}
\newcommand{\cY}{\mathcal{Y}}
\newcommand{\bX}{\pmb{X}}
\newcommand{\vX}{\vec{X}}
\newcommand{\vbX}{\vec{\bX}}
\newcommand{\MutInfo}[2]{I\left( #1;#2\right)}
\newcommand\numberthis{\addtocounter{equation}{1}\tag{\theequation}}
\newcommand{\PL}[2]{\mathrm{PrivLoss}\left(#1||#2 \right)}
\DeclareMathOperator*{\myargmax}{\arg\!\max}
\newcommand{\bba}{\pmb{a}}
\newcommand{\ex}[1]{\mathbb{E}\left[#1\right]}
\DeclareMathOperator*{\Expectation}{\mathbb{E}}
\newcommand{\Ex}[2]{\Expectation_{#1}\left[#2\right]}
\DeclareMathOperator*{\Probability}{\mathrm{Pr}}
\newcommand{\prob}[1]{\mathrm{Pr}\left[#1\right]}
\newcommand{\Prob}[2]{\Probability_{#1}\left[#2\right]}
\newcommand{\eps}{\varepsilon}
\newcommand{\Lap}{\mathrm{Lap}}
\renewcommand{\hat}{\widehat}
\newcommand{\ML}{\cM_{\Lap}}
\newcommand{\MG}{\cM_{\mathrm{Gauss}}}
\newcommand{\tol}{\tau}
\newcommand{\alg}{\cM}
\newcommand{\adv}{\cA}
\newtheorem{theorem}{Theorem}[section]
\newtheorem{lemma}[theorem]{Lemma}
\newtheorem{corollary}[theorem]{Corollary}
\theoremstyle{definition}
\newtheorem{definition}[theorem]{Definition}
\theoremstyle{remark}
\def\blfootnote{\gdef\@thefnmark{}\@footnotetext}
\begin{document}

\ifaistats
\runningauthor{Rogers, Roth, Smith, Srebro, Thakkar, Woodworth}
\runningtitle{Guaranteed Validity for Empirical Approaches to Adaptive Data Analysis}
\twocolumn[

\aistatstitle{Guaranteed Validity for Empirical Approaches to \\ Adaptive Data Analysis}

\aistatsauthor{ Ryan Rogers \\ Previously at University of Pennsylvania \And Aaron Roth \\ University of Pennsylvania \And  Adam Smith \\ Boston University \AND Nathan Srebro \\ Toyota Technical Institute of Chicago \And Om Thakkar \\ Boston University \And Blake Woodworth \\ Toyota Technical Institute of Chicago}
\aistatsaddress{} ]
\else
\title{Guaranteed Validity for Empirical Approaches to \\ Adaptive Data Analysis\blfootnote{Accepted to appear in the proceedings of the 23\textsuperscript{rd} International Conference on Artificial
  Intelligence and Statistics (AISTATS) 2020,  Palermo, Italy. PMLR: Volume  108.}}
\author{
Ryan Rogers\thanks{Previously at University of Pennsylvania. \texttt{rrogers386@gmail.com}} \\ Nathan Srebro\thanks{Toyota Technical Institute of Chicago. \texttt{\{nati, blake\}@ttic.edu}}
\and
Aaron Roth\thanks{University of Pennsylvania. \texttt{aaroth@cis.upenn.edu}} \\ Om Thakkar\footnotemark[4]
\and
Adam Smith\thanks{Boston University. \texttt{\{ads22, omthkkr\}@bu.edu}} 
\\ Blake Woodworth\footnotemark[2]
}
\maketitle
\fi

\begin{abstract}
   We design a general framework for answering adaptive statistical queries that focuses on providing explicit confidence intervals along with point estimates. Prior work in this area has either focused on providing tight confidence intervals for specific analyses, or providing general worst-case bounds for point estimates. Unfortunately, as we observe, these worst-case bounds are loose in many settings --- often not even beating simple baselines like sample splitting. Our main contribution is to design a framework for providing valid, instance-specific confidence intervals for point estimates that can be generated by heuristics. When paired with good heuristics, this method gives guarantees that are orders of magnitude better than the best worst-case bounds. We provide a Python library implementing our method.
\end{abstract}
\section{Introduction}
\label{sec:intro}

Many data analysis workflows are \emph{adaptive}, i.e., they re-use data over the course of a sequence of analyses, where the choice of analysis at any given stage depends on the results from previous stages. Such adaptive re-use of data is an important source of \emph{overfitting} in machine learning and \emph{false discovery} in the empirical sciences \citep{GL14}. Adaptive workflows arise, for example, 
when exploratory data analysis is mixed with confirmatory data analysis, when hold-out sets are re-used to search through large hyper-parameter spaces or to perform feature selection, and when  datasets are repeatedly re-used within a research community.  

A simple solution to this problem---that we can view as a naïve benchmark---is to simply not re-use data. More precisely, one could use \emph{sample splitting}: partitioning the dataset into $k$ equal-sized pieces, and using a fresh piece of the dataset for each of $k$ adaptive interactions with the data. This allows us to treat each analysis as nonadaptive, permitting many quantities of interest to be accurately estimated with their empirical estimate, and paired with tight confidence intervals that come from classical statistics. However, this seemingly naïve approach is wasteful in its use of data: the sample size needed to conduct a series of $k$ adaptive  analyses grows linearly with $k$. 

A line of recent work \citep{DFHPRR15,DFHPRR15nips,DFHPRR15Science,RZ16,
BNSSSU16,RRST16,FS17,FS17b,XR17,naturalanalyst,modelsimilarity} aims to improve on this baseline by using mechanisms which provide  ``noisy'' answers to queries rather than exact empirical answers. Methods coming from these works require that the sample size grow  proportional to the \emph{square root} of the number of adaptive analyses, dramatically beating the sample splitting baseline asymptotically. Unfortunately, the bounds proven in these papers---even when optimized---only beat the naïve baseline when both the dataset size, and the number of adaptive rounds, are large; see Figure \ref{fig:intro1} (left). 

\begin{figure}[ht]
	\centering
	\begin{tabular}{cc}
	\hspace*{-10pt}
	\begin{minipage}[b]{0.47\columnwidth}
		\includegraphics[width=\textwidth]{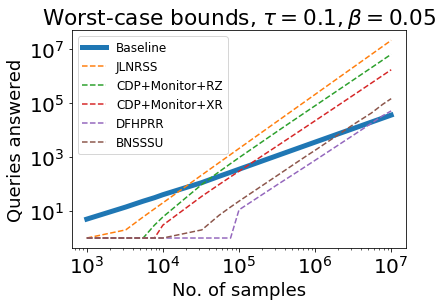}
	\end{minipage}
	\begin{minipage}[b]{0.5\columnwidth}
		\includegraphics[width=\textwidth]{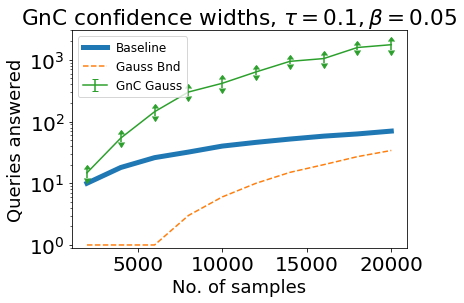}
	\end{minipage}
\end{tabular}
	\caption{ \emph{Left:} Comparison of various worst-case bounds for the Gaussian mechanism with the sample splitting baseline. `DFHPRR' and `BNSSSU' refer to bounds given in prior work (\cite{DFHPRR15STOC,BNSSSU16}). The other lines plot improved worst-case bounds derived in this paper, whereas `JLNRSS' refers to bounds in subsequent work (\cite{JungLNRSS19}). (See Section~\ref{sec:cw_bounds} for the full model and parameter descriptions.)
	\emph{Right:} Performance of  Guess and Check with the Gaussian mechanism providing the guesses (``GnC Gauss'') for a plausible query strategy (see Section~\ref{sec:expts}), compared with the best worst-case bounds for the Gaussian mechanism (``Gauss Bnd''), and the baseline.}
	\label{fig:intro1}
\end{figure}

The failure of these worst-case bounds to beat simple baselines in practice --- despite their attractive asymptotics --- has been a major obstacle to the practical adoption of techniques from this literature. There are two difficulties with directly improving this style of bounds. First, we are limited by what we can prove: mathematical analyses can often be loose by constants that are significant in practice. The more fundamental difficulty is that these bounds are guaranteed to hold even against a \emph{worst-case} data analyst, who is adversarially attempting to find queries which over-fit the sample: one would naturally expect that when applied to a real workload of queries, such worst-case bounds would be extremely pessimistic. We address both difficulties in this paper. 

\paragraph{Contributions}
In this paper, we move the emphasis from algorithms that provide point estimates to algorithms that explicitly manipulate and output \emph{confidence intervals} based on the queries and answers so far, providing the analyst with both an estimated value and a measure of its actual accuracy. At a technical level, we have two types of contributions: 

First, we give optimized worst-case bounds that carefully combine techniques from different pieces of prior work---plotted in Figure \ref{fig:intro1} (left). For certain mechanisms, our improved worst-case bounds are within small constant factors of optimal, in that we can come close to saturating their error bounds with a concrete, adversarial query strategy (Section \ref{sec:cw_bounds}). However, even these optimized bounds require extremely large sample sizes to improve over the naive sample splitting baseline, and their pessimism means they are often loose.

Our main result is the development of a simple framework called \emph{Guess and Check}, that allows an analyst to pair \emph{any method} for ``guessing'' point estimates and confidence interval widths for their adaptive queries, and then rigorously validate those guesses on an additional held-out dataset. So long as the analyst mostly guesses correctly, this procedure can continue indefinitely. The main benefit of this framework is that it allows the analyst to guess confidence intervals whose guarantees \emph{exceed what is guaranteed by the worst-case theory}, and still enjoy rigorous validity in the event that they pass the ``check''. This makes it possible to take advantage of the non-worst-case nature of natural query strategies, and avoid the need to ``pay for'' constants that seem difficult to remove from worst-case bounds. Our empirical evaluation demonstrates that our approach can improve on worst-case bounds by orders of magnitude, and that it improves on the naive baseline even for  modest sample sizes: see Figure \ref{fig:intro1} (right), and Section \ref{sec:gnc} for details. We also provide a Python library 
containing an implementation of our Guess and Check framework.

\paragraph{Related Work}
Our ``Guess and Check'' (GnC) framework draws inspiration from the Thresholdout method of \cite{DFHPRR15nips}, which uses a holdout set in a similar way. GnC has several key differences, which turn out to be crucial for practical performance. 
First, whereas the ``guesses'' in Thresholdout are simply the empirical query answers on a ``training'' portion of the dataset, we make use of other heuristic methods for generating guesses (including, in our experiments, Thresholdout itself) that empirically often seem to prevent overfitting to a substantially larger degree than their worst-case guarantees suggest. 
Second, we make confidence-intervals first-order objects: whereas the ``guesses'' supplied to Thresholdout are simply point estimates, the ``guesses'' supplied to GnC are point estimates along with confidence intervals. 
Finally, we use a more sophisticated analysis to track the number of bits leaked from the holdout, which lets us give tighter confidence intervals and avoids the need to a priori set an upper bound on the number of times the holdout is used.  \cite{GPS18} use a version of Thresholdout to get worst-case accuracy guarantees for values of the AUC-ROC curve for adaptively obtained queries. However, apart from being limited to binary classification tasks and the dataset being used only to obtain AUC values, their bounds require ``unrealistically large'' dataset sizes. Our results are complementary to theirs; by using appropriate concentration inequalities, GnC could also be used to provide confidence intervals for AUC values. Their technique could be used to provide the ``guesses'' to GnC.

Our improved worst-case bounds combine a number of techniques from the existing literature: namely the information theoretic arguments of \cite{RZ16,XR17} together with the ``monitor'' argument of \cite{BNSSSU16}, and a more refined accounting for the properties of specific mechanisms using \emph{concentrated differential privacy} (\cite{DR16,BunS16}). 
\cite{FS17,FS17b} give worst-case bounds that improve with the variance of the asked queries. In Section \ref{sec:expts}, we show how GnC can be used to give tighter bounds when the empirical query variance is small.

\cite{modelsimilarity} give an improved union bound for queries that have high overlap, that can be used to improve bounds for adaptively validating similar models, in combination with description length bounds. \cite{naturalanalyst} take a different approach to going beyond worst-case bounds in adaptive data analysis, by proving bounds that apply to data analysts that may only be adaptive in a constrained way. A difficulty with this approach in practice is that it is limited to analysts whose properties can be inspected and verified --- but provides a potential explanation why worst-case bounds are not observed to be tight in real settings. Our approach is responsive to the degree to which the analyst actually overfits, and so will also provide relatively tight confidence intervals if the analyst satisfies the assumptions of \cite{naturalanalyst}.

In very recent work (subsequent to this paper), \citet{JungLNRSS19} give a further tightening of the worst-case bounds, improving the dependence on the coverage probability $\beta$. Their bounds (shown in Figure~\ref{fig:intro1} (left)) do not significantly affect the comparison with our GnC method since they yield only worst-case analysis.



\subsection{Preliminaries}

As in previous work, we assume that there is a dataset $X = (x_1,\cdots, x_n) \sim \cD^n$ drawn i.i.d. from an unknown distribution $\cD$ over a universe $\cX$. This dataset is the input to a mechanism $\alg$ that also receives a sequence of queries $\phi_1,\phi_2,...$ from an analyst $\adv$ and outputs, for each one, an answer. Each $\phi_i$ is a \emph{statistical query}, defined by a bounded function $\phi_i:\cX\to [0,1]$. We denote the expectation of a statistical query $\phi$ over the data distribution by $\phi(\cD) = \Ex{x \sim \cD}{\phi(x)}$, and the empirical average on a dataset by $\phi(X) = \frac{1}{n} \sum_{i=1}^n \phi(x_i)$. 

The mechanism's goal is to give estimates of $\phi_i(\cD)$ for query $\phi_i$ on the unknown $\cD$. Previous work looked at analysts that produce a single point estimate $a_i$, and measured error based on the distances $\left|a_i - \phi_i(\cD)\right|$. As mentioned above, we propose a shift in focus: we ask mechanisms to produce a confidence interval specified by a point estimate $a_i$ and width $\tau_i$. The answer $(a_i,\tau_i)$ is \emph{correct for $\phi_i$ on $\cD$} if $\phi_i(\cD) \in (a_i - \tau_i, a_i + \tau_i)$. (Note that the data play no role in the definition of correctness---we measure only population accuracy.)

An interaction between randomized algorithms $\alg$ and $\adv$ on dataset $X\in\cX^n$ (denoted $\alg(X) \rightleftharpoons \adv$) consists of an unbounded number of query-answer rounds: at round $i$, $\adv$ sends $\phi_i$, and $\alg(X)$ replies with $(a_i,\tau_i)$. $\alg$ receives $X$ as input. $\adv$ receives no direct input, but may select queries  \emph{adaptively}, based on the answers in previous rounds. The interaction ends when either the mechanism or the analyst stops. We say that the mechanism provides simultaneous coverage if, with high probability,  \emph{all} its answers are correct:

\begin{definition}[Simultaneous Coverage]\label{defn:coverage}
  Given $\beta\in (0,1)$, we say that $\alg$ has \emph{simultaneous coverage $1-\beta$} if, for all $n\in \mathbb{N}$, all distributions $\cD$ on $\cX$ and all randomized algorithms $\adv$, 
  {\small
  \[
    \Pr_{\substack{X\sim\cD^n, \\
        \left\{(\phi_i,a_i, \tau_i)\right\}_{i=1}^k \gets (\alg(X) \rightleftharpoons \adv)}}
    \left[
      \forall i\in [k]: \phi_i(\cD) \in a_i \pm \tau_i
    \right] \geq 1-\beta
  \]
  }
\end{definition}

We denote by $k$ the (possibly random) number of rounds in a given interaction.

\begin{definition}[Accuracy]\label{defn:accuracy}
We say $\alg$ is \emph{$(\tau, \beta)$-accurate}, if $\alg$ has simultaneous coverage $1-\beta$ and its interval widths satisfy $ \max\limits_{i \in [k]} \tau_i\leq \tau$ with probability 1.
\end{definition}

We defer some additional preliminaries to Appendix~\ref{app:defns}.

\section{Confidence intervals from worst-case bounds}
\label{sec:cw_bounds}

Our emphasis on explicit confidence intervals led us to derive worst-case bounds that are as tight as possible given the techniques in the literature. We discuss the Gaussian mechanism here, and defer the application to Thresholdout
\ifsupp 
in \Cref{app:thresh_cw}, and provide a pseudocode for Thresholdout in \Cref{alg:thresh}. 
\else 
to the supplementary material. \fi 

The Gaussian mechanism is defined to be an algorithm that, given input dataset $X \sim \cD^n$ and a query $\phi: \cX \to [0,1]$, reports an answer $a = \phi(X) + N\left(0,\frac{1}{2n^2 \rho}\right)$, where $\rho > 0$ is a parameter. It has existing analyses for simultaneous coverage (see \cite{DFHPRR15STOC,BNSSSU16}) --- but these analyses involve large, sub-optimal constants. Here, we provide an improved worst-case analysis by carefully combining existing techniques.  We use results from \cite{BS16} to bound the mutual information of the output of the Gaussian mechanism with its input.  We then apply an argument similar to that of \cite{RZ16} to bound the bias of the empirical average of a statistical query selected as a function of the perturbed outputs.  Finally, we use Chebyshev's inequality, and the monitor argument from \cite{BNSSSU16} 
to obtain high probability accuracy bound. Figure~\ref{fig:intro1} shows the improvement in the number of queries that can be answered with the Gaussian mechanism with $(0.1,0.05)$-accuracy.
Our guarantee is stated below, with its proof deferred to 
\ifsupp
Appendix~\ref{app:proofRZ_CW}.
\else
the supplementary material.
\fi
\begin{theorem}
Given input $X \sim \cD^n$, confidence parameter $\beta$, and parameter $\rho$, the Gaussian mechanism is $(\tol, \beta$)-accurate, where
$
\tol =  \sqrt{\frac{1}{2 n \beta } \cdot \min\limits_{\lambda \in [0,1)} \left(\frac{ 2 \rho k n - \ln \left( 1-\lambda \right)}{\lambda}\right)} + \frac{1}{2n} \sqrt{\frac{1}{\rho}\ln\left(\frac{4k}{\beta}\right)}.
$
\label{thm:RZ_CW}
\end{theorem}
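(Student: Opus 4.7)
The plan is to decompose the total error as $|\phi_i(\cD) - a_i| \le |\phi_i(\cD) - \phi_i(X)| + |\phi_i(X) - a_i|$ and to control the two terms separately, budgeting failure probability $\beta/2$ to each. These two pieces will recover the two summands of $\tol$.

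For the noise term $|\phi_i(X) - a_i|$, the answer is by construction $\phi_i(X)$ plus an independent Gaussian of variance $1/(2 n^2 \rho)$. A standard Gaussian tail bound together with a union bound over the $k$ queries at level $\beta/2$ gives $\max_i |\phi_i(X) - a_i| \le \tfrac{1}{2n}\sqrt{\tfrac{1}{\rho}\ln(4k/\beta)}$ with probability at least $1 - \beta/2$, which is the second summand of $\tol$.

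For the generalization term I would invoke the zCDP machinery. By \cite{BS16}, a single application of the Gaussian mechanism with parameter $\rho$ is $\rho$-zCDP, and $k$-fold adaptive composition is $k\rho$-zCDP, which in turn bounds the mutual information $\MutInfo{X}{T} \le k \rho n$ between the dataset and the interaction transcript $T$. Following the strategy of \cite{RZ16}, I would then apply a Donsker--Varadhan change of measure to any statistical query $\phi_T$ chosen as a function of $T$: for any $\lambda \in [0,1)$,
$$\lambda n \cdot \Ex{}{(\phi_T(X) - \phi_T(\cD))^2} \le \MutInfo{X}{T} + \ln \Ex{X', T'}{\exp(\lambda n (\phi_{T'}(X') - \phi_{T'}(\cD))^2)},$$
where $X', T'$ are sampled independently from the correct marginals. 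Since $\phi \in [0,1]$, the centered empirical mean $\phi(X') - \phi(\cD)$ is sub-Gaussian with the appropriate scale, so the log-MGF term is bounded (after algebraic cleanup) by $-\ln(1-\lambda)$. Rearranging yields $\Ex{}{(\phi_T(X) - \phi_T(\cD))^2} \le \tfrac{1}{\lambda n}\left(2\rho k n - \ln(1-\lambda)\right)$, and Chebyshev's inequality at level $\beta/2$ converts this second-moment control into a high-probability bound; this is precisely where the $\tfrac{1}{\beta}$ factor inside the square root of $\tol$ comes from.

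To promote this single-query bound to \emph{simultaneous} coverage over all $k$ answers, I would use the ``monitor'' reduction of \cite{BNSSSU16}: introduce a meta-analyst that inspects the full transcript and outputs the index $i^*$ achieving the largest population-empirical gap; the previous bound then controls this worst query, which bounds every coordinate simultaneously. Minimizing over $\lambda \in [0,1)$ produces exactly the first summand of $\tol$, and combining with the noise term and Definition~\ref{defn:accuracy} gives the claimed $(\tol, \beta)$-accuracy. The main obstacle I anticipate is tracking constants cleanly through the sub-Gaussian MGF computation and matching it to the specific $-\ln(1-\lambda)$ form in the statement; sloppy bookkeeping would easily introduce spurious factors of $2$ or $4$, and the monitor argument has to be set up so that the $\lambda$-optimization sits outside rather than inside the Chebyshev step.
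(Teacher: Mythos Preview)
Your proposal is correct and follows essentially the same route as the paper: the error is split via Corollary~\ref{cor:pop_acc} into a sample-accuracy term handled by the Gaussian tail bound (Theorem~\ref{thm:acc}) and a population-accuracy term handled by combining the zCDP mutual-information bound (Lemma~\ref{lem:mutualCDP}), the monitor post-processing (Lemma~\ref{lem:compMutInfo}), the Russo--Zou style second-moment bound (Theorem~\ref{thm:RZds} with Lemmas~\ref{lem:fact1}--\ref{lem:fact3} and the sub-Gaussian constant from Lemma~\ref{lem:SQgauss}), and Chebyshev. The only discrepancy is bookkeeping: your displayed intermediate bound $\tfrac{1}{\lambda n}\bigl(2\rho k n - \ln(1-\lambda)\bigr)$ is off by a factor of four from the paper's $\tfrac{1}{4n}\cdot\tfrac{2\rho k n - \ln(1-\lambda)}{\lambda}$ (coming from $\sigma^2 = 1/(4n)$ in Lemma~\ref{lem:SQgauss}), which is exactly the constant-tracking hazard you already anticipated.
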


We now consider the extent to which our analyses are improvable for worst-case queries to the Gaussian and the Thresholdout mechanisms.  To do this, we derive the worst query strategy in a particular restricted regime. We call it the ``single-adaptive query strategy'',  and show that it maximizes the root mean squared error (RMSE) amongst all single query strategies under the assumption that each sample in the dataset is drawn u.a.r. from $\{-1,1\}^{k+1}$, and the strategy is given knowledge of the empirical correlations of each of the first $k$ features with the $(k+1)$st feature (which can be obtained e.g. with $k$ non-adaptive queries asked prior to the adaptive query). We provide a pseudocode for the strategy 
\ifsupp 
in Algorithm~\ref{alg:strategy}, and prove that our single adaptive query results in maximum error, in Appendix~\ref{app:strategy}. To make the bounds comparable, we translate our worst-case confidence upper bounds for both the mechanisms to RMSE bounds in Theorem~\ref{thm:RZ_MSE} and Theorem~\ref{thm:thresh_mse}.
\else
and its analysis in the supplementary material. To make the bounds comparable, we translate our accuracy upper bounds for both the mechanisms to RMSE bounds (deferred to the supplementary material).
\fi
Figure~\ref{fig:two_round} shows the difference between our best upper bound and the realized RMSE (averaged over 100 executions) for the two mechanisms using $n=5,000$ and various values of $k$. (For the Gaussian, we set $\rho$ separately for each $k$, to minimize the upper bound.)  On the left, we see that the two bounds for the Gaussian mechanism are within a factor of 2.5, even for $k=50,000$ queries. Our bounds are thus reasonably tight in one important setting. For Thresholdout (right side), however, we see a large gap between the bounds which grows with $k$, even for our best query strategy\footnote{We tweak the adaptive query in the single-adaptive query strategy to result in maximum error for Thresholdout. We also tried ``tracing'' attack strategies (adapted from the fingerprinting lower bounds of \cite{BunUV14,HU14,SU15}) that contained multiple adaptive queries, but gave similar results.}. This result points to the promise for empirically-based confidence intervals for complex mechanisms that are harder to analyze.

\begin{figure}[ht]
	\centering	
	\begin{tabular}{cc}
	\hspace*{-10pt}	
	\begin{minipage}[b]{0.48\columnwidth}
	\includegraphics[width=\textwidth]{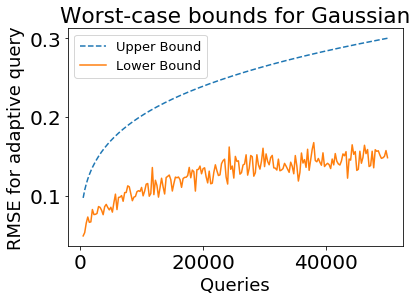}
	\end{minipage}
	\begin{minipage}[b]{0.5\columnwidth}
		\includegraphics[width=\textwidth]{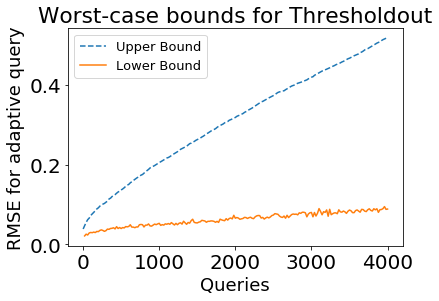}
	\end{minipage}
	\end{tabular}
	\caption{Worst-case upper (proven) and lower RMSE bounds (realized via the single-adaptive query strategy) with $n=5,000$ for Gaussian (left) and Thresholdout (right).}
	\label{fig:two_round}
\end{figure}


\section{The Guess and Check Framework}
\label{sec:gnc}
In light of the inadequacy of worst-case bounds, we here present our Guess and Check (GnC) framework which can go beyond the worst case. It takes as inputs \emph{guesses} for both the point estimate of a query, and a confidence interval width. If GnC can validate a guess, it releases the guess. Otherwise, at the cost of widening the confidence intervals provided for future guesses, it provides the guessed confidence width along with a point estimate for the query using the holdout set such that the guessed width is valid for the estimate.  

An instance of GnC, $\cM$, takes as input a dataset $X$, desired confidence level $1-\beta$, and a mechanism $\cM_g$ which operates on inputs of size $n_g < n$. $\cM$ randomly splits $X$ into two, giving one part $X_g$ to $\cM_g$, and reserving the rest as a holdout $X_h$. For each query $\phi_i$, mechanism $\cM_g$ uses $X_g$ to make a ``guess'' $(a_{g,i}, \tau_i)$ to $\cM$, for which $\cM$ conducts a validity check. If the check succeeds, then $\cM$ releases the guess as is, otherwise $\cM$ uses the holdout $X_h$ to provide a response containing a discretized answer that has $\tau_i$ as a valid confidence interval. This is closely related to Thresholdout. 
However, an important distinction is that the width of the target confidence interval, rather than just a point estimate, is provided as a guess. Moreover, the guesses themselves can be made by non-trivial algorithms. 

Depending on how long one expects/requires GnC to run, the input confidence parameter $\beta$ can guide the minimum value of the holdout size $n_h$ that will be required for GnC to be able to get a holdout width $\tau_h$ smaller than the desired confidence widths $\tau_i, \forall i \geq 1$. Note that this can be evaluated before starting GnC. Apart from that, we believe what is a good split will largely depend on the Guess mechanism. Hence, in general the split parameter should be treated as a hyperparameter for our GnC method.
We provide pseudocode for GnC in \Cref{alg:gnc}, and a block schematic of how a query is answered by GnC in \Cref{fig:gnc}.

\begin{figure}[tbp]
  \begin{algorithm}[H]
    \caption{Guess and Check}
\label{alg:gnc}
\begin{algorithmic}
\REQUIRE Data $X \in \cX^n$, confidence parameter $\beta$, analyst having mechanism $\cM_g$ with inputs of size $n_g<n$
\STATE Randomly split $X$ into a guess set $X_g$ of size $n_g$ to input into $\cM_g$, and a holdout $X_h$ of size $n_h = n - n_g$ 
\STATE $f \leftarrow 0$, $c_j \leftarrow \frac{6}{\pi^2 (j+1)^2}$ for $j \geq 0$ \comm{should just satisfy $\sum\limits_{j \geq 0}c_j \leq 1$}
\FOR{$i=1$ to $\infty$}
\STATE \textbf{if} $f > 0$ \textbf{then} $\nu_{i,f,\gamma_1^f} \leftarrow \binom{i-1}{f}  \prod_{j \in [f]} \left( \frac{1}{\gamma_j}\right)$ \textbf{else} $\nu_{i,f,\gamma_1^f} \leftarrow 1$ \hfill \comm{Compute \# possible transcripts}
\STATE $\beta_i \leftarrow (\beta \cdot  c_{i-1} \cdot  c_f)/\nu_{i,f,\gamma_1^f}$ 
\STATE Receive query $\phi_i$ and guess $\left(a_{g, i}, \tau_i\right) \leftarrow \cM_g(X_g, \phi_i)$ from analyst
\STATE  $a_{h, i} \leftarrow \phi_i(X_h)$ \hfill \comm{holdout answer}
\STATE  $\tau_h \leftarrow HoldoutTol(\beta_i, a_{g, i}, \tau_i, a_{h,
  i})$ \hfill \comm{$HoldoutTol$ returns a valid tolerance for $a_{h, i}$}
\IF{$|a_{g, i} - a_{h, i}| \leq \tau_i - \tau_h$}
\STATE Output $\left(a_{g, i}, \tau_i\right)$ to analyst
\ELSE
\STATE $f \leftarrow f + 1$
\STATE $\gamma_f \leftarrow \max\limits_{[0,\tau_i)} \gamma \text{
  s.t. } 2e^{-2(\tau_i - \gamma)^2n_h} \leq \beta_i$ \hfill \comm{max. discretization parameter with validity}
\IF{$\gamma_f > 0$}
\STATE Output $\left(\lfloor a_{h,i}\rfloor_{\gamma_f}, \tau_i\right)$ to analyst, where $\lfloor y\rfloor_{\gamma}$ denotes $y$ discretized to multiples of $\gamma$
\ELSE
\STATE Output $\bot$ to analyst
\STATE \textbf{break} \hfill  \comm{Terminate \textbf{for} loop}
\ENDIF
\ENDIF
\ENDFOR
\end{algorithmic}
  \end{algorithm}

\bigskip

\centering
  \includegraphics[width=\linewidth]{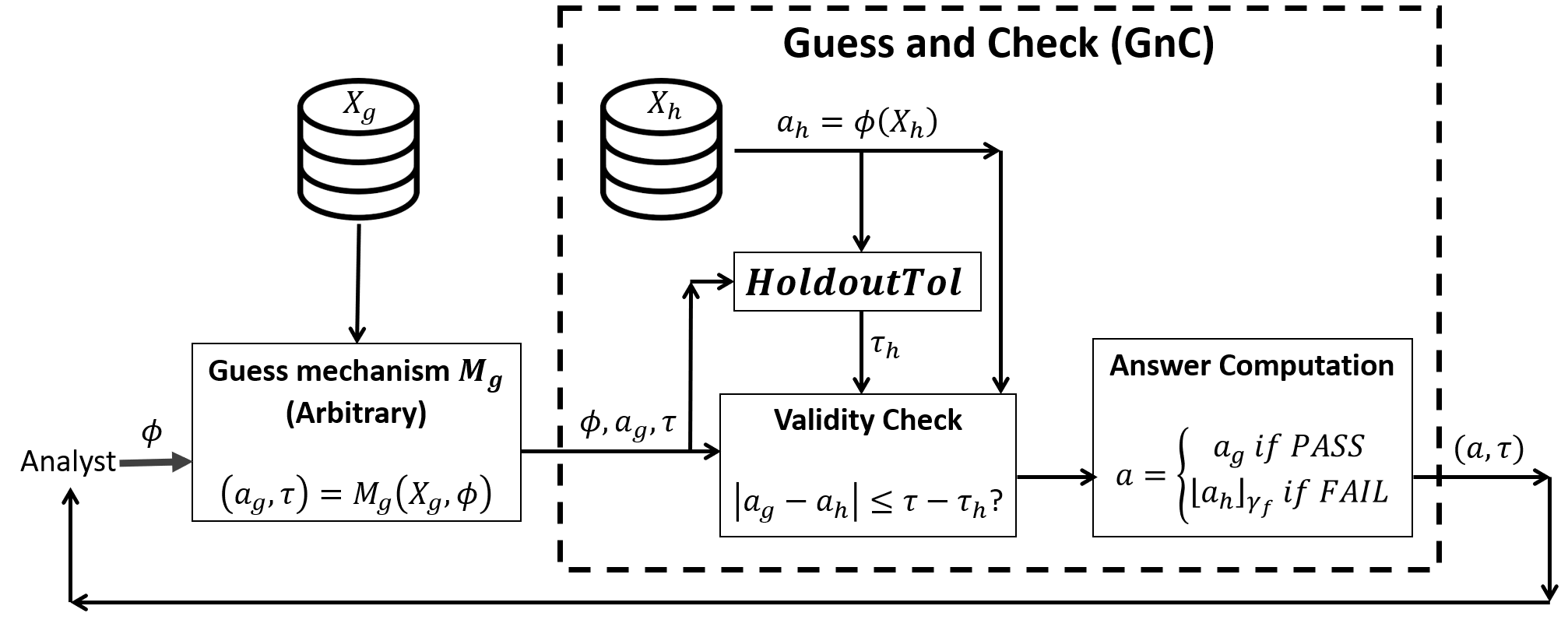}
  \caption{A schematic of how query $\phi$ is answered via our Guess and Check (GnC) framework. Dataset $X_g$ is the guess set randomly partitioned by GnC. The dotted box represents computations that are previleged, and are not accessible to the analyst.}
	\label{fig:gnc}
\end{figure}

We provide coverage guarantees for GnC without any restrictions on the guess mechanism. To get the guarantee, we first show that for query $\phi_i$, if function $HoldoutTol$ returns a  $(1 - \beta_i)$-confidence interval $\tau_h$ for holdout answer $a_{h,i}$, and GnC's output is the guess $(a_{g,i}, \tau_i)$, then $\tau_i$ is a  $(1 - \beta_i)$-confidence interval for $a_{g,i}$. We can get a simple definition for $HoldoutTol$  (formally stated in \ifsupp \Cref{app:proofGNCmain}), \else the supplementary material) via an application of Chernoff bound, \fi but we provide a slightly sophisticated variant below that uses the guess and holdout answers to get better tolerances, especially under \emph{low-variance} queries. We defer the proof of \Cref{lem:gnc_width_mgf} to \ifsupp
Appendix~\ref{app:proofGNClem}.
\else
the supplementary material.
\fi

\begin{lemma}\label{lem:gnc_width_mgf}
If the function $HoldoutTol$ in GnC (Algorithm~\ref{alg:gnc}) is defined as 
{\small
\begin{align*}
 HoldoutTol(\beta', a_g, \tau, a_h) 
 &  = 
	 \begin{cases}
            \arg\min\limits_{\tau' \in (0,\tau)} \left\{ \begin{array}{c}  \left(\frac{1 + \mu (e^{\ell} - 1)} {e^{\ell (\mu + \tau')}}\right)^n \leq \frac{\beta}{2}, \\ \text{ where } \ell \text{ solves } \\ \frac{\mu e^\ell}{1 + \mu(e^\ell + 1)} = \mu + \tau' \end{array} \right\}
             \text{if } a_g > a_h \, ,\\
		 \arg\min\limits_{\tau' \in (0,\tau)} \left\{ \begin{array}{c}  \left(\frac{ 1 + \mu' (e^{\ell} - 1) }{e^{\ell  (\mu' + \tau')}}\right)^n \leq \frac{\beta}{2}, \\ \text{ where }   \ell \text{ solves } \\  \frac{\mu' e^{\ell}}{1 + \mu'(e^{\ell} + 1)} = \mu' + \tau' \end{array} \right\}
                 \text{o.w.}
       \end{cases}
\end{align*}
}
where $\mu = a_g - \tau$ and $\mu' = 1 - a_g - \tau$, then for each query $\phi_i$ s.t. GnC's  output is $(a_{g, i}, \tau_i)$, we have $\Pr{\left(|a_{g, i} - \phi_i(\cD)|>\tau_i\right)}\leq \beta_i.$
\end{lemma}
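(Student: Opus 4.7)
The plan is a two-step reduction. First, a triangle-inequality argument transfers the coverage question for the guess $a_{g,i}$ to a concentration question for the holdout estimate $a_{h,i}$. Second, I would identify the formulae defining $HoldoutTol$ as optimized one-sided Chernoff bounds for the mean of $n_h$ independent $[0,1]$-valued random variables, and argue that they give $\tau_h$ the desired coverage property.

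For the first step, whenever GnC outputs the guess $(a_{g,i},\tau_i)$, the validity check $|a_{g,i}-a_{h,i}|\le\tau_i-\tau_h$ has succeeded, so by the triangle inequality,
\begin{equation*}
|a_{g,i}-\phi_i(\cD)| \;\le\; |a_{g,i}-a_{h,i}| + |a_{h,i}-\phi_i(\cD)| \;\le\; (\tau_i-\tau_h) + |a_{h,i}-\phi_i(\cD)|.
\end{equation*}
Hence on the release event, $\{|a_{g,i}-\phi_i(\cD)|>\tau_i\}\subseteq\{|a_{h,i}-\phi_i(\cD)|>\tau_h\}$, and it suffices to show $\Pr\left[|a_{h,i}-\phi_i(\cD)|>\tau_h\right]\le\beta_i$. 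To handle adaptivity, I would fix the entire prior transcript up through round $i-1$; conditional on this transcript the query $\phi_i$ and guess $(a_{g,i},\tau_i)$ are deterministic, while $X_h$ consists of i.i.d.\ samples from $\cD$. The supremum over the $\nu_{i,f,\gamma_1^f}$ possible sequences of rounded holdout releases is exactly what the union-bound factor in $\beta_i = \beta\cdot c_{i-1}\cdot c_f/\nu_{i,f,\gamma_1^f}$ accounts for, allowing a per-transcript bound at level $\beta_i$ to aggregate to the claimed unconditional bound.

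For the second step, I would recognize the inner expressions in $HoldoutTol$ as the optimized Chernoff tail bound for sums of independent $[0,1]$-valued random variables. Using the standard MGF upper bound $\Expectation[e^{\ell Y}]\le 1+p(e^\ell-1)$ for any $Y\in[0,1]$ with mean $p$, one has
\begin{equation*}
\Pr[a_{h,i}\ge p+\tau'] \;\le\; \inf_{\ell>0}\left(\frac{1+p(e^\ell-1)}{e^{\ell(p+\tau')}}\right)^{n_h},
\end{equation*}
whose minimizer $\ell$ satisfies the first-order condition $pe^\ell/(1+p(e^\ell-1))=p+\tau'$ appearing in the lemma. The alternative case ($a_{g,i}\le a_{h,i}$) is handled by applying the same inequality to $1-Y$, which produces the expression involving $\mu'=1-a_{g,i}-\tau_i$ and gives the corresponding lower-tail bound. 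The main obstacle is that the true mean $p=\phi_i(\cD)$ is unknown, so the lemma substitutes $\mu=a_{g,i}-\tau_i$ (resp.\ $\mu'$) for $p$; justifying this requires a monotonicity property of the Chernoff rate in $p$, so that on the regime where the substitution is conservative the bound is immediate, while on the complementary regime either the triangle-inequality reduction renders the bad event empty or the validity check would itself fail with at least the corresponding probability, so $\beta_i/2$ per one-sided tail suffices. A minor side check is that the implicit equation for $\ell$ admits a unique solution in $(0,\infty)$ for each $\tau'\in(0,\tau_i)$, which follows from strict convexity of the log-MGF in $\ell$.
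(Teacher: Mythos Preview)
Your triangle-inequality step mirrors the paper's proof of the simpler Hoeffding-based Lemma~\ref{lem:gnc_width1}, but for this lemma it is a detour that does not close. After the containment on the release event you pass to the unconditional claim $\Pr[|a_{h,i}-\phi_i(\cD)|>\tau_h]\le\beta_i$, and that claim can be false: $\tau_h$ is calibrated by the optimized Chernoff bound at the \emph{boundary} mean $\mu=a_{g,i}-\tau_i$ (or $\mu'$), and for fixed deviation $\tau'$ the Chernoff exponent behaves like $\tau'^2/(2p(1-p))$, which is smallest near $p=1/2$. If $\phi_i(\cD)$ lies closer to $1/2$ than the boundary does---say $a_{g,i}=0.2$, $\tau_i=0.1$, $\phi_i(\cD)=0.5$, so $\mu'=0.7$---then $\Pr[|a_{h,i}-0.5|>\tau_h]$ can exceed $\beta_i$ even though the lemma's conclusion holds. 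Your ``complementary regime'' intuition (that in such cases the check itself is unlikely to pass) is correct, but proving it requires precisely what you discarded: the check pins $a_{h,i}$ to a fixed interval around $a_{g,i}$, not to a $\tau_h$-neighborhood of the unknown $\phi_i(\cD)$.

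The paper therefore keeps the check condition throughout. It first proves a short lemma that the binomial MGF dominates the MGF of any $[0,1]$-valued average with the same mean. Then it argues that the check passing forces $a_{h,i}\ge\mu+\tau_h$ when $\phi_i(\cD)<\mu$, and $a_{h,i}\le(a_{g,i}+\tau_i)-\tau_h$ when $\phi_i(\cD)>a_{g,i}+\tau_i$; these are tail events at \emph{fixed} thresholds, so the only monotonicity needed is the elementary stochastic one (smaller true mean $\Rightarrow$ smaller upper tail at a fixed level), and taking the supremum at the boundary mean yields exactly the expression defining $HoldoutTol$. The $a_{g,i}\gtrless a_{h,i}$ split merely records which boundary's one-sided bound $\tau_h$ is designed to certify. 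As a side remark, your paragraph on conditioning on the prior transcript and the factor $\nu_{i,f,\gamma_1^f}$ is the content of Theorem~\ref{thm:gnc_acc}; the paper treats that separately from this per-query lemma.
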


Next, if failure $f$ occurs within GnC for query $\phi_i$, by applying a Chernoff bound we get that $\gamma_f$ is the maximum possible discretization parameter s.t. $\tau_i$ is a $(1 - \beta_i)$-confidence interval for the discretized holdout answer $\lfloor a_{h,i}\rfloor_{\gamma_f}$. 
Finally, we get a simultaneous coverage guarantee for GnC by a union bound over the error probabilities of the validity over all possible transcripts between GnC and any analyst $\adv$ with adaptive queries $\{\phi_1, \ldots, \phi_k\}$. The guarantee is stated below, with its proof deferred to  
\ifsupp
Appendix~\ref{app:proofGNCmain}.
\else
the supplementary material.
\fi
\begin{theorem}\label{thm:gnc_acc}
The Guess and Check mechanism  (Algorithm~\ref{alg:gnc}), with inputs dataset $X\sim \cD^n$, confidence parameter $\beta$, and mechanism $\cM_g$ that, using inputs of size $n_g<n$, provides responses (``guesses'') of the form $(a_{g,i}, \tau_i)$ for query $\phi_i$, has simultaneous coverage $1 - \beta$. 
\end{theorem}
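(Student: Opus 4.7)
The plan is to apply a union bound across all possible execution paths of the mechanism, allocating the failure budget $\beta$ among (i) the round index $i$, (ii) the number of past failures $f$, and (iii) the enumeration of all ``transcript skeletons'' consistent with those counts. Fix a deterministic analyst $\adv$ (randomization in $\adv$ and $\cM_g$ can be handled by also conditioning on their internal coins). Define a skeleton $t$ for the first $i-1$ rounds as the specification of (a) which of those rounds resulted in a check failure and (b) the discretized holdout value $\lfloor \phi_j(X_h)\rfloor_{\gamma_{f_j}}$ returned on each failure round $j$. Crucially, passes need not be enumerated in the skeleton: on a pass, the output equals the guess, which is a deterministic function of $X_g$ and the prior transcript, so only failure outputs constitute information leaked about the holdout to the adaptive analyst. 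The number of skeletons with $f$ failures is at most $\binom{i-1}{f}\prod_{j=1}^{f}(1/\gamma_j) = \nu_{i,f,\gamma_1^f}$, exactly matching the normalization in $\beta_i$.

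For each $(i,f,t)$, the goal is to prove
\[
\Pr\bigl[\text{skeleton at round }i\text{ is }t \text{ and the }i\text{-th answer is invalid}\bigr] \le \beta_i,
\]
split in two cases. If the $i$-th check passes, the invalidity event $|a_{g,i}-\phi_i(\cD)|>\tau_i$ combined with the check constraint $|a_{g,i}-\phi_i(X_h)|\le \tau_i-\tau_h$ forces $|\phi_i(X_h)-\phi_i(\cD)|>\tau_h$; the MGF-based choice of $\tau_h$ in Lemma~\ref{lem:gnc_width_mgf} (instantiated with confidence $\beta_i$) certifies this has probability at most $\beta_i$ for the fixed query $\phi_i$ pinned down by $t$ and by the conditioned coins and $X_g$. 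If the $i$-th check fails, the output is $(\lfloor \phi_i(X_h)\rfloor_{\gamma_f},\tau_i)$; invalidity implies $|\phi_i(X_h)-\phi_i(\cD)|>\tau_i-\gamma_f$ by the rounding slack, and Hoeffding's inequality applied to the i.i.d.\ sample $X_h\sim \cD^{n_h}$ for the fixed query $\phi_i$ yields probability at most $2\exp(-2(\tau_i-\gamma_f)^2 n_h)\le \beta_i$ by the algorithm's maximal choice of $\gamma_f$.

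Summing over skeletons, failure counts, and rounds, the total failure probability is at most
\[
\sum_{i\ge 1}\sum_{f\ge 0}\nu_{i,f,\gamma_1^f}\cdot \beta_i \;=\; \beta \sum_{i\ge 1} c_{i-1}\sum_{f\ge 0} c_f \;\le\; \beta,
\]
since $\sum_{j\ge 0} c_j\le 1$ by the choice $c_j = 6/(\pi^2(j+1)^2)$. This yields simultaneous coverage $1-\beta$.

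The main obstacle I expect is justifying rigorously that, once one conditions on a skeleton $t$ (together with the analyst/guess coins and $X_g$), the query $\phi_i$ can be treated as a \emph{fixed} statistical query against the independent holdout $X_h$, so that Lemma~\ref{lem:gnc_width_mgf} and Hoeffding apply without the circularity introduced by adaptive reuse. The clean resolution is to reveal randomness in stages (coins, then $X_g$, then $X_h$) and observe that the skeleton-level union bound exactly accounts for the residual ``information leakage'' about $X_h$ through the discretized failure responses; within each skeleton, the invalidity event is contained in a fixed bad event for an i.i.d.\ sample, permitting the concentration inequalities to be invoked unconditionally.
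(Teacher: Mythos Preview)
Your proposal is correct and follows essentially the same approach as the paper: a union bound over the tree of possible transcripts, with the count $\nu_{i,f,\gamma_1^f}$ of nodes at each level exactly cancelling the normalization in $\beta_i$, and the per-node invalidity bounded via Lemma~\ref{lem:gnc_width_mgf} (for passes) and the Hoeffding-based choice of $\gamma_f$ (for failures). Your explicit treatment of the conditioning---fixing the analyst and guess-mechanism coins together with $X_g$, so that each skeleton pins down a deterministic query $\phi_i$ against which the independent holdout $X_h$ can be analyzed via containment in a fixed bad event---is in fact more carefully articulated than the paper's own proof, which invokes the same lemmas over the same tree but leaves that independence justification implicit.
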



\subsection{Experimental evaluation}
\label{sec:expts}

Now, we provide details of our empirical evaluation of the Guess and Check framework. In our experiments, we use two mechanisms, namely the Gaussian mechanism and Thresholdout, for providing guesses in GnC. For brevity, we refer to the overall mechanism as GnC Gauss when the Gaussian is used to provide guesses, and GnC Thresh when Thresholdout is used. 

\mypar{Strategy for performance evaluation} Some  mechanisms evaluated in our experiments provide worst-case bounds, whereas the performance of others is instance-dependent and relies on the amount of adaptivity present in the querying strategy. To highlight the advantages of the latter, we design a query strategy called the quadratic-adaptive query strategy. 
Briefly, it contains two types of queries: random non-adaptive queries in which each sample's contribution is generated i.i.d. from a Bernoulli distribution, and adaptive queries which are linear combinations of previous queries. The adaptive queries become more sparsely distributed with time; ``hard'' adaptive queries $\phi_i, i>1,$ are asked when $i$ is a perfect square. They are computed in a similar manner as in the strategy used in \Cref{fig:two_round}. We provide pseudocode for the strategy in \ifsupp
Algorithm~\ref{alg:strategy}.
\else
the supplementary material.
\fi 

\mypar{Experimental Setup} We run the quadratic-adaptive strategy for up to $40,000$ queries. We tune the hyperparameters of each mechanism to optimize for this query strategy. We fix a confidence parameter $\beta$ and set a target upper bound $\tau$ on the maximum allowable error we can tolerate, given our confidence bound. We evaluate each mechanism by the number of queries it can empirically answer with a confidence width of $\tau$ for our query strategy while providing a simultaneous coverage of $1-\beta$: i.e. the largest number of queries it can answer while providing $(\tau,\beta)$-accuracy. We plot the average and standard deviation of the number of queries $k$ answered before it exceeds its target error bound in 20 independent runs over the sampled data and the mechanism's randomness. When we plot the actual realized error for any mechanism, we denote it by dotted lines, whereas the provably valid error bounds resulting from the confidence intervals produced by GnC are denoted by solid lines. Note that the empirical error denoted by dotted lines is not actually possible to know without access to the distribution, and is plotted just to visualize the tightness of the provable confidence intervals. We compare to two simple baselines: sample splitting, and answer discretization: the better of these two is plotted as the thick solid line. For comparison, the best worst-case bounds for the Gaussian mechanism (\Cref{thm:RZ_CW}) are shown as dashed lines. Note that we improve by roughly two orders of magnitude compared to the tightest bounds for the Gaussian. We improve over the baseline at dataset sizes $n \geq 2,000$.

\mypar{Boost in performance for low-variance queries}  Since all the queries we construct take binary values on a sample $x \in \cX$, the variance of query $\phi_i$ is given by $var(\phi_i)=\phi_i(\cD)(1-\phi_i(\cD))$, as $\phi_i(\cD)=\Pr{(\phi_i(x)=1)}$. Now, $var(\phi_i)$ is maximized when $\phi_i(\cD)=0.5$. Hence, informally, we denote query  $\phi_i$ as low-variance if either $\phi_i(\cD) \ll 0.5$, or $\phi_i(\cD) \gg 0.5$. We want to be able to adaptively provide tighter confidence intervals for low-variance queries (as,  for e.g., the worst-case bounds of \cite{FS17,FS17b} are able to). For instance, in \Cref{fig:gnc_main1} (left), we show that in the presence of low-variance queries, using \Cref{lem:gnc_width_mgf} for $HoldoutTol$ (plot labelled ``GnC Check:MGF'') results in a significantly better performance for GnC Gauss as compared to using \ifsupp \Cref{lem:gnc_width1} \else a Chernoff bound \fi (plot labelled ``GnC Check:Chern''). We fix $\tau,\beta=0.05$, and set $\phi_i(\cD)=0.9$ for $i\geq 1$. We can see that as the dataset size grows, using \Cref{lem:gnc_width_mgf} provides an improvement of almost 2 orders of magnitude in terms of the number of queries $k$ answered. This is due to  \Cref{lem:gnc_width_mgf} providing tighter holdout tolerances $\tau_h$ for low-variance queries (with guesses close to $0$ or $1$), compared to those obtained via \ifsupp \Cref{lem:gnc_width1} \else the Chernoff bound \fi (agnostic to the query variance). Thus, we use \Cref{lem:gnc_width_mgf} for $HoldoutTol$ in all experiments with GnC below. The worst-case bounds for the Gaussian don't promise a coverage of $1 - \beta$ even for $k=1$ in the considered parameter ranges. This is representative of a general phenomenon: switching to GnC-based bounds instead of worst-case bounds is often the difference between obtaining useful vs. vacuous guarantees.

\begin{figure}[ht]
	\centering
	\begin{tabular}{cc}
	\hspace*{-10pt}	
	\begin{minipage}[b]{0.5\columnwidth}
		\includegraphics[width=\textwidth]{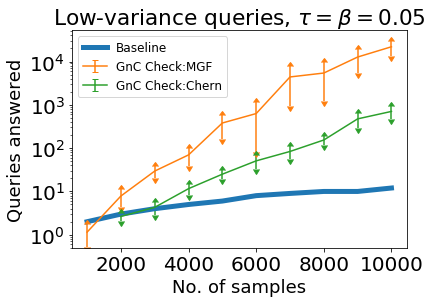}
	\end{minipage}
	\begin{minipage}[b]{0.5\columnwidth}
		\includegraphics[width=\textwidth]{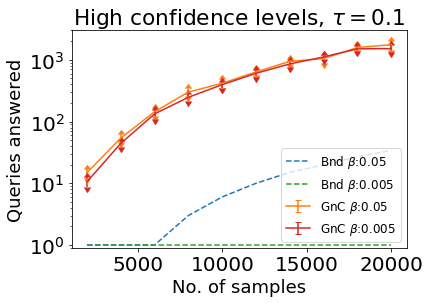}
	\end{minipage}
\end{tabular}
	\caption{ \emph{Left:} Gain in performance for GnC Gauss by using \Cref{lem:gnc_width_mgf} for $HoldoutTol$ (``GnC Check:MGF''), as compared to \ifsupp using \Cref{lem:gnc_width1} \else the simple variant obtained via Chernoff bound \fi (``GnC Check:Chern''). \emph{Right:} Performance of GnC Gauss (``GnC''), and the best Gaussian bounds (``Bnd''), for $\beta\in \{0.05, 0.005\}$.}
	\label{fig:gnc_main1}
\end{figure}

\mypar{Performance at high confidence levels} The bounds we prove for the Gaussian mechanism, which are the best known worst-case bounds for the considered sample size regime, have a substantially sub-optimal dependence on the coverage parameter $\beta:\sqrt{1/\beta}$.  On the other hand, sample splitting (and the bounds from \cite{DFHPRR15STOC,BNSSSU16} which are asymptotically optimal but vacuous at small sample sizes) have a much better dependence on $\beta: \ln{\left(1/2\beta\right)}$. Since the coverage bounds of GnC are strategy-dependent, the dependence of $\tau$ on $\beta$ is not clear a priori. In \Cref{fig:gnc_main1} (right), we show the performance of GnC Gauss (labelled ``GnC'') when $\beta \in \{0.05, 0.005\}$. We see that reducing $\beta$ by a factor of 10 has a negligible effect on GnC's performance. Note that this is the case even though the guesses are provided by the Gaussian, for which we do not have non-vacuous bounds with a mild dependence on $\beta$ in the considered parameter range (see the worst-case bounds, plotted as ``Bnd'') --- even though we might conjecture that such bounds exist. This gives an illustration of how GnC can correct deficiencies in our worst-case theory: conjectured improvements to the theory can be made rigorous with GnC's certified confidence intervals. 

\mypar{Guess and Check with different guess mechanisms} GnC is designed to be modular, enabling it to take advantage of arbitrarily complex mechanisms to make guesses. Here, we compare the performance of two such mechanisms for making guesses, namely the Gaussian mechanism, and Thresholdout. In \Cref{fig:gnc_add1} (left), we first plot the number of queries answered by the Gaussian (``Gauss Emp'') and Thresholdout (``Thresh Emp'') mechanisms, respectively, until the maximum empirical error of the query answers exceeds $\tau = 0.1$. It is evident that Thresholdout, which uses an internal holdout set to answer queries that likely overfit to its training set, provides better performance than the Gaussian mechanism.  In fact, we see that for $n>5000$, while Thresholdout is always able to answer $40,000$ queries (the maximum number of queries we tried in our experiments), the Gaussian mechanism isn't able to do so even for the largest  dataset size we consider. Note that the ``empirical'' plots are generally un-knowable in practice, since we do not have access to the underlying distributions. But they serve as upper bounds for the best performance a mechanism can provide.

Next, we fix $\beta=0.05$, and plot the performance of GnC Gauss and GnC Thresh. We see that even though GnC Thresh has noticeably higher variance, it provides performance that is close to two orders of magnitude larger than GnC Gauss when $n\geq8000$. Moreover, for $n\geq 8000$, it is interesting to see GnC Thresh guarantees $(\tau,\beta)$-accuracy for our strategy while consistently beating even the empirical performance of the Gaussian. We note that the best bounds for both the Gaussian and Thresholdout mechanisms alone (not used as part of GnC) do not provide any non-trivial guarantees in the considered parameter ranges. 

\begin{figure}[ht]
	\centering
	\begin{tabular}{cc}
	\hspace*{-10pt}
	\begin{minipage}[b]{0.49\columnwidth}
		\includegraphics[width=\textwidth]{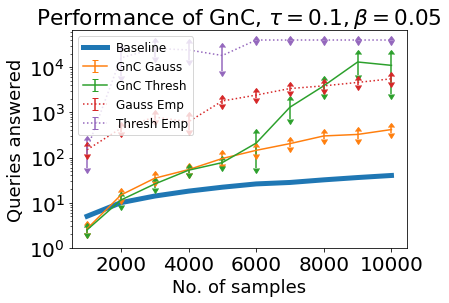}
	\end{minipage}
	\begin{minipage}[b]{0.5\columnwidth}
		\includegraphics[width=\textwidth]{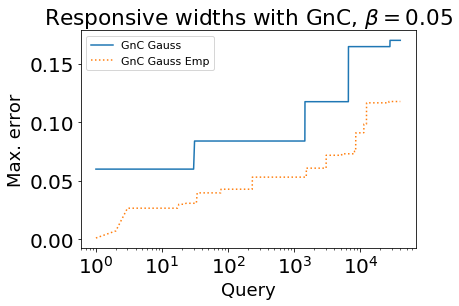}
	\end{minipage}
\end{tabular}
	\caption{ \emph{Left:} Performance of GnC with Gaussian (``GnC Gauss''), and Thresholdout (``GnC Thresh'') mechanisms, together with their empirical error. \emph{Right:} The accuracy of GnC with Gaussian guesses  provides ``responsive'' confidence interval widths that closely track the empirical error incurred by the guesses of the Gaussian mechanism (``GnC Gauss Emp'').}
	\label{fig:gnc_add1}
\end{figure}

\mypar{Responsive widths that track the empirical error} The GnC framework is designed to certify guesses which represent both a point estimate and a desired confidence interval width for each query. Rather than having fixed confidence interval widths, this framework also provides the flexibility to incorporate guess mechanisms that provide increased interval  widths as failures accumulate within GnC. This allows GnC to be able to re-use the holdout set in perpetuity, and answer an infinite number of queries (albeit with confidence widths that might grow to be vacuous).

In \Cref{fig:gnc_add1} (right), we fix $n=30000, \beta=0.05, \tau_1 = 0.06$, and plot the performance of GnC Gauss such that the guessed confidence width $\tau_{i+1} = \min{(1.4\tau_{i}, 0.17)}$ if the ``check'' for query $\phi_i$ results in a failure, otherwise $\tau_{i+1} = \tau_i$. For comparison, we also plot the actual maximum empirical error encountered by the answers provided by GnC (``GnC Gauss Emp''). It corresponds to the maximum empirical error of the answers of the Gaussian mechanism that is used as a guess mechanism within GnC, unless the check for a query results in a failure (which occurs 4 times in 40000 queries), in which case the error corresponds to the discretized answer on the holdout. We see that the statistically valid accuracy guaranteed by GnC is ``responsive'' to the empirical error of the realized answers produced by the GnC, and is almost always within a factor of 2 of the actual error.

\mypar{Discussion} The runtime of our GnC system is dominated by the runtime of the mechanism providing the guesses. For each guess, the GnC system need only compute the empirical  answer of the query on the holdout set, and a width (for example, from \Cref{lem:gnc_width_mgf}) that comes  from a simple one-dimensional optimization. Thus, GnC with any particular Guess mechanism will have an execution time comparable to that of the Guess mechanism by itself. It is also important to note that GnC can be combined with any guess-generating mechanism, and it will inherit the worst-case generalization behavior of that mechanism. However, the GnC will typically provide much tighter confidence bounds (since the worst-case bounds are typically loose).


\section{Conclusion}

In this work, we focus on algorithms that provide
explicit confidence
intervals with \emph{sound} coverage probabilities for adaptively posed statistical queries. We start by deriving tighter worst-case bounds for several mechanisms, and show that our improved bounds are within small constant factors of optimal for certain mechanisms. Our main contribution is the Guess and Check framework, that allows an analyst to use \emph{any method} for ``guessing'' point estimates and confidence interval widths for their adaptive queries, and then rigorously validate those guesses on an additional held-out dataset. Our empirical evaluation demonstrates that GnC can improve on worst-case bounds by orders of magnitude, and that it improves on the naive baseline even for  modest sample sizes. We also provide a Python library (\cite{git_url})
implementing our GnC method.


\section*{Acknowledgements}

The authors would like to thank Omer Tamuz for helpful comments regarding a conjecture that existed in a prior version of this work. A.R. acknowledges support in part by a grant from the Sloan Foundation, and NSF grants AF-1763314 and CNS-1253345. A.S. and O.T. were supported in part by a grant from the Sloan foundation, and NSF grants IIS-1832766 and AF-1763786.  B.W. is supported by the NSF GRFP (award No. 1754881).  This work was done in part while R.R., A.S., and O.T. were visiting the Simons Institute for the Theory of Computing. 

\bibliographystyle{plainnat}
\bibliography{refs}

\begin{thebibliography}{29}
\providecommand{\natexlab}[1]{#1}
\providecommand{\url}[1]{\texttt{#1}}
\expandafter\ifx\csname urlstyle\endcsname\relax
  \providecommand{\doi}[1]{doi: #1}\else
  \providecommand{\doi}{doi: \begingroup \urlstyle{rm}\Url}\fi

\bibitem[Bassily et~al.(2016)Bassily, Nissim, Smith, Steinke, Stemmer, and
  Ullman]{BNSSSU16}
Raef Bassily, Kobbi Nissim, Adam Smith, Thomas Steinke, Uri Stemmer, and
  Jonathan Ullman.
\newblock Algorithmic stability for adaptive data analysis.
\newblock In \emph{Proceedings of the 48th Annual ACM SIGACT Symposium on
  Theory of Computing}, pages 1046--1059. ACM, 2016.

\bibitem[Bun and Steinke(2016{\natexlab{a}})]{BS16}
Mark Bun and Thomas Steinke.
\newblock Concentrated differential privacy: Simplifications, extensions, and
  lower bounds.
\newblock In Martin Hirt and Adam Smith, editors, \emph{Theory of
  Cryptography}, pages 635--658, Berlin, Heidelberg, 2016{\natexlab{a}}.
  Springer Berlin Heidelberg.
\newblock ISBN 978-3-662-53641-4.

\bibitem[Bun and Steinke(2016{\natexlab{b}})]{BunS16}
Mark Bun and Thomas Steinke.
\newblock Concentrated differential privacy: Simplifications, extensions, and
  lower bounds.
\newblock \emph{CoRR}, abs/1605.02065, 2016{\natexlab{b}}.
\newblock URL \url{http://arxiv.org/abs/1605.02065}.

\bibitem[Bun et~al.(2014)Bun, Ullman, and Vadhan]{BunUV14}
Mark Bun, Jonathan Ullman, and Salil~P. Vadhan.
\newblock Fingerprinting codes and the price of approximate differential
  privacy.
\newblock In \emph{STOC}, pages 1--10. ACM, May 31 -- June 3 2014.

\bibitem[Dwork and Rothblum(2016)]{DR16}
Cynthia Dwork and Guy~N. Rothblum.
\newblock Concentrated differential privacy.
\newblock \emph{CoRR}, abs/1603.01887, 2016.

\bibitem[Dwork et~al.(2006{\natexlab{a}})Dwork, Kenthapadi, McSherry, Mironov,
  and Naor]{DKMMN06}
Cynthia Dwork, Krishnaram Kenthapadi, Frank McSherry, Ilya Mironov, and Moni
  Naor.
\newblock Our data, ourselves: Privacy via distributed noise generation.
\newblock In \emph{Advances in Cryptology - {EUROCRYPT} 2006, 25th Annual
  International Conference on the Theory and Applications of Cryptographic
  Techniques, St. Petersburg, Russia, May 28 - June 1, 2006, Proceedings},
  pages 486--503, 2006{\natexlab{a}}.
\newblock \doi{10.1007/11761679_29}.

\bibitem[Dwork et~al.(2006{\natexlab{b}})Dwork, McSherry, Nissim, and
  Smith]{DMNS06}
Cynthia Dwork, Frank McSherry, Kobbi Nissim, and Adam Smith.
\newblock Calibrating noise to sensitivity in private data analysis.
\newblock In \emph{Theory of Cryptography Conference}, pages 265--284.
  Springer, 2006{\natexlab{b}}.

\bibitem[Dwork et~al.(2010)Dwork, Rothblum, and Vadhan]{DRV10}
Cynthia Dwork, Guy~N. Rothblum, and Salil~P. Vadhan.
\newblock Boosting and differential privacy.
\newblock In \emph{51th Annual {IEEE} Symposium on Foundations of Computer
  Science, {FOCS} 2010, October 23-26, 2010, Las Vegas, Nevada, {USA}}, pages
  51--60, 2010.
\newblock \doi{10.1109/FOCS.2010.12}.

\bibitem[Dwork et~al.(2015{\natexlab{a}})Dwork, Feldman, Hardt, Pitassi,
  Reingold, and Roth]{DFHPRR15nips}
Cynthia Dwork, Vitaly Feldman, Moritz Hardt, Toni Pitassi, Omer Reingold, and
  Aaron Roth.
\newblock Generalization in adaptive data analysis and holdout reuse.
\newblock In \emph{Advances in Neural Information Processing Systems}, pages
  2350--2358, 2015{\natexlab{a}}.

\bibitem[Dwork et~al.(2015{\natexlab{b}})Dwork, Feldman, Hardt, Pitassi,
  Reingold, and Roth]{DFHPRR15Science}
Cynthia Dwork, Vitaly Feldman, Moritz Hardt, Toniann Pitassi, Omer Reingold,
  and Aaron Roth.
\newblock The reusable holdout: Preserving validity in adaptive data analysis.
\newblock \emph{Science}, 349\penalty0 (6248):\penalty0 636--638,
  2015{\natexlab{b}}.
\newblock \doi{10.1126/science.aaa9375}.
\newblock URL \url{http://www.sciencemag.org/content/349/6248/636.abstract}.

\bibitem[Dwork et~al.(2015{\natexlab{c}})Dwork, Feldman, Hardt, Pitassi,
  Reingold, and Roth]{DFHPRR15}
Cynthia Dwork, Vitaly Feldman, Moritz Hardt, Toniann Pitassi, Omer Reingold,
  and Aaron~Leon Roth.
\newblock Preserving statistical validity in adaptive data analysis.
\newblock In \emph{Proceedings of the Forty-Seventh Annual ACM on Symposium on
  Theory of Computing}, pages 117--126. ACM, 2015{\natexlab{c}}.

\bibitem[Dwork et~al.(2015{\natexlab{d}})Dwork, Feldman, Hardt, Pitassi,
  Reingold, and Roth]{DFHPRR15STOC}
Cynthia Dwork, Vitaly Feldman, Moritz Hardt, Toniann Pitassi, Omer Reingold,
  and Aaron~Leon Roth.
\newblock Preserving statistical validity in adaptive data analysis.
\newblock In \emph{Proceedings of the Forty-Seventh Annual ACM on Symposium on
  Theory of Computing}, STOC '15, pages 117--126, New York, NY, USA,
  2015{\natexlab{d}}. ACM.
\newblock ISBN 978-1-4503-3536-2.
\newblock \doi{10.1145/2746539.2746580}.

\bibitem[Feldman and Steinke(2017{\natexlab{a}})]{FS17}
Vitaly Feldman and Thomas Steinke.
\newblock Generalization for adaptively-chosen estimators via stable median.
\newblock In \emph{Proceedings of the 30th Conference on Learning Theory,
  {COLT} 2017, Amsterdam, The Netherlands, 7-10 July 2017}, pages 728--757,
  2017{\natexlab{a}}.
\newblock URL \url{http://proceedings.mlr.press/v65/feldman17a.html}.

\bibitem[Feldman and Steinke(2017{\natexlab{b}})]{FS17b}
Vitaly Feldman and Thomas Steinke.
\newblock Calibrating noise to variance in adaptive data analysis.
\newblock \emph{CoRR}, abs/1712.07196, 2017{\natexlab{b}}.
\newblock URL \url{http://arxiv.org/abs/1712.07196}.

\bibitem[Gelman and Loken(2014)]{GL14}
Andrew Gelman and Eric Loken.
\newblock The statistical crisis in science.
\newblock \emph{American Scientist}, 102\penalty0 (6):\penalty0 460, 2014.

\bibitem[Gossmann et~al.(2018)Gossmann, Pezeshk, and Sahiner]{GPS18}
Alexej Gossmann, Aria Pezeshk, and Berkman Sahiner.
\newblock Test data reuse for evaluation of adaptive machine learning
  algorithms: over-fitting to a fixed'test'dataset and a potential solution.
\newblock In \emph{Medical Imaging 2018: Image Perception, Observer
  Performance, and Technology Assessment}, volume 10577, page 105770K.
  International Society for Optics and Photonics, 2018.

\bibitem[Gray(1990)]{Gray90}
Robert~M. Gray.
\newblock \emph{Entropy and Information Theory}.
\newblock Springer-Verlag, Berlin, Heidelberg, 1990.
\newblock ISBN 0-387-97371-0.

\bibitem[Hardt and Ullman(2014)]{HU14}
Moritz Hardt and Jonathan Ullman.
\newblock Preventing false discovery in interactive data analysis is hard.
\newblock In \emph{Foundations of Computer Science (FOCS), 2014 IEEE 55th
  Annual Symposium on}, pages 454--463. IEEE, 2014.

\bibitem[Jung et~al.(2019)Jung, Ligett, Neel, Roth, Sharifi-Malvajerdi, and
  Shenfeld]{JungLNRSS19}
Christopher Jung, Katrina Ligett, Seth Neel, Aaron Roth, Saeed
  Sharifi-Malvajerdi, and Moshe Shenfeld.
\newblock A new analysis of differential privacy's generalization guarantees,
  2019.

\bibitem[Kairouz et~al.(2017)Kairouz, Oh, and Viswanath]{KOV15}
Peter Kairouz, Sewoong Oh, and Pramod Viswanath.
\newblock The composition theorem for differential privacy.
\newblock \emph{{IEEE} Trans. Information Theory}, 63\penalty0 (6):\penalty0
  4037--4049, 2017.

\bibitem[{Kasiviswanathan} and {Smith}(2014)]{KS14}
S.P. {Kasiviswanathan} and A.~{Smith}.
\newblock {On the `Semantics' of Differential Privacy: A Bayesian Formulation}.
\newblock \emph{Journal of Privacy and Confidentiality}, Vol. 6: Iss. 1,
  Article 1, 2014.

\bibitem[Mania et~al.(2019)Mania, Miller, Schmidt, Hardt, and
  Recht]{modelsimilarity}
Horia Mania, John Miller, Ludwig Schmidt, Moritz Hardt, and Benjamin Recht.
\newblock Model similarity mitigates test set overuse.
\newblock \emph{arXiv preprint arXiv:1905.12580}, 2019.

\bibitem[Rogers et~al.(2016)Rogers, Roth, Smith, and Thakkar]{RRST16}
Ryan Rogers, Aaron Roth, Adam Smith, and Om~Thakkar.
\newblock Max-information, differential privacy, and post-selection hypothesis
  testing.
\newblock In \emph{Foundations of Computer Science (FOCS), 2016 IEEE 57th
  Annual Symposium on}, 2016.

\bibitem[Rogers et~al.(2019)Rogers, Roth, Smith, Srebro, Thakkar, and
  Woodworth]{git_url}
Ryan Rogers, Aaron Roth, Adam Smith, Nathan Srebro, Om~Thakkar, and Blake
  Woodworth.
\newblock Repository for empirical adaptive data analysis.
\newblock \url{https://github.com/omthkkr/empirical_adaptive_data_analysis},
  2019.

\bibitem[{Russo} and {Zou}(2015)]{RZ16arxiv}
D.~{Russo} and J.~{Zou}.
\newblock {How much does your data exploration overfit? Controlling bias via
  information usage}.
\newblock \emph{ArXiv e-prints}, November 2015.

\bibitem[Russo and Zou(2016)]{RZ16}
Daniel Russo and James Zou.
\newblock Controlling bias in adaptive data analysis using information theory.
\newblock In \emph{Proceedings of the 19th International Conference on
  Artificial Intelligence and Statistics, {AISTATS}}, 2016.

\bibitem[Steinke and Ullman(2015)]{SU15}
Thomas Steinke and Jonathan Ullman.
\newblock Interactive fingerprinting codes and the hardness of preventing false
  discovery.
\newblock In \emph{Proceedings of The 28th Conference on Learning Theory},
  pages 1588--1628, 2015.

\bibitem[Xu and Raginsky(2017)]{XR17}
Aolin Xu and Maxim Raginsky.
\newblock Information-theoretic analysis of generalization capability of
  learning algorithms.
\newblock In \emph{NIPS 2017, 4-9 December 2017, Long Beach, CA, {USA}}, pages
  2521--2530, 2017.

\bibitem[Zrnic and Hardt(2019)]{naturalanalyst}
Tijana Zrnic and Moritz Hardt.
\newblock Natural analysts in adaptive data analysis.
\newblock \emph{arXiv preprint arXiv:1901.11143}, 2019.

\end{thebibliography}

\ifsupp
\appendix

\section{Additional Preliminaries}
\label{app:defns}
Here, we present some additional preliminaries that were omitted from the main body.

\subsection{Confidence Interval Preliminaries}

In our implementation, we are comparing the true average $\phi(\cD)$ to the answer $a$, which will be the true answer on the sample with additional noise to ensure each query is stably answered.  
We then use the following string of inequalities to find the width $\tol$ of the confidence interval.
{
\begin{align*}
\prob{|\phi(\cD) - a|  \geq \tol}& \leq \prob{|\phi(\cD) - \phi(X)| + |\phi(X) - a| \geq \tol} \\
& \leq \underbrace{\prob{| \phi(\cD) - \phi(X)| \geq \tol - \tol'}}_{\text{Population Accuracy}}  + \underbrace{\prob{| \phi(X) - a| \geq  \tol'}}_{\text{Sample Accuracy}}  \text{ for } \tol' \geq 0\label{eq:errors} \numberthis
\end{align*}}

We will then use this connection to get a bound in terms of the accuracy on the sample and the error in the empirical average to the true mean.  Many of the results in this line of work use a \emph{transfer theorem} which states that if a query is selected via a private method, then the query evaluated on the sample is close to the true population answer, thus providing a bound on \emph{population accuracy}.   However, we also need to control the \emph{sample accuracy} which is affected by the amount of noise that is added to ensure stability.  We then seek a balance between the two terms, where too much noise will give terrible sample accuracy but great accuracy on the population -- due to the noise making the choice of query essentially independent of the data -- and too little noise makes for great sample accuracy but bad accuracy to the population.  We will consider Gaussian noise, and use the composition theorems to determine the scale of noise to add to achieve a target accuracy after $k$ adaptively selected statistical queries.  

Given the size of our dataset $n$, number of adaptively chosen statistical queries $k$, and confidence level $1-\beta$, we want to find what \emph{confidence width} $\tol$ ensures $\alg = (\alg_1,\cdots, \alg_k)$ is $(\tol,\beta)$-accurate with respect to the population when each algorithm $\alg_i$ adds either Laplace or Gaussian noise to the answers computed on the sample with some yet to be determined variance.  To bound the sample accuracy, we can use the following theorem that gives the accuracy guarantees of the Gaussian mechanism. 

\begin{theorem}
If $\{Z_i: i \in [k]\}\stackrel{i.i.d.}{\sim} N(0,\sigma^2)$ then for $\beta \in (0,1]$ we have: 
{
\begin{align*}
&  \prob{|Z_i| \geq  \sigma \sqrt{2\ln(2/\beta)}} \leq \beta  \implies \prob{\exists i \in [k] \text{ s.t. } |Z_i| \geq  \sigma \sqrt{2\ln(2k/\beta)}} \leq \beta.
\label{eq:acc_gauss} \numberthis
\end{align*}}
\label{thm:acc}
\end{theorem}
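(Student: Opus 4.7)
The plan is to handle the two implications in sequence; the first is a standard single-variable Gaussian tail bound, and the second follows by a routine union bound across the $k$ independent copies.

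For the first inequality, I would derive the centered Gaussian tail via the Chernoff/MGF method: for $Z\sim N(0,\sigma^2)$ and $\lambda,t>0$, Markov's inequality applied to $e^{\lambda Z}$ gives $\prob{Z\geq t}\leq e^{-\lambda t}\Ex{}{e^{\lambda Z}}=e^{\lambda^2\sigma^2/2-\lambda t}$, which is minimized at $\lambda=t/\sigma^2$ to yield $\prob{Z\geq t}\leq e^{-t^2/(2\sigma^2)}$. Symmetry of the centered Gaussian doubles this to $\prob{|Z|\geq t}\leq 2e^{-t^2/(2\sigma^2)}$. Plugging in $t=\sigma\sqrt{2\ln(2/\beta)}$ makes the right-hand side exactly $2e^{-\ln(2/\beta)}=\beta$, establishing the first claim.

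For the second inequality, I would apply a union bound over the $k$ noise terms: $\prob{\exists i\in[k]:|Z_i|\geq t}\leq\sum_{i=1}^{k}\prob{|Z_i|\geq t}=k\cdot \prob{|Z_1|\geq t}$. Invoking the single-variable tail bound at $t=\sigma\sqrt{2\ln(2k/\beta)}$ gives $\prob{|Z_1|\geq t}\leq 2e^{-\ln(2k/\beta)}=\beta/k$, so the union bound yields a total failure probability of at most $\beta$. Note that independence of the $Z_i$ is not actually needed for the union bound; it is only used implicitly for the marginal distribution of each $Z_i$.

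There is essentially no technical obstacle here: the result is a textbook sub-Gaussian concentration fact combined with a union bound. The only care required is bookkeeping of constants — making sure the factor of $2$ inside the logarithm is what absorbs the symmetry-induced factor of $2$ in the per-variable tail, and that replacing $\beta$ by $\beta/k$ in the per-variable budget is what produces the extra $\ln k$ term in the threshold.
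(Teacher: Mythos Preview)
Your proposal is correct and is the standard textbook argument. Note that the paper does not actually supply a proof for this statement --- it is presented as a known fact in the preliminaries --- so there is nothing to compare against; your Chernoff/MGF bound followed by a union bound is exactly the expected justification.
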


\subsection{Stability Measures}
It turns out that privacy preserving algorithms give strong stability guarantees which allows for the rich theory of differential privacy to extend to adaptive data analysis \citep{DFHPRR15STOC,DFHPRR15nips,BNSSSU16,RRST16}. In order to define these privacy notions, we define two datasets $X= (x_1,\cdots, x_n), X'=(x_1',\cdots, x_n') \in \cX^n$ to be \emph{neighboring} if they differ in at most one entry, i.e. there is some $i \in [n]$ where $x_i \neq x_i'$, but $x_j = x_j'$ for all $j \neq i$.  We first define \emph{differential privacy}.
\begin{definition}[Differential Privacy \citep{DMNS06,DKMMN06}]\label{defn:DP}
A randomized algorithm (or mechanism) $\alg: \cX^n \to \cY$ is $(\epsilon,\delta)$-differentially private (DP) if for all neighboring datasets $X$ and $X'$ and each outcome $S \subseteq \cY$, we have 
$
\prob{\alg(X) \in S} \leq e^{\epsilon}\prob{\alg(X') \in S} + \delta.
$
If $\delta = 0$, we simply say $\alg$ is $\epsilon$-DP or pure DP.  Otherwise for $\delta>0$, we say \emph{approximate} DP.
\end{definition}

We then give a more recent notion of privacy, called concentrated differential privacy (CDP), which can be thought of as being ``in between" pure and approximate DP.  In order to define CDP, we define the privacy loss random variable which quantifies how much the output distributions of an algorithm on two neighboring datasets can differ.  
\begin{definition}[Privacy Loss]
Let $\alg: \cX^n \to \cY$ be a randomized algorithm.  For neighboring datasets $X,X' \in \cX^n$, let $Z(y) = \ln\left( \frac{\prob{\alg(X) = y} }{\prob{\alg(X') = y} } \right)$. We then define the privacy loss variable $\PL{\alg(X)}{\alg(X')}$ to have the same distribution as $Z(\alg(X))$.  
\label{defn:priv_loss}
\end{definition}
Note that if we can bound the privacy loss random variable with certainty over all neighboring datasets, then the algorithm is pure DP.  Otherwise, if we can bound the privacy loss with high probability then it is approximate DP (see \cite{KS14} for a more detailed discussion on this connection).  

We can now define \emph{zero concentrated differential privacy} (zCDP), given by \cite{BS16} (Note that \cite{DR16} initially gave a definition of CDP which \cite{BS16} then modified).  

\begin{definition}[zCDP]
An algorithm $\alg: \cX^n \to \cY$ is $\rho$-zero concentrated differentially private (zCDP), if for all neighboring datasets $X,X' \in \cX^n$  and all $\lambda >0$ we have 
$$
\ex{\exp\left( \lambda \left(\PL{\alg(X)}{\alg(X')} - \rho\right)\right)} \leq e^{\lambda^2 \rho}.
$$
\label{defn:zCDP}
\end{definition}

We then give the Laplace and Gaussian mechanism for statistical queries.
\begin{theorem}
Let $\phi: \cX \to [0,1]$ be a statistical query and $X \in \cX^n$.  The Laplace mechanism $\ML: \cX^n \to \R$ is the following 
$
\ML(X) = \frac{1}{n}\sum_{i = 1}^n\phi(x_i) + \Lap\left( \frac{1}{\epsilon n}\right)
$,
which is $\epsilon$-DP.  Further, the Gaussian mechanism $\MG: \cX^n \to \R$ is the following 
$
\MG(X) = \frac{1}{n}\sum_{i = 1}^n\phi(x_i) +N\left(0, \frac{1}{2\rho n^2}\right)
$,
which is $\rho$-zCDP.
\end{theorem}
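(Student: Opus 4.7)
The plan is to reduce both claims to standard privacy guarantees for Laplace and Gaussian noise addition by first bounding the global sensitivity of the target function $f(X) = \frac{1}{n}\sum_{i=1}^n \phi(x_i)$. Since $\phi: \cX \to [0,1]$, for any neighboring $X, X' \in \cX^n$ differing only in coordinate $i$, we have $|f(X) - f(X')| = \frac{1}{n}|\phi(x_i) - \phi(x'_i)| \leq \frac{1}{n}$. Thus both the $\ell_1$- and $\ell_2$-sensitivity of $f$ are at most $\Delta = 1/n$, which is the quantity that determines the required noise scale for each mechanism.

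For the Laplace claim, I would compute the ratio of output densities directly. For any $y \in \R$, the density of $\ML(X)$ at $y$ is proportional to $\exp(-\epsilon n \, |y - f(X)|)$, so the ratio between neighboring datasets is
\[
\frac{\Pr[\ML(X) = y]}{\Pr[\ML(X') = y]} = \exp\bigl(\epsilon n \, (|y - f(X')| - |y - f(X)|)\bigr).
\]
By the reverse triangle inequality this is at most $\exp(\epsilon n \, |f(X) - f(X')|) \leq \exp(\epsilon)$, so $\ML$ is $\epsilon$-DP per Definition~\ref{defn:DP} with $\delta = 0$.

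For the Gaussian claim, I would work with the privacy loss random variable from Definition~\ref{defn:priv_loss}. With $\sigma^2 = 1/(2\rho n^2)$ and $\mu = f(X) - f(X')$, a direct calculation with Gaussian densities gives
\[
\PL{\MG(X)}{\MG(X')} \;=\; \frac{\mu(Y - f(X'))}{\sigma^2} - \frac{\mu^2}{2\sigma^2},
\]
where $Y \sim \MG(X)$. Since $Y - f(X) \sim N(0,\sigma^2)$, the privacy loss is distributed as $N\!\left(\frac{\mu^2}{2\sigma^2},\, \frac{\mu^2}{\sigma^2}\right)$. Using $|\mu| \leq 1/n$ and $\sigma^2 = 1/(2\rho n^2)$, the mean is at most $\rho$ and the variance at most $2\rho$. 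Then the MGF of $\PL{\MG(X)}{\MG(X')} - \rho$ is
\[
\exp\!\left(\lambda\bigl(\tfrac{\mu^2}{2\sigma^2} - \rho\bigr) + \tfrac{\lambda^2}{2}\cdot\tfrac{\mu^2}{\sigma^2}\right) \;\leq\; \exp(\lambda^2 \rho),
\]
matching the $\rho$-zCDP condition of Definition~\ref{defn:zCDP}.

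The main obstacle is essentially bookkeeping: making sure the sensitivity bound $\Delta = 1/n$ is plugged consistently into the noise-scale parameter in each mechanism so that the resulting constants line up with $\epsilon$ and $\rho$ as stated. The Laplace calculation is textbook, and the Gaussian calculation only requires recognizing that a Gaussian privacy loss with mean $m$ and variance $2m$ automatically satisfies the zCDP MGF bound with parameter $m$, so once the sensitivity reduction is done the rest follows immediately.
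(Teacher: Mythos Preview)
Your argument is correct: you bound the sensitivity of the empirical mean by $1/n$, then invoke the standard density-ratio calculation for Laplace noise and the Gaussian privacy-loss calculation for zCDP, checking that the noise scales line up with $\epsilon$ and $\rho$ respectively. The paper, however, does not actually prove this theorem; it is stated in the preliminaries as a known fact (the Laplace mechanism's $\epsilon$-DP guarantee goes back to \cite{DMNS06}, and the Gaussian mechanism's $\rho$-zCDP guarantee to \cite{BS16}), so there is no in-paper proof to compare against. Your write-up is exactly the textbook derivation one would supply if asked to fill in the details.
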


We now give the advanced composition theorem for $k$-fold adaptive composition.
\begin{theorem}[\cite{DRV10},\cite{KOV15}]
The class of $\epsilon'$-DP algorithms is $(\epsilon,\delta)$-DP under $k$-fold adaptive composition where $\delta>0$ and
\begin{equation}
\epsilon = \left(\frac{e^{\epsilon'}-1}{e^{\epsilon'}+1}\right) \epsilon' k + \epsilon'\sqrt{2k \ln(1/\delta)}
\label{eq:adv_comp}
\end{equation}
\label{thm:drv}
\end{theorem}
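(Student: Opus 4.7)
The target statement is the advanced composition theorem for differential privacy, so the natural approach is to analyze the total privacy loss random variable and convert a high-probability tail bound on that sum into an $(\epsilon,\delta)$-DP guarantee. I would organize the proof into three stages: a reduction from $(\epsilon,\delta)$-DP to a tail bound on the privacy loss, a per-step analysis using the $\epsilon'$-DP hypothesis, and a concentration argument over the $k$ rounds.

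First, fix neighboring datasets $X,X'\in \cX^n$ and let $\alg_1,\dots,\alg_k$ be the adaptively chosen mechanisms, each guaranteed to be $\epsilon'$-DP conditional on the history of prior outputs. Let $Y_i = \alg_i(X; Y_{<i})$ and $Y_i' = \alg_i(X'; Y_{<i}')$, and set
\[
Z_i \;=\; \ln \frac{\Pr[Y_i = y_i \mid y_{<i}]}{\Pr[Y_i' = y_i \mid y_{<i}]},
\qquad Z \;=\; \sum_{i=1}^k Z_i.
\]
A standard reduction (already used to translate Definition \ref{defn:priv_loss} into approximate DP; cf.\ \cite{KS14}) shows that if $\Prob{Y\sim \alg_{1:k}(X)}{Z(Y) > \epsilon} \le \delta$, then the composed mechanism is $(\epsilon,\delta)$-DP. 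So it suffices to prove such a tail bound for the choice of $\epsilon$ in the theorem.

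Next, I would analyze a single $Z_i$ conditional on the transcript $Y_{<i}$. The $\epsilon'$-DP guarantee on $\alg_i$ yields two facts: (i) $|Z_i|\le \epsilon'$ almost surely, and (ii) the expected privacy loss satisfies the tighter bound
\[
\mathbb{E}[Z_i \mid Y_{<i}] \;\le\; \epsilon' \cdot \frac{e^{\epsilon'}-1}{e^{\epsilon'}+1}.
\]
Step (ii) is the key lemma of \cite{DRV10}: it follows from noting that among all distributions $P,Q$ on a two-point support with $e^{-\epsilon'} \le P/Q \le e^{\epsilon'}$, the KL divergence $\mathrm{KL}(P\,\|\,Q)$ is maximized exactly at $\epsilon' \tanh(\epsilon'/2)$, and the general case reduces to the two-point one by a standard convexity / data-processing argument. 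This is the place where the precise constant $\frac{e^{\epsilon'}-1}{e^{\epsilon'}+1}$ appears, which becomes the first term of the claimed $\epsilon$.

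Finally, I would apply Azuma–Hoeffding to the martingale difference sequence $D_i = Z_i - \mathbb{E}[Z_i\mid Y_{<i}]$. Each $D_i$ is bounded and has conditional mean zero; using the sharpened range provided by (i) together with (ii), a standard MGF / sub-Gaussian computation yields
\[
\Pr\!\Big[\sum_{i=1}^k D_i \;>\; \epsilon'\sqrt{2k\ln(1/\delta)}\Big] \;\le\; \delta,
\]
so by (ii) we get $\Pr[Z > k\epsilon'\tfrac{e^{\epsilon'}-1}{e^{\epsilon'}+1} + \epsilon'\sqrt{2k\ln(1/\delta)}] \le \delta$, which is exactly the threshold in equation \eqref{eq:adv_comp}. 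Composing with the reduction in the first step finishes the proof. The main obstacle is getting the constant $\sqrt{2}$ (rather than $\sqrt{8}$) in the sub-Gaussian term: vanilla Azuma with range $[-\epsilon',\epsilon']$ gives a worse constant, and one must exploit the small mean from step (ii) together with a careful MGF bound on $D_i$ — essentially the Hoeffding lemma sharpened for variables whose range is shifted away from $\pm\epsilon'$ — to recover the optimal constant claimed in the theorem.
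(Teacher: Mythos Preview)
The paper does not prove this theorem; it is stated as a cited result from \cite{DRV10} and \cite{KOV15} and used as a black box in the subsequent confidence-interval derivations. Your outline is essentially the standard argument from those references: reduce approximate DP to a high-probability bound on the cumulative privacy loss, bound each conditional mean by the KL divergence (which for pure $\epsilon'$-DP is at most $\epsilon'\tanh(\epsilon'/2)=\epsilon'\frac{e^{\epsilon'}-1}{e^{\epsilon'}+1}$), and apply Azuma--Hoeffding to the centered martingale.

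One small correction to your last paragraph: the concern about recovering $\sqrt{2}$ rather than $\sqrt{8}$ is misplaced. The increments $D_i = Z_i - \mathbb{E}[Z_i\mid Y_{<i}]$ lie in an interval of width exactly $2\epsilon'$ (since $Z_i\in[-\epsilon',\epsilon']$ and subtracting a constant does not change the width). The range form of Azuma--Hoeffding,
\[
\Pr\Big[\sum_{i=1}^k D_i > t\Big] \le \exp\!\Big(-\frac{2t^2}{\sum_{i=1}^k (b_i-a_i)^2}\Big),
\]
with $b_i-a_i=2\epsilon'$ already gives $t=\epsilon'\sqrt{2k\ln(1/\delta)}$ directly; no sharpened Hoeffding lemma exploiting the small mean is required. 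The suboptimal $\sqrt{8}$ arises only if one uses the cruder symmetric bound $|D_i|\le 2\epsilon'$ in place of the actual range.
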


We will also use the following results from zCDP.
\begin{theorem}[\cite{BS16}]
The class of $\rho$-zCDP algorithms is $k\rho$-zCDP under $k$-fold adaptive composition.  Further if $\cM$ is $\epsilon$-DP then $\cM$ is $\epsilon^2/2$-zCDP and if $\cM$ is $\rho$-zCDP then $\cM$ is $(\rho + 2\sqrt{\rho\ln(\sqrt{\pi \rho}/\delta)},\delta)$-DP for any $\delta>0$.
\label{thm:zCDP}
\end{theorem}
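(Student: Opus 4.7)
The plan is to get simultaneous coverage by (i) proving that each output, taken individually, is a valid $(1-\beta_i)$-confidence interval for its query, and then (ii) union-bounding over the set of ``paths'' by which information about the holdout $X_h$ reaches the analyst. The key enabling observation is that every guess $a_{g,i}$ is computed purely from $X_g$ and so leaks no information about $X_h$; hence the only channel from $X_h$ to the analyst is the sequence of discretized values output at failure steps, whose effective alphabet size is captured by $\nu_{i,f,\gamma_1^f}$.

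For per-query validity I would consider the two branches of GnC separately. When the $i$-th output is the guess $(a_{g,i},\tau_i)$, Lemma~\ref{lem:gnc_width_mgf} directly gives $\Pr[|a_{g,i}-\phi_i(\cD)|>\tau_i]\leq \beta_i$. When the $i$-th output is the discretized holdout answer $(\lfloor a_{h,i}\rfloor_{\gamma_f},\tau_i)$, the algorithm selects $\gamma_f\in[0,\tau_i)$ with $2e^{-2(\tau_i-\gamma_f)^2 n_h}\leq \beta_i$; Hoeffding's inequality applied to the i.i.d.\ holdout $X_h$ then yields $\Pr[|a_{h,i}-\phi_i(\cD)|>\tau_i-\gamma_f]\leq \beta_i$, and the triangle inequality combined with $|\lfloor y\rfloor_{\gamma_f}-y|\leq \gamma_f$ upgrades this to $\Pr[|\lfloor a_{h,i}\rfloor_{\gamma_f}-\phi_i(\cD)|>\tau_i]\leq \beta_i$.

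To handle the adaptivity of $\phi_i$, I would condition on $X_g$, the internal randomness of $\cM_g$, and the analyst's randomness. Under this conditioning, every guess is a deterministic function of the transcript so far, and the analyst's $i$-th query takes at most $\nu_{i,f,\gamma_1^f}$ distinct values: one per ``path'' $\pi$ of length $i-1$ specified by the $\binom{i-1}{f}$ choices of which steps failed and, for each failure step $j\in[f]$, one of at most $1/\gamma_j$ discretized values in $[0,1]$. For each such path, $\phi_{i,\pi}$ is a fixed function independent of $X_h$, so the concentration bounds above apply. Union-bounding the per-path error $\beta_i = \beta\,c_{i-1}\,c_f/\nu_{i,f,\gamma_1^f}$ over the $\nu_{i,f,\gamma_1^f}$ paths cancels the $\nu$ factor and gives $\beta\,c_{i-1}\,c_f$ per $(i,f)$-slice. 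Summing over $i\geq 1$ and $f\geq 0$ and invoking $\sum_{j\geq 0} c_j=1$ (which holds for $c_j=6/\pi^2(j+1)^2$) bounds the total failure probability by $\beta\bigl(\sum_i c_{i-1}\bigr)\bigl(\sum_f c_f\bigr)\leq \beta$; de-conditioning preserves the bound.

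The main obstacle I expect is formalizing that $\nu_{i,f,\gamma_1^f}$ correctly upper-bounds the number of analyst queries the algorithm can distinguish at step $i$. Uncountably many guess values can be produced by $\cM_g$, but all of them are computed from $X_g$ alone, and the validity events I need to union-bound depend on $X_h$ only through the finitely many discretized disclosures. I would formalize this by partitioning the probability space according to which sequence of discretized values has occurred at failure steps; once $X_g$ and $\cM_g$'s coins are fixed, this is a countable partition where each cell corresponds to a single path $\pi$, and the error bound is additive across cells.
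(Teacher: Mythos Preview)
Your proposal does not address the stated theorem at all. Theorem~\ref{thm:zCDP} is a collection of structural facts about zCDP taken from \cite{BS16}: $k$-fold adaptive composition of $\rho$-zCDP mechanisms is $k\rho$-zCDP, $\epsilon$-DP implies $\epsilon^2/2$-zCDP, and $\rho$-zCDP implies $(\rho+2\sqrt{\rho\ln(\sqrt{\pi\rho}/\delta)},\delta)$-DP. The paper does not prove this theorem; it simply cites it as background. A proof would involve bounding R\'enyi divergences under adaptive composition and converting between privacy loss tail bounds and moment bounds---none of which appears in your write-up.

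What you have written is instead a proof sketch for Theorem~\ref{thm:gnc_acc}, the simultaneous-coverage guarantee for the Guess and Check mechanism. If that was your intent, then your approach matches the paper's proof closely: both count the number of possible transcripts (your ``paths'' $\pi$, the paper's tree nodes indexed by $(i',f',\{j_\ell\},\{\gamma_{j_\ell}\})$) as $\nu_{i,f,\gamma_1^f}=\binom{i-1}{f}\prod_{j\in[f]}(1/\gamma_j)$, apply the per-query bounds from Lemma~\ref{lem:gnc_width_mgf} (or~\ref{lem:gnc_width1}) and Lemma~\ref{lem:gnc_disc}, and then union-bound so that the $\nu$ factor cancels against the denominator in $\beta_i$, leaving $\sum_i c_{i-1}\sum_f c_f\leq 1$. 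Your additional step of conditioning on $X_g$ and the coins of $\cM_g$ to argue that each path determines a fixed query independent of $X_h$ is a cleaner justification for why the holdout concentration bounds apply than the paper gives explicitly, but the overall argument is the same.
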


Another notion of stability that we will use is mutual information (in nats) between two random variables: the input $X$ and output $\alg(X)$.
\begin{definition}[Mutual Information]
Consider two random variables $X$ and $Y$ and let $Z(x,y) = \ln \left( \frac{\prob{(X,Y) = (x,y) } }{\prob{X = x} \prob{Y = y} } \right)$.  We then denote the mutual information as $\MutInfo{X}{Y} = \Ex{}{Z(X,Y)}$, where the expectation is taken over the joint distribution of $(X,Y)$.
\label{defn:mutual_info}
\end{definition}

\subsection{Monitor Argument}
For the population accuracy term in \eqref{eq:errors}, we will use the \emph{monitor argument} from \cite{BNSSSU16}.  Roughly, this analysis allows us to obtain a bound on the population accuracy over $k$ rounds of interaction between adversary $\adv$ and algorithm $\alg$ by only considering the difference $|\phi(X) - \phi(\cD)|$ for the two stage interaction where $\phi$ is chosen by $\adv$ based on outcome $\alg(X)$.  We present the monitor $\cW_\cD[\alg,\adv]$ in \Cref{alg:monitor}.

\begin{algorithm}
\caption{Monitor $\cW_\cD[\alg,\adv](X)$}
\label{alg:monitor}
\begin{algorithmic}
\REQUIRE $X \in \cX^n$
\STATE We simulate $\alg(X)$ and $\adv$ interacting.  We write $\phi_{1}, \cdots, \phi_{k} \in \cQ_{SQ}$ as the queries chosen by $\adv$ and write $a_{1},\cdots, a_{k} \in \R$ as the corresponding answers of $\alg$.
\STATE Let 
$
j^* = \myargmax_{j \in [k]} \left| \phi_{j}(\cD) - a_{j} \right|.
$
\ENSURE $\phi_{j^*}$
\end{algorithmic}
\end{algorithm}

Since our stability definitions are closed under post-processing, we can substitute the monitor $\cW_\cD[\alg,\cA]$ as our post-processing function $f$ in the above theorem.  We then get the following result. 

\begin{corollary}
Let $\alg = (\alg_1,\cdots, \alg_k)$, where each $\alg_i$ may be adaptively chosen, satisfy any stability condition that is closed under post-processing.  For each $i \in [k]$, let $\phi_i$ be the statistical query chosen by adversary $\adv$ based on answers $a_j = \alg_j(X), \forall j \in [i-1]$, and let $\phi$ be any function of $(a_1, \cdots, a_k)$.  Then, we have for $\tau' \geq 0$
{ 
\begin{align*}
 & \Prob{\substack{\alg,\\X \sim \cD^n}}{\max_{i \in [k]} |\phi_i(\cD) - a_i| \geq \tau}  \leq \Prob{\substack{\alg,\\X \sim \cD^n}}{\max_{i \in [k]} |\phi_i(X) - a_i| \geq \tau'}    +  \Prob{\substack{X \sim \cD^n,\\ \phi \gets \alg(X)}}{|\phi(\cD) - \phi(X)| \geq \tau - \tau'} 
\end{align*}
}
\label{cor:pop_acc}
\end{corollary}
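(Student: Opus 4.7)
The plan is to reduce the supremum event on the left-hand side to two single-query events by combining a pointwise triangle inequality with the monitor construction of Algorithm~\ref{alg:monitor} and a union bound. No stability property of $\alg$ is needed for the decomposition itself; the ``closed under post-processing'' hypothesis is invoked only to interpret the monitor's output as a legitimate post-processing of $\alg(X)$, which is what makes the right-hand side useful in subsequent applications.

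First, I would argue pointwise in the joint randomness of $X$ and $\alg$. The triangle inequality gives, for every $i \in [k]$,
\[ |\phi_i(\cD) - a_i| \;\leq\; |\phi_i(\cD) - \phi_i(X)| + |\phi_i(X) - a_i|. \]
Let $j^* = \myargmax_{j \in [k]} |\phi_j(\cD) - a_j|$, which is exactly the index selected by the monitor $\cW_\cD[\alg,\adv](X)$. On the event $\max_{i \in [k]} |\phi_i(\cD) - a_i| \geq \tau$ we have $|\phi_{j^*}(\cD) - a_{j^*}| \geq \tau$, so the displayed inequality forces at least one of: (a) $|\phi_{j^*}(X) - a_{j^*}| \geq \tau'$, or (b) $|\phi_{j^*}(\cD) - \phi_{j^*}(X)| \geq \tau - \tau'$.

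Event (a) immediately implies $\max_{i \in [k]} |\phi_i(X) - a_i| \geq \tau'$, producing the first term on the right-hand side after taking probabilities. For event (b), I would set $\phi := \phi_{j^*}$, so that $\phi$ is the monitor's output. Although the rule for $j^*$ depends on the unknown $\cD$, once $\cD$ is treated as a fixed parameter of the selection rule, $j^*$ is a deterministic function of the transcript $(a_1,\ldots,a_k) = \alg(X)$. Hence $\phi$ is a bona fide (data-independent) post-processing of $\alg(X)$, the joint law of $(X,\phi)$ agrees with $X \sim \cD^n,\ \phi \gets \alg(X)$, and event (b) reads exactly $|\phi(\cD) - \phi(X)| \geq \tau - \tau'$, which is the second term on the right-hand side.

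A union bound over (a) and (b) under the joint distribution of $X$ and $\alg$ yields the stated inequality. The main (mild) obstacle is the conceptual one just flagged: $j^*$ is chosen using knowledge of $\cD$, so one must be careful to view the monitor as a \emph{fixed}, $\cD$-parametrized post-processor rather than as part of a data-dependent adaptive rule, so that any stability guarantee of $\alg$ closed under post-processing transfers to the single query $\phi$ that appears on the right-hand side.
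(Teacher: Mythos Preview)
Your proposal is correct and follows essentially the same route as the paper: identify $j^*$ via the monitor $\cW_\cD[\alg,\adv]$, apply the triangle inequality at $j^*$, split into the two events via a union bound, upper-bound the sample-accuracy event by the max over $i$, and set $\phi:=\phi_{j^*}$ as a post-processing of $\alg(X)$. Your added remark that $j^*$ depends on $\cD$ but is a legitimate (possibly randomized, via $\adv$'s coins) post-processing once $\cD$ is fixed is a helpful clarification the paper leaves implicit.
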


\begin{proof}
From the monitor in \Cref{alg:monitor} and the fact that $\alg$ is closed under post-processing, we have
{ 
\begin{align*}
\Prob{\substack{\alg,\\X \sim \cD^n}}{\max_{i \in [k]} |\phi_i(\cD) - a_i| \geq \tau} &  =\Prob{\substack{X \sim \cD^n,\\ \phi_{j^*} \gets \cW_\cD[\alg,\adv](X)}}{ |\phi_{j^*}(\cD) - a_{j^*}| \geq \tau} \\
& \leq \Prob{\substack{X \sim \cD^n,\\ \phi_{j^*} \gets \cW_\cD[\alg,\adv](X)}}{ |\phi_{j^*}(\cD) - \phi_{j^*}(X)|| \geq \tau - \tau'}  \\
& \qquad + \Prob{\substack{X \sim \cD^n,\\ \phi_{j^*} \gets \cW_\cD[\alg,\adv](X)}}{ |\phi_{j^*}(X) - a_{j^*}|| \geq \tau'} \\
& \leq \Prob{\substack{X \sim \cD^n,\\ \phi \gets \alg(X)}}{|\phi(\cD) - \phi(X)| \geq \tau - \tau'}  + \Prob{\substack{\alg,\\X \sim \cD^n}}{\max_{i \in [k]} |\phi_i(X) - a_i| \geq \tau'} 
\end{align*}}
\end{proof}

We can then use the above corollary to obtain an accuracy guarantee by union bounding over the sample accuracy for all $k$ rounds of interaction and then bounding the population error for a single adaptively chosen statistical query.

\section{Confidence Interval Bounds from Prior Work}

\subsection{Confidence Bounds from \texorpdfstring{\cite{DFHPRR15nips}}{}}
\label{sec:dfh_res}
We start by deriving confidence bounds using results from \cite{DFHPRR15nips}, which uses the following transfer theorem (see Theorem 10 in \cite{DFHPRR15nips}).
\begin{theorem}
If $\alg$ is $(\epsilon,\delta)$-DP where $\phi \gets \alg(X)$ and $\tol \geq \sqrt{\frac{48}{n} \ln(4/\beta)}$, $\epsilon \leq \frac{\tol}{4}$, and $\delta = \exp\left(\frac{-4 \ln(8/\beta)}{\tol} \right)$, then $\prob{|\phi(\cD) - \phi(X) |\geq \tol} \leq \beta$.
\label{thm:DFHPRR}
\end{theorem}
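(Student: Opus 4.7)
The statement is a generic transfer theorem: $(\eps,\delta)$-differential privacy of $\alg$ implies that the privately selected query $\phi \gets \alg(X)$ generalizes, in the sense that its empirical average on the sample is close to its expectation under $\cD$. The plan is to use the approximate-max-information route pioneered in this line of work: first bound the statistical dependence between $X$ and $\alg(X)$ induced by DP, then combine with a non-adaptive Hoeffding concentration for a ``decoupled'' query that is independent of the sample.

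The first step is to invoke a coupling-style max-information lemma: for any $(\eps,\delta)$-DP algorithm $\alg$ and any event $E$ on sample--output pairs,
\[
\prob{(X, \alg(X)) \in E} \;\le\; e^{k}\, \prob{(X, \phi') \in E} \;+\; \beta_0,
\]
where $X \sim \cD^n$ on both sides and $\phi'$ is an independent copy of the output on an independent sample $X'' \sim \cD^n$. For $(\eps,\delta)$-DP the exponent can be taken to be $k = O\!\left(\eps\sqrt{n \ln(1/\beta_0)} + n\eps^2\right)$, obtained by combining group privacy with a subgaussian tail bound on the privacy loss. Next, apply this with the bad event $E = \{|\phi(X) - \phi(\cD)| \ge \tol\}$. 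Under the product distribution, $\phi'$ is independent of $X$, so $\phi'(X) = \tfrac{1}{n}\sum_{i=1}^n \phi'(x_i)$ is the mean of $n$ i.i.d.\ $[0,1]$-valued random variables with expectation $\phi'(\cD)$. Conditioning on $\phi'$ and applying Hoeffding, the product probability of $E$ is at most $2 e^{-2 n \tol^2}$. Combining,
\[
\prob{|\phi(X) - \phi(\cD)| \ge \tol} \;\le\; 2 e^{k - 2 n \tol^2} \;+\; \beta_0.
\]

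The last step is parameter calibration. Set $\beta_0 = \beta/2$; then the required conclusion reduces to $k \le 2 n \tol^2 - \ln(4/\beta)$. Under $\tol \ge \sqrt{48 \ln(4/\beta)/n}$, the right-hand side is comfortably positive (at least $(47/24)\cdot\ln(4/\beta)$), and the task is to verify that $\eps \le \tol/4$ and $\delta = \exp(-4\ln(8/\beta)/\tol)$ plug into the max-information bound to leave $k$ within this budget. The unusual shape of $\delta$ --- exponentially small in $1/\tol$ rather than in $n$ --- is precisely what is needed so that the $\delta$-dependent terms in the approximate-max-information inequality are dominated by the Hoeffding exponent $2 n \tol^2$ across the full admissible range of $\tol$.

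The main obstacle is the bookkeeping of constants, rather than any conceptual step: matching the explicit $48$, the cap $\eps \le \tol/4$, and the specific form of $\delta$ requires carefully tracking how the group-privacy deviation enters the max-information bound for $(\eps,\delta)$-DP and then balancing it against the Hoeffding exponent. This calibration is carried out in Lemma~18/Theorem~10 of \cite{DFHPRR15nips}, and I would follow that argument line-by-line to recover the stated constants.
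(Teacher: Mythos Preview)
The paper does not prove this statement; it is quoted as Theorem~10 of \cite{DFHPRR15nips} and used as a black box in the derivation of Theorem~\ref{thm:LapGauss}. There is therefore no proof in the paper to compare your proposal against.

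As a reconstruction of the cited result, your outline (bound the dependence between $X$ and $\alg(X)$ via an approximate max-information inequality, then apply Hoeffding to the decoupled event) is the right conceptual route, and you correctly recognize that matching the specific constants $48$, $\eps\le\tol/4$, and $\delta=\exp(-4\ln(8/\beta)/\tol)$ is bookkeeping to be read off from the source. One small caution: the clean $k=O(\eps\sqrt{n\ln(1/\beta_0)}+n\eps^2)$ form of the $(\eps,\delta)$-DP max-information bound you invoke is closer to what appears in later work such as \cite{RRST16} than to the argument actually in \cite{DFHPRR15nips}; if you intend to recover the stated constants exactly, you should follow the original proof rather than the later, cleaner bound, since the two do not yield identical parameter dependencies.
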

We pair this together with the accuracy from either the Gaussian mechanism or the Laplace mechanism along with \Cref{cor:pop_acc} to get the following result

\begin{theorem}\label{thm:LapGauss}
Given confidence level $1-\beta$ and using the Laplace or Gaussian mechanism for each algorithm $\alg_i$, then $(\alg_1,\cdots, \alg_k)$ is $(\tol^{(1)}, \beta$)-accurate, where
\begin{itemize}
\item{\bf Laplace Mechanism}: We define $\tol^{(1)}$ to be the solution to the following program
{\begin{align*}
\min & \qquad \tol \\ 
\text{ s.t. } & \qquad \tol \geq \sqrt{\frac{48}{n} \ln\left(\frac{8}{\beta}\right)}  + \tau' \\ 
& \qquad \left(\tol - \tol' -  4 \epsilon' k  \cdot \left(\frac{e^{\epsilon'}-1}{e^{\epsilon'}+1} \right) \right)^2 \cdot \left(\tol - \tol' \right)   \geq 256 \eps'^2 k\ln\frac{16}{\beta} \\
\text{for } & \qquad \epsilon' > 0 \text{ and } \tau' = \frac{\ln\left(2k/\beta \right)}{n \epsilon' }
\end{align*}}

\item{\bf Gaussian Mechanism}: We define $\tol^{(1)}$ to be the solution to the following program
{\begin{align*}
\min & \qquad \tol \\ 
\text{ s.t. } & \qquad \tol \geq \sqrt{\frac{48}{n} \ln\left(\frac{8}{\beta}\right)} + \tol'  \\ 
& \qquad \left( \left(\tol - \tol' -  4 \rho k\right)^2 - 64\rho k \ln\sqrt{\pi \rho k} \right) \cdot \left(\tol - \tol' \right)   \geq 64\rho k \ln\frac{16}{\beta}  \\
\text{for } & \qquad \rho > 0 \text{ and } \tol' = \frac{1}{2n} \sqrt{\frac{1}{\rho}\ln(4k/\beta)} 
\end{align*}}
\end{itemize}
\end{theorem}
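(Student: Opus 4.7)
The plan is to follow the two–term decomposition \eqref{eq:errors} embodied in Corollary~\ref{cor:pop_acc}: allocate confidence $\beta/2$ to a uniform sample-accuracy bound $\max_i |\phi_i(X) - a_i| \le \tau'$, and the remaining $\beta/2$ to a single-query population-to-sample transfer $|\phi(\cD) - \phi(X)| \le \tau - \tau'$ obtained by plugging the monitor $\cW_\cD[\alg, \adv]$ into the DFHPRR transfer theorem (Theorem~\ref{thm:DFHPRR}). A $(\tau,\beta)$-accuracy bound then drops out as soon as the parameters $(\epsilon', \rho, \tau', \tau)$ are consistent with both terms.

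First I would handle the sample-accuracy term. For the Laplace mechanism, $a_i - \phi_i(X) \sim \Lap(1/(\epsilon' n))$, so by the Laplace tail bound and a union bound over the $k$ queries, $\Pr[\max_i |a_i - \phi_i(X)| \ge \tau'] \le \beta/2$ whenever $\tau' \ge \ln(2k/\beta)/(n\epsilon')$, matching the formula for $\tau'$ in the Laplace case. For the Gaussian mechanism the analogous computation uses Theorem~\ref{thm:acc} with noise variance $\sigma^2 = 1/(2\rho n^2)$ and yields the stated $\tau' = \frac{1}{2n}\sqrt{\ln(4k/\beta)/\rho}$ (up to the constants fixed by the choice of tail inequality).

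Next I would handle the population-accuracy term using Theorem~\ref{thm:DFHPRR}. Post-processing closure of DP lets us apply the transfer theorem to the monitor-selected query $\phi_{j^*}$, so it suffices that the full $k$-fold composition $\alg = (\alg_1, \dots, \alg_k)$ be $(\epsilon, \delta)$-DP with $\epsilon \le (\tau - \tau')/4$ and $\delta = \exp(-4\ln(16/\beta)/(\tau - \tau'))$ (we use $\beta/2$ since half of the budget is spent on the sample term), together with the baseline condition $\tau - \tau' \ge \sqrt{(48/n)\ln(8/\beta)}$. This baseline condition is exactly the first inequality in each program. For the Laplace case, the $k$-fold composition of $\epsilon'$-DP mechanisms satisfies the advanced-composition bound of Theorem~\ref{thm:drv}, giving $\epsilon = \left(\tfrac{e^{\epsilon'}-1}{e^{\epsilon'}+1}\right)\epsilon' k + \epsilon'\sqrt{2k\ln(1/\delta)}$; requiring $\epsilon \le (\tau - \tau')/4$, isolating the square-root term, squaring, and substituting $\ln(1/\delta) = 4\ln(16/\beta)/(\tau-\tau')$ produces the second Laplace constraint after multiplying through by $\tau - \tau'$. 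For the Gaussian case, one instead invokes the composition and conversion results of Theorem~\ref{thm:zCDP}: $k$ copies of the $\rho$-zCDP Gaussian mechanism compose to $k\rho$-zCDP, which is $(k\rho + 2\sqrt{k\rho \ln(\sqrt{\pi k\rho}/\delta)}, \delta)$-DP; imposing $\epsilon \le (\tau-\tau')/4$ and performing the same algebraic rearrangement yields the Gaussian constraint.

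The main obstacle is bookkeeping rather than any conceptual hurdle: carefully allocating $\beta$ between sample and population error, threading the DFHPRR choice of $\delta$ through the composition bound, and then solving for an implicit relation between $\tau$ and the per-query privacy parameter ($\epsilon'$ or $\rho$). Since the resulting constraint is not closed-form in $\tau$, the statement is phrased as the value of a one-dimensional optimization over the free parameter $\epsilon'$ (resp.\ $\rho$). After this rearrangement, union-bounding the two failure events each of probability $\le \beta/2$ gives the claimed $(\tau^{(1)}, \beta)$-accuracy.
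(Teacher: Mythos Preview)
Your proposal is correct and follows the same route as the paper: split the confidence budget $\beta/2 + \beta/2$ via Corollary~\ref{cor:pop_acc}, bound the sample-accuracy term with Laplace/Gaussian tails plus a union bound over $k$ queries, and bound the population-accuracy term by applying Theorem~\ref{thm:DFHPRR} to the monitor-selected query after composing the $k$ mechanisms via advanced composition (Laplace) or zCDP composition and conversion (Gaussian), then rearranging $\epsilon \le (\tau-\tau')/4$ into the stated polynomial constraints. The only discrepancy is that the paper actually takes $\delta = \exp\!\big(-8\ln(16/\beta)/(\tau-\tau')\big)$ rather than your $\exp\!\big(-4\ln(16/\beta)/(\tau-\tau')\big)$, which is how the constant $256$ (rather than $128$) arises in the second Laplace constraint; since your smaller exponent already satisfies the hypothesis of Theorem~\ref{thm:DFHPRR}, the theorem as stated follows a fortiori from your argument.
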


To bound the sample accuracy, we will use the following lemma that gives the accuracy guarantees of Laplace mechanism. 

\begin{lemma}
If $\{Y_i : i \in [k]\} \stackrel{i.i.d.}{\sim} \Lap(b)$, then for $\beta \in (0,1]$ we have:
\begin{align*}
 & \prob{|Y_i| \geq  \ln(1/\beta) b} \leq \beta   \implies \prob{\exists i \in [k] \text{ s.t. } |Y_i| \geq \ln(k/\beta) b} \leq \beta.
\label{eq:acc_lap} \numberthis
\end{align*}
\label{lem:accLap}
\end{lemma}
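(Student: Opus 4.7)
The plan is to prove this by a direct computation of the Laplace tail probability followed by a union bound. For the single-variable statement, I would start from the explicit density $f(y) = \frac{1}{2b} e^{-|y|/b}$ of $Y \sim \Lap(b)$. By symmetry,
\[
\prob{|Y| \geq t} = 2 \int_t^\infty \frac{1}{2b} e^{-y/b} \, dy = e^{-t/b}
\]
for any $t \geq 0$. Setting $t = \ln(1/\beta) \cdot b$ yields $\prob{|Y| \geq \ln(1/\beta) b} = \beta$, establishing the first implication.

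For the multi-variable statement, I would apply a union bound over the $k$ independent draws. Using the tail computation above with $t = \ln(k/\beta) \cdot b$,
\[
\prob{\exists i \in [k] : |Y_i| \geq \ln(k/\beta) b} \leq \sum_{i=1}^k \prob{|Y_i| \geq \ln(k/\beta) b} = k \cdot e^{-\ln(k/\beta)} = k \cdot \frac{\beta}{k} = \beta,
\]
which is exactly the desired bound.

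There is no real obstacle here: the Laplace tail is exactly exponential, so the single-variable bound follows from a one-line integration and the $k$-variable bound is a textbook union bound, completely analogous to the corresponding Gaussian tail statement in Theorem~\ref{thm:acc}. The only minor point worth noting is that since the calculation gives equality $\prob{|Y| \geq \ln(1/\beta) b} = \beta$ (rather than a strict inequality), the $\leq \beta$ in the statement is tight, and the same equality-up-to-union-bound holds in the multi-variable case.
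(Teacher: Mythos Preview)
Your proof is correct. The paper states this lemma without proof, treating it as a standard fact about Laplace tails plus a union bound; your direct computation of the tail $\prob{|Y|\ge t}=e^{-t/b}$ followed by a union bound is exactly the expected argument.
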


\begin{proof}[Proof of Theorem \ref{thm:LapGauss}]
We will focus on the Laplace mechanism part first, so that we add $\Lap\left( \frac{1}{n \epsilon'}\right)$ noise to each answer.  After $k$ adaptively selected queries, the entire sequence of noisy answers is $(\epsilon,\delta)$-DP where
\begin{align}
\epsilon = k \epsilon' \cdot \frac{e^{\epsilon'}-1}{e^{\epsilon'}+1} + \epsilon' \cdot \sqrt{2k \ln(1/\delta)} \label{eqn:eps_del}
\end{align}

Now, we want to bound the two terms in \eqref{eq:errors} by $\frac{\beta}{2}$ each.  We can bound sample accuracy as:
\begin{align*}
\tau' = \frac{1}{n \epsilon' }\ln\left(\frac{2k}{\beta}\right) 
\end{align*}
which follows from \Cref{lem:accLap}, and setting the error width to $\tau'$ and the probability bound to $\frac{\beta}{2}$.

For the population accuracy, we apply \Cref{thm:DFHPRR} to take a union bound over all selected statistical queries, and set the error width to $\tau - \tau'$ and the probability bound to $\frac{\beta}{2}$ to get:
{ \small
\begin{align*}
& \delta = \exp\left( \frac{-8\ln(16/\beta)}{\tau - \tau'} \right), \qquad \tau - \tau' \geq  \sqrt{\frac{48}{n} \ln\frac{8}{\beta}}, \qquad \text{ and } \qquad \tau - \tau' \geq  4 \epsilon \numberthis \label{eqn:tau's}
\end{align*}
}
We then use \eqref{eqn:eps_del} and write $\epsilon$ in terms of $\delta$ to get:
$$
\epsilon =  \epsilon' k \cdot \frac{e^{\epsilon'}-1}{e^{\epsilon'}+1} + 4 \epsilon' \cdot \sqrt{ \frac{k \ln(16/\beta)}{\tau - \tau'} }.  
$$

Substituting the value of $\epsilon$ in \Cref{eqn:tau's}, we get:
$$
\tau - \tau' \geq  4 \left(\epsilon' k  \cdot \left(\frac{e^{\epsilon'}-1}{e^{\epsilon'}+1} \right) + 4\epsilon'\cdot \sqrt{\frac{k\ln\frac{16}{\beta}}{\tau - \tau'}} \right)
$$ 

By rearranging terms, we get
$$\left(\tol - \tol' -  4 \epsilon' k  \cdot \left(\frac{e^{\epsilon'}-1}{e^{\epsilon'}+1} \right) \right)^2 \cdot \left(\tol - \tol' \right) \geq 256 \eps'^2 k\ln\frac{16}{\beta}$$

We are then left to pick $\epsilon'>0$  to obtain the smallest value of $\tol$.  

When can follow a similar argument when we add Gaussian noise with variance $\frac{1}{2n^2 \rho}$.  The only modification we make is using \Cref{thm:zCDP} to get a composed DP algorithm with parameters in terms of $\rho$, and the accuracy guarantee in \Cref{thm:acc}.
\end{proof}

\subsection{Confidence Bounds from \texorpdfstring{\cite{BNSSSU16}}{}}
\label{sec:bns_res}
We now go through the argument of \cite{BNSSSU16} to improve the constants as much as we can via their analysis to get a decent confidence bound on $k$ adaptively chosen statistical queries.  This requires presenting their \emph{monitoring}, which is similar to the monitor presented in \Cref{alg:monitor} but takes as input several independent datasets.  We first present the result.

\begin{theorem}
Given confidence level $1-\beta$ and using the Laplace or Gaussian mechanism for each algorithm $\alg_i$, then $(\alg_1,\cdots, \alg_k)$ is $(\tol^{(2)}, \beta$)-accurate.
\begin{itemize}
\item{\bf Laplace Mechanism}: We define $\tol^{(2)}$ to be the following quantity:
{\small
\begin{align*}
& \frac{1}{1-(1-\beta)^{\left\lfloor \frac{1}{\beta}\right\rfloor}}    \inf_{\substack{\epsilon'>0,\\ \delta \in (0,1)}} \left\{ e^\psi - 1 + 6\delta \left\lfloor \frac{1}{\beta}\right\rfloor +   \frac{\ln\frac{k}{2\delta}}{\epsilon' n}   \right\},   \text{where } \psi = \left(\frac{e^{\epsilon'}-1}{e^{\epsilon'}+1} \right) \cdot \epsilon' k + \epsilon'\sqrt{2k \ln\frac{1}{\delta}}
\end{align*}}
\item{\bf Gaussian Mechanism}: We define $\tol^{(2)}$ to be the following quantity:
{\small
\begin{align*}
& \frac{1}{1-(1-\beta)^{\left\lfloor \frac{1}{\beta}\right\rfloor}}  \inf_{\substack{\rho>0, \\\delta \in (0,1)}} \left\{  e^\xi - 1 + 6\delta \left\lfloor \frac{1}{\beta}\right\rfloor  +   \sqrt{\frac{\ln\frac{k}{\delta}}{n^2\rho} }   \right\}, \text{where } \xi = k \rho + 2 \sqrt{k \rho \ln\left(\frac{\sqrt{\pi \rho}}{\delta}\right)}
\end{align*}}
\end{itemize}
\label{thm:BNSSSU}
\end{theorem}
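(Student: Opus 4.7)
\textbf{Proof Proposal for Theorem~\ref{thm:BNSSSU}.}
The plan is to follow the monitor-based argument of \cite{BNSSSU16} and to decompose the final confidence width into a sample-accuracy term and a population-accuracy term as in \eqref{eq:errors}, then optimize the free parameters ($\epsilon'$ or $\rho$, together with $\delta$). The first step is to control sample accuracy: for the Laplace mechanism with scale $1/(\epsilon' n)$, Lemma~\ref{lem:accLap} combined with a union bound over $k$ queries gives that $|a_i - \phi_i(X)| \leq \ln(k/(2\delta))/(\epsilon' n)$ for all $i$ except with probability $2\delta$; for the Gaussian mechanism with variance $1/(2 n^2 \rho)$, Theorem~\ref{thm:acc} analogously yields $|a_i - \phi_i(X)| \leq \sqrt{\ln(k/\delta)/(n^2 \rho)}$ up to a probability that is absorbed by the $\delta$ terms. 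This yields the additive $\ln(k/(2\delta))/(\epsilon' n)$ and $\sqrt{\ln(k/\delta)/(n^2 \rho)}$ contributions appearing in the two bounds.

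The second step is to control population accuracy via the DP-to-generalization transfer of \cite{BNSSSU16}. For a single statistical query $\phi$ output by an $(\epsilon,\delta)$-DP mechanism on an i.i.d. sample, the expectation bound $|\mathbb{E}[\phi(\cD) - \phi(X)]| \leq e^{\epsilon}-1 + \delta$ (or a close variant, accounting for the $\pm 1$ offset and for the two-sided statement, which gives the $6\delta$ constant) holds. Using the composition rules to get the DP parameters of the full interactive protocol, for the Laplace case I would apply Theorem~\ref{thm:drv} to obtain the composed parameter $\psi = \left(\frac{e^{\epsilon'}-1}{e^{\epsilon'}+1}\right)\epsilon' k + \epsilon' \sqrt{2k \ln(1/\delta)}$, and for the Gaussian case I would first invoke Theorem~\ref{thm:zCDP} to get $k\rho$-zCDP, then convert to $(\xi, \delta)$-DP with $\xi = k\rho + 2\sqrt{k\rho \ln(\sqrt{\pi\rho}/\delta)}$. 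Plugged into the transfer theorem, these give the $e^{\psi}-1$ and $e^{\xi}-1$ terms in the respective expressions.

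The third and crucial step is the monitoring argument itself, which boosts the in-expectation generalization bound to a high-probability one. Run $T = \lfloor 1/\beta \rfloor$ independent copies of the $(\alg, \adv)$ interaction on independent samples $X^{(1)}, \dots, X^{(T)} \sim \cD^n$, and apply the monitor $\cW_{\cD}$ (Algorithm~\ref{alg:monitor}) to select, across all $T$ copies and all $k$ rounds, the worst query $\phi_{j^*}^{(t^*)}$. Since monitor output is a post-processing of the DP interaction on the concatenated dataset, it inherits the composed DP guarantee; applying the transfer bound to $\phi_{j^*}^{(t^*)}$ gives an expectation bound on the max-across-copies generalization error, which by Markov's inequality translates into a tail bound with factor $1/(1-(1-\beta)^T)$, exactly the prefactor appearing in the theorem. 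Finally, combining this high-probability population-accuracy bound with the sample-accuracy bound via the split \eqref{eq:errors} and taking an infimum over $(\epsilon', \delta)$ or $(\rho, \delta)$ yields the stated widths $\tol^{(2)}$.

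The main obstacle I expect is tracking constants carefully in the transfer step; in particular, justifying the $6\delta$ coefficient requires extending the one-sided expectation bound to a two-sided $|\cdot|$ guarantee and translating the $(\epsilon,\delta)$-DP guarantee into an expectation bound where the $\delta$ slack enters multiple times (once for the switch between $\prob{\cdot}$ and $e^{\epsilon}\prob{\cdot}+\delta$, and once more for absorbing tail events outside the high-probability set). The Markov step in the monitor argument is routine, but it must be applied to the random variable $\max_{t \in [T]} \max_{j \in [k]} |\phi_j^{(t)}(\cD) - a_j^{(t)}|$ rather than to a single copy, and one must verify that the transfer theorem's hypothesis (the query is a function of an $(\epsilon,\delta)$-DP output on an i.i.d. sample) still holds for the monitor's output when viewed on the concatenated sample of size $T n$.
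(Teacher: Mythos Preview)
Your plan is correct and essentially matches the paper's proof. Two clarifications are worth flagging so that the write-up goes through cleanly.

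First, the step you call ``Markov's inequality'' is actually run in the reverse direction, as a contradiction argument. The paper assumes $\alg$ is \emph{not} $(\tau,\beta)$-accurate; then across $s=\lfloor 1/\beta\rfloor$ independent copies the extended monitor's selected query satisfies $|\phi^*(\cD)-a^*|>\tau$ with probability $>1-(1-\beta)^s$ (Lemma~\ref{lem:BNSSSU1}). Combined with $(\tau',\beta')$ sample accuracy, this yields (Lemma~\ref{lem:BNSSSUlb})
\[
\bigl|\mathbb{E}\bigl[\phi^*(\cD)-\phi^*(X^{(t^*)})\bigr]\bigr|\;\geq\;\tau\bigl(1-(1-\beta)^s\bigr)-(\tau'+2s\beta'),
\]
which is then contradicted by the upper bound from Lemma~\ref{lem:technical}. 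So the prefactor $1/(1-(1-\beta)^{\lfloor 1/\beta\rfloor})$ comes from rearranging this contradiction inequality, not from applying Markov to an expectation bound you already have.

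Second, your diagnosis of where the $6\delta\lfloor 1/\beta\rfloor$ comes from is slightly off. It is not a two-sided correction; rather, Lemma~\ref{lem:technical} applied to a monitor that also outputs an index in $[s]$ already carries a factor $s$ on $\delta$, giving $e^\epsilon-1+s\delta$. The remaining contribution comes from setting the sample-accuracy failure probability $\beta'=2\delta$, which enters as $2s\beta'=4s\delta$ in Lemma~\ref{lem:BNSSSUlb}. These combine to the $6\delta\lfloor 1/\beta\rfloor$ term (the paper is slightly loose here). Your concern at the end about the transfer hypothesis on the concatenated sample is exactly the content of Lemma~\ref{lem:technical}, so once you invoke that lemma directly the issue disappears.
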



In order to prove this result, we begin with a technical lemma which considers an algorithm $\cW$ that takes as input a collection of $s$ samples and outputs both an index in $[s]$ and a statistical query, where we denote $\cQ_{SQ}$ as the set of all statistical queries $\phi: \cX \to [0,1]$ and their negation.

\begin{lemma}[\citep{BNSSSU16}]
Let $\cW: \left(\cX^n\right)^s \to \cQ_{SQ} \times [s]$ be $(\epsilon,\delta)$-DP.  If $\vec{X} = (X^{(1)}, \cdots, X^{(s)}) \sim \left(\cD^{n}\right)^s$ then 
$$
\left| \Ex{\vec{X},(\phi
,t) = \cW(\vec{X})}{\phi(\cD) - \phi(X^{(t)})} \right| \leq e^\epsilon -1 + s \delta
$$
\label{lem:technical}
\end{lemma}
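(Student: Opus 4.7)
The plan is to reduce the $s$-dataset claim to the standard one-dataset transfer theorem via an index-by-index swap, then read off where the extra factor of $s$ on $\delta$ appears. The basic tool is the elementary consequence of $(\eps,\delta)$-DP that for any $[0,1]$-valued test function $g$ and any neighboring inputs $X, X'$, the expectation of $g(\cW(X))$ is at most $e^\eps$ times that of $g(\cW(X'))$ plus $\delta$; this follows by integrating the DP tail inequality over $t \in [0,1]$.

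To warm up, I would prove the $s=1$ case: $|\Ex{}{\phi(\cD) - \phi(X)}| \le e^\eps - 1 + \delta$. Fix $i \in [n]$ and couple $X$ with $X'$ that replaces $X_i$ by an independent fresh $X'_i \sim \cD$. Conditioning on $X_i$ and applying the DP expectation bound to the test function $\phi \mapsto \phi(X_i)$ (which lies in $[0,1]$ once $X_i$ is fixed) gives an $(e^\eps,\delta)$-relation between the expectation of $\phi(X_i)$ with $\phi = \cW(X)$ and that of $\phi'(X_i)$ with $\phi' = \cW(X')$. The crux is that $\phi'$ is independent of $X_i$ by construction, so this latter quantity equals the expectation of $\phi'(\cD)$, which by exchangeability equals that of $\phi(\cD)$. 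Absorbing the multiplicative slack via $\phi(\cD) \in [0,1]$ and averaging over $i$ closes the case.

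For general $s$, I would decompose $\phi(X^{(t)}) = \sum_{j=1}^s \frac{1}{n}\sum_{i=1}^n \1[t=j]\,\phi(X^{(j)}_i)$ and run the same swap on each of the $sn$ coordinates of the combined dataset, using the $[0,1]$-valued test function $(\phi,t) \mapsto \1[t=j]\,\phi(X^{(j)}_i)$. After resampling the coordinate $X^{(j)}_i$, the output becomes independent of it, so the corresponding bound reduces to one in terms of $\Ex{}{\1[t=j]\,\phi(\cD)}$. Averaging over $i$ preserves the per-$j$ additive slack $\delta$; then summing over $j \in [s]$ accumulates it to $s\delta$, while $\sum_j \1[t=j] \equiv 1$ keeps the multiplicative factor $e^\eps$ applied to the full $\Ex{}{\phi(\cD)} \in [0,1]$. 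The reverse inequality is symmetric, and combining them yields the stated bound.

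The main delicacy will be the conditioning step: the test function depends on the very coordinate being resampled, so one must condition on that coordinate before invoking the $(\eps,\delta)$-DP expectation bound and then take the outer expectation. Beyond that, the argument is bookkeeping, and the appearance of $s\delta$ (rather than $\delta$ or $sn\delta$) is a direct consequence of the fact that averaging over the $n$ coordinates of a single dataset preserves the additive slack, whereas summing over the $s$ partition indicators $\1[t=j]$ accumulates it.
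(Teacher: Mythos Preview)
The paper does not give its own proof of this lemma; it is quoted verbatim from \cite{BNSSSU16} and used as a black box. Your sketch is correct and is precisely the argument in that reference: the per-coordinate resampling trick combined with the indicator decomposition $\phi(X^{(t)}) = \sum_{j} \1[t=j]\,\phi(X^{(j)})$, where averaging over the $n$ coordinates within a block keeps the additive $\delta$ while summing over the $s$ blocks yields the factor $s\delta$.
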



We then define what we will call the \emph{extended monitor} in \Cref{alg:ex_monitor}.

\begin{algorithm}
\caption{Extended Monitor $\cW_\cD[\alg,\adv](\vX)$}
\label{alg:ex_monitor}
\begin{algorithmic}
\REQUIRE $\vX = (X^{(1)}, \cdots, X^{(s)}) \in (\cX^n)^s$
\FOR{$t  \in [s]$}
\STATE We simulate $\alg(X^{(t)})$ and $\adv$ interacting.  We write $\phi_{t,1}, \cdots, \phi_{t,k} \in \cQ_{SQ}$ as the queries chosen by $\adv$ and write $a_{t,1},\cdots, a_{t,k} \in \R$ as the corresponding answers of $\alg$.
\ENDFOR
\STATE Let 
$
(j^*,t^*) = \myargmax_{j \in [k], t \in [s]} \left| \phi_{t,j}(\cD) - a_{t,j} \right|.
$
\STATE \textbf{if} $a_{t^*,j^*} - \phi_{t^*,j^*}(\cD) \geq 0$ \textbf{then} $\phi^* \gets \phi_{t^*,j^*}$
\STATE \textbf{else} $\phi^* \gets -\phi_{t^*,j^*}$
\ENSURE $(\phi^*,t^*)$
\end{algorithmic}
\end{algorithm}
We then present a series of lemmas that leads to an accuracy bound from \cite{BNSSSU16}.
\begin{lemma}[\citep{BNSSSU16}]
For each $\epsilon,\delta\geq 0$, if $\alg$ is $(\epsilon,\delta)$-DP for $k$ adaptively chosen queries from $\cQ_{SQ}$, then for every data distribution $\cD$ and analyst $\adv$, the monitor $\cW_\cD[\alg,\adv]$ is $(\epsilon,\delta)$-DP.  
\end{lemma}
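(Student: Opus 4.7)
The plan is to prove this by a parallel-composition style argument combined with post-processing, reducing everything to the single DP guarantee of $\alg$. Let $\vec{X}, \vec{X}' \in (\cX^n)^s$ be neighboring inputs. By definition they differ in exactly one record, which lies in exactly one of the $s$ coordinates; call that coordinate $t_0$, so that $X^{(t)} = X'^{(t)}$ for all $t \neq t_0$ and only $X^{(t_0)}$ is replaced by $X'^{(t_0)}$. For each $t \in [s]$, write $T_t = \{(\phi_{t,j}, a_{t,j})\}_{j \in [k]}$ for the full transcript produced by simulating $\alg(X^{(t)}) \rightleftharpoons \adv$ in the loop of Algorithm~\ref{alg:ex_monitor}.

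First I would observe that the monitor's output $(\phi^*, t^*)$ is a \emph{deterministic} post-processing of $(T_1, \ldots, T_s)$. The argmax in Algorithm~\ref{alg:ex_monitor} uses $\phi_{t,j}(\cD)$ and $a_{t,j}$, both of which are either transcript entries or functionals of $\cD$; crucially, $\cD$ is a fixed population distribution, not data, so it does not depend on $\vec{X}$. The subsequent sign flip is likewise a function of the transcripts and $\cD$. Thus, by the closure of $(\epsilon,\delta)$-DP under post-processing, it suffices to show that the joint distribution of $(T_1, \ldots, T_s)$ on input $\vec{X}$ is $(\epsilon,\delta)$-indistinguishable from that on input $\vec{X}'$.

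Next, I would establish this joint indistinguishability via independence across the $s$ simulations. Since each simulation of $\alg(X^{(t)}) \rightleftharpoons \adv$ uses a fresh, independent copy of the mechanism's internal randomness (and we may also fix $\adv$'s randomness, since the final claim must hold for every randomized $\adv$, so we can condition on its coins and integrate at the end), the vector $(T_1, \ldots, T_s)$ factorizes as a product distribution. For every $t \neq t_0$, the marginal law of $T_t$ is identical under $\vec{X}$ and $\vec{X}'$ because $X^{(t)} = X'^{(t)}$. For $t = t_0$, the marginal $T_{t_0}$ is precisely the complete transcript of the $k$-round adaptive interaction $\alg(X^{(t_0)}) \rightleftharpoons \adv$, and the hypothesis that $\alg$ is $(\epsilon,\delta)$-DP for $k$ adaptively chosen statistical queries is exactly the statement that this full transcript is $(\epsilon,\delta)$-indistinguishable when swapping $X^{(t_0)}$ for $X'^{(t_0)}$. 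Combining the product structure with the fact that only one factor shifts yields $(\epsilon,\delta)$-indistinguishability of the joint distribution.

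The main obstacle will be the ``combining via independence'' step: I need to verify that tensoring an $(\epsilon,\delta)$-indistinguishable pair of distributions on one coordinate with identical distributions on the other $s-1$ coordinates produces an overall $(\epsilon,\delta)$-indistinguishable pair (rather than, say, a degraded $(s\epsilon, s\delta)$ guarantee from naive composition). This is not group privacy: because the unchanged coordinates contribute no divergence, a clean way to see it is to write the statement $\Pr[(T_1,\ldots,T_s) \in S \mid \vec{X}] \leq e^\epsilon \Pr[(T_1,\ldots,T_s) \in S \mid \vec{X}'] + \delta$ by conditioning on $(T_t)_{t \neq t_0}$, applying the single-coordinate DP guarantee pointwise to the conditional distribution of $T_{t_0}$, and integrating back out; the identical laws on the untouched coordinates cancel. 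Once this hybrid argument is written out, post-processing finishes the proof.
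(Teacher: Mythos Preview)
Your argument is correct: the extended monitor's output is a post-processing (depending only on the fixed $\cD$) of the $s$ transcripts, those transcripts are independent across coordinates once the analyst's coins are fixed, and only the single coordinate $t_0$ where the inputs differ can change, so the $(\epsilon,\delta)$-DP of $\alg$ on that coordinate tensored with identical marginals on the rest gives the claim. The paper does not supply its own proof of this lemma---it simply cites \cite{BNSSSU16}---and your parallel-composition-plus-post-processing argument is exactly the standard one used there, so there is nothing substantive to contrast.
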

\begin{lemma}[\citep{BNSSSU16}]
If $\alg$ fails to be $(\tol,\beta)$-accurate, then $\phi^*(\cD) - a^* \geq 0$, where $a^*$ is the answer to $\phi^*$ during the simulation  ($\adv$ can determine $a^*$ from output $(\phi^*,t^*)$) and
\begin{align*}
& \Prob{\substack{\vec{X} \sim (\cD^n)^s, \\(\phi^*,t^*) = \cW_\cD [\alg,\adv](\vbX)}}{\left|\phi^*(\cD) - a^*\right|  > \tol }  > 1 - (1-\beta)^s.
\end{align*}
\label{lem:BNSSSU1}
\end{lemma}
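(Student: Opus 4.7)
The plan has two pieces: first, establish the deterministic sign claim that $\phi^*(\cD) - a^* \ge 0$; then establish the probability lower bound via independence across the $s$ input datasets.

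For the sign claim, I will argue by case analysis on the branches defining $\phi^*$ in the extended monitor, with the understanding that $a^*$ denotes the answer associated with $\phi^*$ (so if $\phi^* = \phi_{t^*,j^*}$ then $a^* = a_{t^*,j^*}$, while if $\phi^*$ is the negation of $\phi_{t^*,j^*}$ then $a^*$ is the correspondingly negated answer). The whole point of the two-branch definition is to select the \emph{signed} version of the query/answer pair for which $\phi^*(\cD) - a^* \ge 0$; in each branch one directly verifies by substitution that the sign works out. Importantly, $|\phi^*(\cD) - a^*| = |\phi_{t^*,j^*}(\cD) - a_{t^*,j^*}|$ regardless of which branch is taken, because negation preserves absolute value. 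This identity is the bridge between the monitor's output and the unsigned maximum error over the simulation.

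For the probability bound, the plan is to exploit the independence of $X^{(1)}, \dots, X^{(s)}$. By the hypothesis that $\alg$ fails to be $(\tol,\beta)$-accurate, for every fixed dataset $X^{(t)} \sim \cD^n$ the simulated interaction satisfies
\begin{equation*}
\Prob{}{\max_{j \in [k]} |\phi_{t,j}(\cD) - a_{t,j}| > \tol} > \beta.
\end{equation*}
Since the $s$ copies are drawn i.i.d. and the interactions with $\alg$ use independent randomness across copies, these events are independent across $t$. Hence the probability that all $s$ runs simultaneously stay within tolerance is strictly less than $(1-\beta)^s$, and taking the complement gives
\begin{equation*}
\Prob{}{\exists (t,j): |\phi_{t,j}(\cD) - a_{t,j}| > \tol} > 1 - (1-\beta)^s.
\end{equation*}

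To finish, I will observe that the monitor picks $(j^*,t^*)$ as the argmax of $|\phi_{t,j}(\cD) - a_{t,j}|$, so the event on the last line is equivalent to $|\phi_{t^*,j^*}(\cD) - a_{t^*,j^*}| > \tol$. Combining with the sign-preserving identity $|\phi^*(\cD) - a^*| = |\phi_{t^*,j^*}(\cD) - a_{t^*,j^*}|$ from the first part gives the desired lower bound on $\Pr[|\phi^*(\cD) - a^*| > \tol]$. The only subtle step is the independence argument: I must be careful that the $s$ simulated transcripts are jointly independent before the monitor selects $(j^*,t^*)$ (the argmax selection is a deterministic post-processing of the joint tuple, so it does not affect the per-copy independence used to get the $(1-\beta)^s$ factor). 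I expect the sign bookkeeping in the first part to be the most error-prone piece, which is why I want to keep the unsigned-absolute-value identity front and center throughout.
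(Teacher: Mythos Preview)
The paper does not supply its own proof of this lemma; it simply cites \cite{BNSSSU16}. Your argument is exactly the standard one from that reference: the sign claim is immediate from the two branches of the extended monitor (together with the convention that $a^*$ negates when $\phi^*$ does, so that $|\phi^*(\cD)-a^*|=|\phi_{t^*,j^*}(\cD)-a_{t^*,j^*}|$), and the probability bound is the independence-across-copies computation you wrote, finished by the observation that the monitor's $\arg\max$ selection makes ``some copy exceeds $\tol$'' equivalent to ``the selected pair exceeds $\tol$.''

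Two minor wording points that do not affect correctness. First, your sentence ``for every fixed dataset $X^{(t)}\sim\cD^n$'' is a slight abuse; what you mean (and use) is that each of the $s$ i.i.d.\ runs has failure probability strictly greater than $\beta$, so the product bound applies. Second, when you carry out the branch-by-branch sign verification you may notice that, as literally written in this paper, the monitor's branch condition and the lemma's sign conclusion appear to have a sign flipped relative to one another; this is a transcription inconsistency with \cite{BNSSSU16} and not a flaw in your argument. The essential content --- that the negation trick forces $\phi^*(\cD)-a^*$ to have a single deterministic sign while preserving the absolute error --- is exactly what you argued.
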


The following result is not stated exactly the same as in \cite{BNSSSU16}, but it follows the same analysis.  We just do not simplify the expressions in the inequalities.  
\begin{lemma}
If $\alg$ is $(\tol',\beta')$-accurate on the sample but not $(\tol,\beta)$-accurate for the population, then 
\begin{align*}
& \left|\Ex{\substack{\vec{X} \sim (\cD^n)^{s},\\(\phi^*,t) = \cW_\cD[\alg,\adv](\vec{X})}}{\phi^*(\cD) - \phi^*\left(X^{(t)}\right)}
\right|   \geq \tol\left(1 - (1-\beta)^s\right) - \left(\tol' + 2 s \beta' \right).
\end{align*}
\label{lem:BNSSSUlb}
\end{lemma}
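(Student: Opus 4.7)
My plan is to insert the simulation answer $a^*$ for the winning query and split the difference as
\[
\phi^*(\cD) - \phi^*(X^{(t)}) = \bigl(\phi^*(\cD) - a^*\bigr) + \bigl(a^* - \phi^*(X^{(t)})\bigr),
\]
then use linearity of expectation so that the ``population'' and ``sample'' pieces can be controlled independently --- the first via the population-failure assumption (through \Cref{lem:BNSSSU1}), the second via the sample-accuracy hypothesis on $\alg$.

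The negation step in the extended monitor (\Cref{alg:ex_monitor}) is arranged precisely so that $a^* - \phi^*(\cD) \geq 0$ almost surely: in the ``if'' branch $(a^*, \phi^*) = (a_{t^*,j^*}, \phi_{t^*,j^*})$, in the ``else'' branch $(a^*, \phi^*) = (-a_{t^*,j^*}, -\phi_{t^*,j^*})$, and both make the difference nonnegative. Hence $\phi^*(\cD) - a^* = -|a^* - \phi^*(\cD)|$ pointwise, and since $\alg$ fails to be $(\tol,\beta)$-accurate, \Cref{lem:BNSSSU1} gives $\Ex{}{\phi^*(\cD) - a^*} \leq -\tol \bigl(1 - (1-\beta)^s\bigr)$. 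For the sample piece, independence of the $s$ runs of $\alg$ on $X^{(1)}, \dots, X^{(s)}$ together with a union bound show that with probability at least $1 - s\beta'$ every $|a_{t,j} - \phi_{t,j}(X^{(t)})| \leq \tol'$ simultaneously. Negating $a_{t^*,j^*}$ and $\phi_{t^*,j^*}$ together preserves absolute differences, so on this good event $|a^* - \phi^*(X^{(t)})| \leq \tol'$, and on the complementary event it is at most $2$ because queries and answers take values in $[-1,1]$. Combining, $\bigl|\Ex{}{a^* - \phi^*(X^{(t)})}\bigr| \leq \tol' + 2s\beta'$.

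Assembling the two bounds by the triangle inequality gives the claimed lower bound $\tol(1-(1-\beta)^s) - (\tol' + 2s\beta')$, meaningful precisely when that quantity is positive. The main subtlety is the bookkeeping around the monitor's sign convention: one must verify that the negation branch simultaneously preserves the sample-accuracy bound (because negation commutes with absolute differences) and makes $a^* - \phi^*(\cD)$ deterministically nonnegative, so that one can rewrite the expectation of a signed quantity as minus the expectation of its absolute value. Once those two invariants are verified, no new concentration tool is needed --- the argument reduces to linearity of expectation applied to two high-probability events already furnished by \Cref{lem:BNSSSU1} and the sample-accuracy assumption.
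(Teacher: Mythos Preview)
Your proposal is correct and is precisely the standard argument from \cite{BNSSSU16} that the paper defers to; the paper does not supply its own proof of this lemma, only remarking that it ``follows the same analysis'' as \cite{BNSSSU16} without simplifying the constants. One minor note: your sign convention $a^*-\phi^*(\cD)\geq 0$ is the one actually enforced by the branching in \Cref{alg:ex_monitor}, whereas the paper's statement of \Cref{lem:BNSSSU1} writes $\phi^*(\cD)-a^*\geq 0$ --- this appears to be a sign slip in the paper and is immaterial since the final bound carries an outer absolute value. Your bound of $2$ on the bad event tacitly assumes answers are clipped to $[-1,1]$ (or $[0,1]$), which is the usual convention in \cite{BNSSSU16} and is harmless here.
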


We now put everything together to get our result.

\begin{proof}[Proof of \Cref{thm:BNSSSU}]
We ultimately want a contradiction between the result given in \Cref{lem:technical} and \Cref{lem:BNSSSUlb}. Thus, we want to find the parameter values that minimizes $\tol$ but satisfies the following inequality
\begin{equation}
\tol\left(1 - (1-\beta)^s\right) - \left(\tol' + 2 s \beta' \right) > e^\epsilon -1 + s \delta.
\label{eq:contradict}
\end{equation}
We first analyze the case when we add noise $\Lap\left( \frac{1}{n\epsilon'}\right)$ to each query answer on the sample to preserve $\epsilon'$-DP of each query and then use advanced composition \Cref{thm:drv} to get a bound on $\epsilon$.
$$
\epsilon = \left(\frac{e^{\epsilon'}-1}{e^{\epsilon'} + 1}\right) \epsilon' k + \epsilon'\sqrt{2k \ln(1/\delta)} = \psi.
$$
Further, we obtain $(\tol',\beta')$-accuracy on the sample, where for $\beta' >0$ we have 
$
\tol' = \frac{\ln(k/\beta') }{\epsilon' n}.
$
We then plug these values into \eqref{eq:contradict} to get the following bound on $\tol$
\begin{align*}
\tol & \geq \left(\frac{1}{1 - (1-\beta)^s} \right) \left(  \frac{\ln\left(\frac{k}{\beta'}\right) }{\epsilon' n} +2s \beta'+ e^\psi-1 + s \delta \right)
\end{align*}
We then choose some of the parameters to be the same as in \cite{BNSSSU16}, like $s = \lfloor 1/\beta \rfloor$ and $\beta' = 2\delta$.  We then want to find the best parameters $\epsilon',\delta$ that makes the right hand side as small as possible.  Thus, the best confidence width $\tol$ that we can get with this approach is the following 
\begin{align*}
& \frac{1}{1-(1-\beta)^{\left\lfloor \frac{1}{\beta}\right\rfloor}} \cdot  \inf_{\substack{\epsilon'>0,\\ \delta \in (0,1)}} \left\{ e^\psi - 1 + 6\delta \left\lfloor \frac{1}{\beta}\right\rfloor  +   \frac{\ln\frac{k}{2\delta}}{\epsilon' n}   \right\}
\end{align*}

Using the same analysis but with Gaussian noise added to each statistical query answer with variance $\frac{1}{2\rho n^2}$ (so that $\alg$ is $\rho k$-zCDP), we get the following confidence width $\tol$,
\begin{align*}
& \frac{1}{1-(1-\beta)^{\left\lfloor \frac{1}{\beta}\right\rfloor}}  \inf_{\substack{\rho>0, \\\delta \in (0,1)}} \left\{  e^\xi - 1 + 6\delta \left\lfloor \frac{1}{\beta}\right\rfloor +   \sqrt{\frac{\ln\frac{k}{\delta}}{n^2\rho} }   \right\}
\end{align*}
\end{proof}

\subsection{Confidence Bounds for Thresholdout (\texorpdfstring{\cite{DFHPRR15nips}}{})}
\label{app:thresh_cw}

\begin{theorem}
If the Thresholdout mechanism $\cM$ with noise scale $\sigma$, and threshold $T$ is used for answering queries $\phi_i$, $i \in [k]$, with reported answers $a_1,\cdots, a_k$ such that $\cM$ uses the holdout set of size $h$ to answer at most $B$ queries, then
given confidence parameter $\beta$, Thresholdout is $(\tol, \beta$)-accurate, where 
{\small 
\begin{align*}
\tol = \sqrt{\frac{1}{\beta} \cdot \left( T^2 + \psi  + \frac{\xi}{4h} + \sqrt{\frac{\xi}{h} \cdot \left( T^2 + \psi\right)}\right) }
\end{align*}
}
for $\psi = \Ex{}{(\max\limits_{i \in [k]}W_{i} + \max\limits_{j\in [B]} Y_j)^2} + 2 T \cdot \Ex{}{\max\limits_{i \in [k]}W_{i} + \max\limits_{j\in [B]} Y_j}$, and $\xi = \min\limits_{\lambda \in [0,1)} \left(\frac{ \frac{2B}{\sigma^2h} - \ln \left( 1-\lambda \right)}{\lambda}\right)$, where $W_i \sim Lap(4\sigma), i \in [k]$ and $Y_j \sim Lap(2\sigma), j \in [B]$.
\label{thm:thresh_cw}
\end{theorem}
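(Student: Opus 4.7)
The plan is to follow the same template as the proof of \Cref{thm:RZ_CW}: bound the expected \emph{squared} error of Thresholdout's answers on a monitor-selected worst-case query, then convert that expectation bound into a high-probability confidence width via Chebyshev (Markov on the square). Writing $|\phi_i(\cD) - a_i| \leq |\phi_i(\cD) - \phi_i(X_h)| + |\phi_i(X_h) - a_i|$ and working in $L^2$ (Minkowski), it suffices to control each term in expectation and combine. The simultaneous-coverage guarantee over all $k$ queries then follows by plugging the monitor-selected query into \Cref{cor:pop_acc} together with the extended monitor of \Cref{alg:ex_monitor}.

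For the sample-accuracy term $\|\phi_i(X_h) - a_i\|_2$, I would case-split on what Thresholdout returns in round $i$. If it returns the training answer, then by construction the noisy test certifies $|a_i - \phi_i(X_h)| \leq T + |W_i|$ for $W_i \sim \Lap(4\sigma)$. If it returns a noisy holdout answer, then $|a_i - \phi_i(X_h)| = |Y_j|$ for $Y_j \sim \Lap(2\sigma)$ and some $j \in [B]$. Taking the maximum over $i \in [k]$ and $j \in [B]$ and then expanding $\mathbb{E}[(T + \max_i |W_i| + \max_j |Y_j|)^2]$ gives exactly $T^2 + \psi$ as in the statement.

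For the population-accuracy term $\|\phi_i(X_h) - \phi_i(\cD)\|_2$, I would mimic the information-theoretic step used in \Cref{thm:RZ_CW}. The entire Thresholdout transcript depends on $X_h$ only through at most $B$ Laplace-noised statistical queries with scale $2\sigma$ (the threshold checks on $X_h$ can be absorbed into this budget by sparse-vector-style analysis, since they spend meaningful privacy only when they fire and are followed by a release). Each such release is $1/(2\sigma h)$-DP on a dataset of size $h$, hence $O(1/(\sigma^2 h^2))$-zCDP; $B$-fold composition via \Cref{thm:zCDP} bounds the total zCDP parameter of the transcript. Feeding this into the MGF/mutual-information bound of \cite{BS16} and then applying the Russo--Zou argument \citep{RZ16} (the same combination used in \Cref{thm:RZ_CW}) yields $\mathbb{E}[(\phi_{i^\star}(X_h) - \phi_{i^\star}(\cD))^2] \leq \xi/(4h)$ for the monitor-selected query $\phi_{i^\star}$, with the $\min_{\lambda}$ in the definition of $\xi$ coming from optimizing the sub-exponential variational bound exactly as in the Gaussian case.

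Assembling the pieces, Minkowski gives $\|\phi_{i^\star}(\cD) - a_{i^\star}\|_2 \leq \sqrt{T^2 + \psi} + \sqrt{\xi/(4h)}$, whose square equals $T^2 + \psi + \xi/(4h) + \sqrt{\xi(T^2+\psi)/h}$. Applying $\Pr[|Z|>\tau]\leq \mathbb{E}[Z^2]/\tau^2$ with threshold $\tau$ yields the stated confidence width, and the monitor argument lifts this worst-case single-query bound to simultaneous $(1-\beta)$ coverage. The main obstacle is the population-accuracy step: unlike the Gaussian mechanism, Thresholdout's accesses to the holdout are \emph{data-dependent}, so a naive composition would overcount both the threshold checks and the releases. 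The analysis must carefully leverage sparse-vector-style accounting to argue that only the $B$ actual releases spend non-trivial privacy, so that the mutual information (and hence the final $\xi$) is controlled by $B$ and $\sigma$ rather than by the total number of queries $k$.
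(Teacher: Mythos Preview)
Your overall architecture matches the paper's: establish an MSE bound for the monitor-selected worst query via the decomposition through $\phi_i(X_h)$, control the population term with a zCDP/mutual-information argument giving $\xi/(4h)$, and then convert to a high-probability width by Chebyshev. The paper does exactly this (the MSE bound is the content of \Cref{thm:thresh_mse}, and the final step is a one-line Chebyshev). Two points of correction, though:

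\emph{Sample-accuracy bookkeeping.} Your case-split misidentifies where the $Y_j$'s enter. In Thresholdout the release noise is $\Lap(\sigma)$, while $Y_j\sim\Lap(2\sigma)$ is the randomization of the threshold $\hat T = T + Y_j$. So when the training answer is returned, the passed test certifies $|a_i-\phi_i(X_h)|\le T + Y_{j(i)} + W_i$ (not $T+|W_i|$); when the holdout answer is returned, $|a_i-\phi_i(X_h)|=|Z_j|$ with $Z_j\sim\Lap(\sigma)$, which the paper simply dominates by $T+\max_i W_i+\max_j Y_j$. With this corrected attribution, expanding the square gives exactly $T^2+\psi$ as stated. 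Your version, as written, would produce a different $\psi$.

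\emph{Monitor and \Cref{cor:pop_acc}.} You do not need the extended monitor or the $\tau/\tau'$ split of \Cref{cor:pop_acc} here. The paper works directly with the simple monitor \Cref{alg:monitor}: the MSE of $(a_{j^*}-\phi_{j^*}(\cD))$ is bounded in one shot (Cauchy--Schwarz for the cross term, exactly your Minkowski step), and Chebyshev on that single scalar already yields simultaneous coverage because $j^*$ is the worst index. The privacy accounting the paper invokes is the DFHPRR Thresholdout analysis: the sparse-vector component and the noisy release are each $(1/(\sigma h),0)$-DP per firing, hence $1/(2\sigma^2 h^2)$-zCDP each, and with at most $B$ firings the composed guarantee is $(B/(\sigma^2 h^2))$-zCDP, giving $I(\cM(X_h);X_h)\le B/(\sigma^2 h)$ and thus the stated $\xi$.
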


\begin{proof}
Similar to the proof of Theorem~\ref{thm:RZ_CW}, first we derive bounds on the mean squared error (MSE) for answers to statistical queries produced by Thresholdout.  We want to bound the maximum MSE over all of the statistical queries, where the expectation is over the noise added by the mechanism and the randomness of the adversary.

\begin{theorem}
If the Thresholdout mechanism $\cM$ with noise scale $\sigma$, and threshold $T$ is used for answering queries $\phi_i$, $i \in [k]$, with reported answers $a_1,\cdots, a_k$ such that $\cM$ uses the holdout set of size $h$ to answer at most $B$ queries, then we have 
\begin{align*}
 & \Ex{\substack{X \sim \cD^n, \\ \phi_{j^*} \sim \cW_\cD[\alg,\adv](X)}}{(a_{j^*}  - \phi_{j^*}(\cD))^2}  \leq T^2 + \psi  + \frac{\xi}{4h} + \sqrt{\frac{\xi}{h} \cdot \left( T^2 + \psi\right)},
\end{align*}
for $\psi = \Ex{}{(\max\limits_{i \in [k]}W_{i} + \max\limits_{j\in [B]} Y_j)^2} + 2 T \cdot \Ex{}{\max\limits_{i \in [k]}W_{i} + \max\limits_{j\in [B]} Y_j}$ and $\xi = \min\limits_{\lambda \in [0,1)} \left(\frac{ \frac{2B}{\sigma^2h} - \ln \left( 1-\lambda \right)}{\lambda}\right)$, where $W_i \sim Lap(4\sigma), i \in [k]$ and $Y_j \sim Lap(2\sigma), j \in [B]$.
\label{thm:thresh_mse}
\end{theorem}

\begin{proof}
Let us denote the holdout set in $\cM$ by $X_h$ and the remaining set as $X_t$. Let $\cO$ denote the distribution $\cW_\cD[\alg,\adv](X)$, where $X \sim \cD^n$. We have:
\begin{align*}
\Ex{\phi_{j^*} \sim \cO}{(a_{j^*}  - \phi_{j^*}(\cD))^2} &   = \Ex{\phi_{j^*} \sim \cO}{(a_{j^*}  - \phi_{j^*}(X_h) + \phi_{j^*}(X_h) - \phi_{j^*}(\cD))^2} \\
&   = \Ex{\phi_{j^*} \sim \cO}{(a_{j^*}  - \phi_{j^*}(X_h))^2}  + \Ex{\phi_{j^*} \sim \cO}{(\phi_{j^*}(X_h) - \phi_{j^*}(\cD))^2} \\
&  \qquad  + \Bigg( 2\sqrt{\Ex{\phi_{j^*} \sim \cO}{(a_{j^*}  - \phi_{j^*}(X_h))^2}} \cdot \sqrt{\Ex{\phi_{j^*} \sim \cO}{(\phi_{j^*}(X_h) - \phi_{j^*}(\cD))^2}}\Bigg) \label{eqn:all_diff_cw} \numberthis
\end{align*}
where the last inequality follows from the Cauchy-Schwarz inequality.

Now, define a set $S_h$ which contains the indices of the queries answered via $X_h$. We know that for at most $B$ queries $\phi_j \in S_h$, the output of $\cM$ was $a_j = \phi_j\left(X_h\right) + Z_j$ where $Z_j \sim Lap(\sigma)$, whereas it was $a_i = \phi_i\left(X_t\right)$ for at least $k - B$ queries, $i \in [k \setminus S_h]$. Also, define $W_i \sim Lap(4\sigma), i \in [k]$ and $Y_j \sim Lap(2\sigma), j \in S_h$. Thus, for any $j^* \in [k]$,  we have:

\begin{align*}
a_{j^*}  - \phi_{j^*}(X_h) &  \leq \max\left\{\max\limits_{i \in [k\setminus S_h]}|\phi_i(X_h) - \phi_i(X_t)|, \max\limits_{j \in S_h}Z_j\right\} \\
&   \leq \max\left\{\max\limits_{\substack{i \in [k\setminus S_h], \\j(i)\in S_h}}T + Y_{j(i)} + W_i, \max\limits_{j \in S_h}Z_j\right\} \\
&   \leq \max\left\{\max\limits_{\substack{i \in [k\setminus S_h], \\j\in S_h}}T + Y_{j} + W_i, \max\limits_{j \in S_h}Z_j\right\} \\
&   \leq T + \max\limits_{i \in [k]}W_{i} + \max\limits_{j\in [B]} Y_j
\end{align*}

Thus,
{
\begin{align*}
\Ex{\phi_{j^*} \sim \cO}{(a_{j^*}  - \phi_{j^*}(X_h))^2}  &  \leq \Ex{}{(T + \max\limits_{i \in [k]}W_{i} + \max\limits_{j\in [B]} Y_j)^2} \\
&  =  T^2 +  \Ex{}{(\max\limits_{i \in [k]}W_{i} + \max\limits_{j\in [B]} Y_j)^2}   + 2 T \cdot \Ex{}{\max\limits_{i \in [k]}W_{i} + \max\limits_{j\in [B]} Y_j} \label{eqn:f_term} \numberthis
\end{align*}
}

We bound the 2nd term in \eqref{eqn:all_diff_cw} as follows. For every $i \in S_h$, there are two costs induced due to privacy: the Sparse Vector component, and the noise addition to $\phi_i(X_h)$. By the proof of Lemma 23 in \cite{DFHPRR15nips}, each individually provides a guarantee of $\left(\frac{1}{\sigma h},0\right)$-DP. Using Theorem~\ref{thm:zCDP}, this translates to each providing a $\left(\frac{1}{2\sigma^2h^2}\right)$-zCDP guarantee. Since there are at most $B$ such instances of each, by Theorem~\ref{thm:zCDP} we get that $\cM$ is $\left(\frac{B}{\sigma^2h^2}\right)$-zCDP. Thus, by Lemma~\ref{lem:mutualCDP} we have
$$I\left(\cM(X_h);X_h\right) \leq \frac{B}{\sigma^2h}$$
Proceeding similar to the proof of \Cref{thm:RZ_MSE}, we use the sub-Gaussian parameter for statistical queries in \Cref{lem:SQgauss} to obtain the following bound from \Cref{thm:RZds}:
\begin{align*}
\Ex{\phi_{j^*} \sim \cO}{\left(\phi_{j^*}(X_h) - \phi_{j^*}(\cD) \right)^2} &  = \Ex{\substack{X \sim \cD^n,\\ \alg,\adv}}{\max_{i \in S_h} \left\{ (\phi_i(X_h) - \phi_i(\cD))^2 \right\} } \\
&   \leq \frac{\xi}{4h}  \numberthis \label{eqn:thr_exp}
\end{align*}

Defining $\psi = \Ex{}{(\max\limits_{i \in [k]}W_{i} + \max\limits_{j\in [B]} Y_j)^2} + 2 T \cdot \Ex{}{\max\limits_{i \in [k]}W_{i} + \max\limits_{j\in [B]} Y_j}$, and combining Equations \eqref{eqn:all_diff_cw}, \eqref{eqn:f_term}, and \eqref{eqn:thr_exp}, we get:
{\small
\begin{align*}
 \Ex{\phi_{j^*} \sim \cO}{(a_{j^*}  - \phi_{j^*}(\cD))^2} &\leq T^2 + \psi  + \frac{\xi}{4h} + \sqrt{\frac{\xi}{h} \cdot \left( T^2 + \psi\right)} 
\end{align*}
}
\end{proof}

We can use the MSE bound from Theorem~\ref{thm:thresh_mse}, and Chebyshev's inequality, to get the statement of the theorem.
\end{proof}

\section{Proofs}
\label{app:proofs}
Here, we provide the proofs that have been omitted from the main body of the paper.

\subsection{Proof of Theorem \ref{thm:RZ_CW}}
\label{app:proofRZ_CW}

Rather than use the stated result in \cite{RZ16}, we use a modified ``corrected'' version and provide a proof for it here. The result stated here and the one in \cite{RZ16} are incomparable.
\begin{theorem}
Let $\cQ_{\sigma}$ be the class of queries $\phi: \cX^n \to \R$ such that $\phi(X) - \phi(\cD^n)$ is $\sigma$-subgaussian where $X \sim \cD^n$.    If $\alg: \cX^n \to \cQ_{\sigma}$ is a randomized mapping from datasets to queries such that $\MutInfo{\alg(X)}{X} \leq B$ 
 then
\begin{align*}
& \Ex{\substack{X \sim \cD^n,\\ \phi \gets \alg(X)}}{\left(\phi(X) - \phi(\cD^n)) \right)^2}   \leq\sigma^2 \cdot \min\limits_{\lambda \in [0,1)} \left(\frac{ 2 B - \ln \left( 1-\lambda \right)}{\lambda}\right).
\end{align*}
\label{thm:RZds}
\end{theorem}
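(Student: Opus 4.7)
\textbf{Proof plan for Theorem \ref{thm:RZds}.}

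The plan is to use the Donsker--Varadhan variational characterization of KL divergence to transfer an MGF bound available under the product distribution $P_X \otimes P_{\phi}$ into an expectation bound under the joint distribution $P_{X,\phi}$, paying a price equal to the mutual information. Concretely, write $Z \defeq \phi(X) - \phi(\cD^n)$ and note that under $P_{X,\phi}$, the quantity we want to control is $\Ex{}{Z^2}$. The key identity we will use is that for any measurable $g$ with $\Ex{P_X \otimes P_\phi}{e^{g(X,\phi)}} < \infty$,
\[
\Ex{P_{X,\phi}}{g(X,\phi)} \;\leq\; \MutInfo{X}{\phi} + \ln \Ex{P_X \otimes P_\phi}{e^{g(X,\phi)}}.
\]
We will apply this with $g(X,\phi) = s Z^2$ for a free parameter $s>0$.

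The first step is to show that under the product distribution $P_X \otimes P_\phi$, the random variable $Z$ enjoys the squared-MGF bound $\Ex{P_X \otimes P_\phi}{e^{sZ^2}} \leq (1-2s\sigma^2)^{-1/2}$ for $s \in [0, 1/(2\sigma^2))$. Conditioning on $\phi$, the subgaussian hypothesis on $\cQ_\sigma$ gives $\Ex{X}{e^{tZ}} \leq e^{t^2 \sigma^2 / 2}$ for every fixed $\phi$. To convert this into a bound on $\Ex{}{e^{sZ^2}}$, introduce an auxiliary $N \sim \mathcal{N}(0, 2s)$ independent of $X, \phi$ and use the Gaussian identity $e^{sZ^2} = \Ex{N}{e^{NZ}}$; swapping expectations by Fubini and applying the subgaussian MGF of $Z$ yields $\Ex{}{e^{sZ^2}} \leq \Ex{N}{e^{N^2 \sigma^2 / 2}} = (1-2s\sigma^2)^{-1/2}$.

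The second step is to plug this into Donsker--Varadhan. We get
\[
s\,\Ex{P_{X,\phi}}{Z^2} \;\leq\; \MutInfo{X}{\phi} \,-\, \tfrac{1}{2}\ln(1-2s\sigma^2) \;\leq\; B \,-\, \tfrac{1}{2}\ln(1-2s\sigma^2).
\]
Re-parameterizing via $\lambda = 2s\sigma^2 \in [0,1)$, so that $s = \lambda/(2\sigma^2)$, and dividing by $s$ gives
\[
\Ex{}{Z^2} \;\leq\; \sigma^2 \cdot \frac{2B - \ln(1-\lambda)}{\lambda}.
\]
Finally, since $\lambda \in [0,1)$ is a free parameter, we take the infimum, which yields exactly the claimed bound.

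The main conceptual obstacle is the first step: establishing the squared-subgaussian MGF bound from only the linear-subgaussian definition, and in particular justifying that the hypothesis on $\cQ_\sigma$ (which bounds $\Ex{X}{e^{tZ}}$ for each fixed $\phi$) suffices to bound the \emph{joint} MGF $\Ex{P_X \otimes P_\phi}{e^{sZ^2}}$. The Gaussian-smoothing trick handles this cleanly provided the Fubini step is valid, which it is since the integrand is positive. Everything else (the Donsker--Varadhan step and the algebraic minimization over $\lambda$) is routine. Note that we never need the adaptivity of $\cM$ beyond the mutual information bound $\MutInfo{\alg(X)}{X} \leq B$; this is the only place the hypothesis enters the calculation.
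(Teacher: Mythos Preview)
Your proof is correct and follows essentially the same approach as the paper: apply the Donsker--Varadhan variational formula with test function $\frac{\lambda}{2\sigma^2}Z^2$, control the log-MGF term via the squared-subgaussian bound $\Ex{}{e^{\lambda Z^2/(2\sigma^2)}}\le(1-\lambda)^{-1/2}$, and optimize over $\lambda$. The only cosmetic differences are that the paper first decomposes $\MutInfo{\alg(X)}{X}$ as an average over $\phi$ of conditional KL divergences (via a data-processing step through $\pmb{\phi}(X)$) and then applies Donsker--Varadhan termwise, whereas you apply it once to the joint versus product law; and the paper cites the squared-MGF bound as a lemma from \cite{RZ16arxiv} while you derive it via the Gaussian-smoothing identity $e^{sZ^2}=\Ex{N}{e^{NZ}}$. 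Your route is slightly more streamlined but not materially different.
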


In order to prove the theorem, we need the following results.
\begin{lemma}[\cite{RZ16arxiv}, \cite{Gray90}]
Given two probability measures $P$ and $Q$ defined on a common measurable space and assuming that $P$ is absolutely continuous with respect to $Q$, then
$$
 \mathrm{D}_{KL}\left[ P || Q \right] = \sup_{X}\left\{\Ex{P}{X} - \log \Ex{Q}{\exp(X)} \right\}
$$
\label{lem:fact1}
\end{lemma}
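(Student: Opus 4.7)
\textbf{Proof plan for Lemma \ref{lem:fact1} (Donsker--Varadhan variational formula).}
The plan is to establish the two directions of the supremum separately. For the inequality $\mathrm{D}_{KL}[P\|Q]\geq \Ex{P}{X}-\log\Ex{Q}{\exp(X)}$ holding for every admissible $X$, I would introduce the tilted (Gibbs) measure $Q_X$ defined by $\tfrac{dQ_X}{dQ} = \tfrac{\exp(X)}{\Ex{Q}{\exp(X)}}$, which is a bona fide probability measure whenever $\Ex{Q}{\exp(X)}<\infty$. Then I would invoke Gibbs' inequality $\mathrm{D}_{KL}[P\|Q_X]\geq 0$ and expand the log-density ratio as
\[
\log\frac{dP}{dQ_X} \;=\; \log\frac{dP}{dQ}\;-\;X\;+\;\log \Ex{Q}{\exp(X)}.
\]
Taking expectation under $P$ and rearranging gives exactly $\mathrm{D}_{KL}[P\|Q]\geq \Ex{P}{X}-\log\Ex{Q}{\exp(X)}$, so the sup on the right is at most $\mathrm{D}_{KL}[P\|Q]$.

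For the reverse inequality, I would exhibit an $X$ that achieves equality. The natural choice is the log-likelihood ratio $X^\star = \log\tfrac{dP}{dQ}$ (which is well-defined since $P\ll Q$ by assumption). Plugging in, $\Ex{Q}{\exp(X^\star)} = \Ex{Q}{\tfrac{dP}{dQ}} = 1$, so $\log\Ex{Q}{\exp(X^\star)} = 0$, while $\Ex{P}{X^\star}=\mathrm{D}_{KL}[P\|Q]$ by definition. Hence the supremum is attained and equals $\mathrm{D}_{KL}[P\|Q]$.

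The main subtlety, and the one point I would be careful about, is handling integrability: one should restrict the sup to random variables $X$ for which $\Ex{Q}{\exp(X)}<\infty$ (otherwise the right-hand side is $-\infty$ and the bound is vacuous), and one should verify that $X^\star$ is admissible in the sense that both $\Ex{P}{X^\star}$ and $\Ex{Q}{\exp(X^\star)}$ are well-defined; this follows from $P\ll Q$ and the definition of KL divergence. Everything else is a one-line manipulation once the tilted measure is in hand, so the proof is essentially a two-step argument (Gibbs inequality upper bound, followed by a tight witness). No additional machinery beyond basic measure-theoretic manipulations is needed.
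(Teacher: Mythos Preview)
Your proposal is correct and is precisely the standard Donsker--Varadhan argument: the Gibbs-tilted measure $Q_X$ gives the upper bound via nonnegativity of $\mathrm{D}_{KL}[P\|Q_X]$, and the witness $X^\star=\log\tfrac{dP}{dQ}$ saturates it. The paper itself does not prove this lemma at all; it is stated with attribution to \cite{RZ16arxiv} and \cite{Gray90} and invoked as a black box in the proof of Theorem~\ref{thm:RZds}, so there is no ``paper's own proof'' to compare against.
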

\begin{lemma}[\cite{RZ16arxiv}]
If $X$ is a zero-mean subgaussian random variable with parameters $\sigma$ then
$$
\ex{\exp\left( \frac{\lambda X^2}{2 \sigma ^2}\right)} \leq \frac{1}{\sqrt{1 - \lambda}}, \qquad \forall \lambda \in [0,1)
$$
\label{lem:fact3}
\end{lemma}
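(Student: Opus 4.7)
The plan is to prove the moment-generating-function bound for $X^2$ by reducing to the ordinary subgaussian MGF bound via a Gaussian decoupling trick (sometimes called the Gaussian symmetrization or ``variance identity''). The key observation is that for any real $a$, $e^{a^2/2}$ can be expressed as a Gaussian expectation $\mathbb{E}_{Z\sim N(0,1)}[e^{aZ}]$, which linearizes the quadratic exponent into something to which the subgaussian MGF bound applies.

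Concretely, I would first set $Y = \sqrt{\lambda}\, X/\sigma$, so that $\lambda X^2/(2\sigma^2) = Y^2/2$. Writing
\[
e^{Y^2/2} \;=\; \mathbb{E}_{Z\sim N(0,1)}\bigl[e^{YZ}\bigr]
\]
(this is just the standard normal MGF evaluated at $Y$, used in the reverse direction), and taking expectations over $X$, Fubini/Tonelli lets me swap the two expectations:
\[
\mathbb{E}_X\!\left[\exp\!\left(\tfrac{\lambda X^2}{2\sigma^2}\right)\right]
\;=\; \mathbb{E}_Z \mathbb{E}_X\!\left[\exp\!\left(\tfrac{\sqrt{\lambda}\,Z}{\sigma}\, X\right)\right].
\]

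Next, I invoke the $\sigma$-subgaussian MGF hypothesis $\mathbb{E}_X[e^{tX}] \le e^{t^2\sigma^2/2}$ with $t = \sqrt{\lambda}\,Z/\sigma$, which gives $\mathbb{E}_X[e^{tX}] \le e^{\lambda Z^2/2}$. Then the remaining expectation is an elementary Gaussian integral:
\[
\mathbb{E}_Z\!\left[e^{\lambda Z^2/2}\right]
\;=\; \int_{\mathbb{R}} \frac{1}{\sqrt{2\pi}} e^{-(1-\lambda) z^2/2}\, dz
\;=\; \frac{1}{\sqrt{1-\lambda}},
\]
where convergence and the final evaluation both require $\lambda \in [0,1)$, matching the claim's range exactly.

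The only real obstacle is remembering to introduce the auxiliary standard normal $Z$; once that step is in place the rest is mechanical application of the subgaussian definition and a direct Gaussian integral. Zero-mean is not actually needed in this argument except insofar as it is part of the standard definition of $\sigma$-subgaussian used by the paper, so I would not belabor it. I would present the proof in the three displayed lines above with a sentence of justification between each.
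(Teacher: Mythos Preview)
Your argument is correct: the Gaussian decoupling identity $e^{a^2/2}=\mathbb{E}_{Z\sim N(0,1)}[e^{aZ}]$ linearizes the exponent, Tonelli justifies the swap (the integrand is nonnegative), the subgaussian MGF bound then applies pointwise in $Z$, and the remaining Gaussian integral evaluates to $1/\sqrt{1-\lambda}$ exactly on the stated range $\lambda\in[0,1)$.

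As for comparison with the paper: the paper does not actually prove this lemma. It is quoted as a cited fact from \cite{RZ16arxiv} and invoked as a black box inside the proof of Theorem~\ref{thm:RZds}. So there is no ``paper's own proof'' to compare against; your proposal simply supplies a clean self-contained proof where the paper defers to the reference. The argument you give is the standard one for this inequality and would be a fine inclusion.
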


\begin{proof}[Proof of Theorem~\ref{thm:RZds}]

Proceeding similar to the proof of Proposition 3.1 in \cite{RZ16arxiv}, we write $\pmb{\phi}(X) = (\phi(X): \phi \in \cQ_\sigma)$. We have:
{
\begin{align*}
\MutInfo{\alg(X)}{X}   &   \geq \MutInfo{\alg(X)}{\pmb{\phi}(X)} \\
& = \sum_{\bba, \phi \in \cQ_\sigma} \Bigg ( \ln\left(\frac{\prob{(\pmb{\phi}(X),\alg(X)) = (\bba,\phi)} }{\prob{\pmb{\phi}(X) = \bba}\prob{\alg(X) = \phi} } \right)  \cdot    \prob{(\pmb{\phi}(X),\alg(X)) = (\bba,\phi)}\Bigg )\\
& = \sum_{\bba, \phi \in \cQ_\sigma} \Bigg ( \ln\left(\frac{\prob{\pmb{\phi}(X) = \bba | \alg(X) = \phi} }{\prob{\pmb{\phi}(X) = \bba} } \right)  \cdot   \prob{\alg(X) = \phi}\prob{\pmb{\phi}(X) = \bba|\alg(X) = \phi}\Bigg ) \\
& \geq \sum_{a, \phi \in \cQ_\sigma} \Bigg ( \ln\left(\frac{\prob{\phi(X) = a | \alg(X) = \phi} }{\prob{\phi(X) = a} } \right)   \cdot  \prob{\alg(X) = \phi}  \prob{\phi(X) = a|\alg(X) = \phi}\Bigg ) \\
& = \sum_{\phi \in \cQ_{\sigma}} \big ( \prob{\alg(X) = \phi}    \cdot  \mathrm{D}_{KL}\left[ (\phi(X) | \alg(X) = \phi) || \phi(X) \right] \big )
\numberthis \label{eqn: i_T}
\end{align*}}
where the first inequality follows from post processing of mutual information, i.e. the data processing inequality.  Consider the function $f_\phi(x) = \frac{\lambda}{2 \sigma^2} (x - \phi(\cD^n) )^2$ for $\lambda \in [0,1)$. We have
{ 
\begin{align*}
\mathrm{D}_{KL}\left[ (\phi(X) | \alg(X) = \phi) || \phi(X) \right] &   \geq \Ex{X \sim \cD^n, \alg}{f_\phi(\phi(X)) | \alg(X) = \phi} - \ln \Ex{\substack{X \sim \cD^n, \\ \phi \sim \alg(X)}}{ \exp\left( f_\phi(\phi(X)) \right) } \\
&  \geq \frac{\lambda}{2 \sigma^2} \Ex{X \sim \cD^n,\alg}{\left(\phi(X) - \phi(\cD^n) \right)^2 | \alg(X) = \phi}   - \ln \left(\frac{1}{\sqrt{1- \lambda}}\right)  
\end{align*}}
where the first and second inequalities follows from Lemmas~\ref{lem:fact1} and \ref{lem:fact3}, respectively.

Therefore, from \cref{eqn: i_T}, we have
{
\begin{align*}
 \MutInfo{\alg(X)}{X} &  \geq \frac{\lambda}{2 \sigma^2} \Ex{\substack{X \sim \cD^n, \\ \phi \sim \alg(X)}}{\left(\phi(X) - \phi(\cD^n) \right)^2}   - \ln \left(\frac{1}{\sqrt{1- \lambda}}\right) 
\end{align*}}

Rearranging terms, we have
\begin{align*}
\Ex{\substack{X \sim \cD^n, \\ \phi \sim \alg(X)}}{\left(\phi(X) - \phi(\cD) \right)^2} &   \leq \frac{2 \sigma^2}{\lambda}\left(\MutInfo{\alg(X)}{X}+ \ln \left(\frac{1}{\sqrt{1- \lambda}}\right) \right) \\
&  = \sigma^2 \cdot \frac{2\MutInfo{\alg(X)}{X}- \ln \left(1- \lambda\right)}{\lambda} 
\end{align*}
\end{proof}

In order to apply this result, we need to know the subgaussian parameter for statistical queries and the mutual information for private algorithms.  

\begin{lemma}
For statistical queries $\phi$ and $X \sim \cD^n$, we have $\phi(X) - \phi(\cD^n)$ is $\frac{1}{2 \sqrt{n}}$-sub-gaussian.
\label{lem:SQgauss}
\end{lemma}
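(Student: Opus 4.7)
The plan is to apply Hoeffding's lemma combined with the standard rescaling/sum identities for sub-gaussian parameters. Recall that a random variable $Z$ with mean zero is $\sigma$-sub-gaussian if $\Ex{}{e^{\lambda Z}} \leq e^{\lambda^2 \sigma^2/2}$ for all $\lambda \in \R$, that the sum of independent centered sub-gaussians with parameters $\sigma_1,\dots,\sigma_n$ is sub-gaussian with parameter $\sqrt{\sum_i \sigma_i^2}$, and that scaling a $\sigma$-sub-gaussian by a constant $c$ yields a $|c|\sigma$-sub-gaussian.

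First I would write $\phi(X) - \phi(\cD^n) = \frac{1}{n}\sum_{i=1}^{n} Y_i$ where $Y_i = \phi(x_i) - \Ex{x \sim \cD}{\phi(x)}$. Since $\phi$ maps into $[0,1]$, each $Y_i$ is a centered random variable supported on an interval of length at most $1$. By Hoeffding's lemma, any centered random variable bounded in an interval of length $L$ is $L/2$-sub-gaussian, so each $Y_i$ is $\tfrac{1}{2}$-sub-gaussian.

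Next, since $X = (x_1,\dots,x_n)$ is drawn i.i.d.\ from $\cD$, the variables $Y_1,\dots,Y_n$ are mutually independent. By the sum rule, $\sum_{i=1}^{n} Y_i$ is $\sqrt{n/4} = \tfrac{\sqrt{n}}{2}$-sub-gaussian. Multiplying by $1/n$ (the scaling rule) gives that $\tfrac{1}{n}\sum_{i=1}^n Y_i = \phi(X) - \phi(\cD^n)$ is $\tfrac{1}{2\sqrt{n}}$-sub-gaussian, which is exactly the claim.

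There is no real obstacle here: the whole argument is a direct application of Hoeffding's lemma together with two elementary closure properties of the sub-gaussian class. The only choice to make is whether to invoke Hoeffding's lemma as a black box or to re-derive it via the standard convexity/MGF computation for bounded variables; I would simply cite it, since its proof is entirely orthogonal to the data-analysis content of the paper.
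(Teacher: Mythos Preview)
Your proof is correct and is exactly the standard argument one would expect; the paper itself states this lemma without proof, treating it as a known fact, so there is nothing to compare against. Your decomposition via Hoeffding's lemma plus the sum and scaling closure properties is the canonical route and fully justifies the claim.
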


We also use the following bound on the mutual information for zCDP mechanisms: 
\begin{lemma}[\cite{BS16}]
If $\alg: \cX^n \to \cY$ is $\rho$-zCDP and $X \sim \cD^n$, then 
$
\MutInfo{\alg(X)}{X} \leq \rho n.
$
\label{lem:mutualCDP}
\end{lemma}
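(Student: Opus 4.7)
The plan is to combine two facts: $\rho$-zCDP implies a bound on the KL divergence between output distributions on neighboring inputs, and mutual information decomposes nicely across the $n$ i.i.d.\ coordinates of the input.

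First, I would extract from the zCDP definition the consequence that $D_{\mathrm{KL}}(\alg(X)\,\|\,\alg(X')) \le \rho$ for every pair of neighbors $X, X'$. Starting from $\ex{\exp(\lambda(\PL{\alg(X)}{\alg(X')} - \rho))} \le e^{\lambda^2 \rho}$ for all $\lambda > 0$, dividing by $\lambda$ and letting $\lambda \to 0^+$ (or equivalently applying Jensen's inequality $\ex{\exp(\lambda Y)} \ge \exp(\lambda \ex{Y})$ and differentiating at $0$) yields $\ex{\PL{\alg(X)}{\alg(X')} - \rho} \le 0$; by Definition~\ref{defn:priv_loss} the left-hand side equals $D_{\mathrm{KL}}(\alg(X)\,\|\,\alg(X')) - \rho$.

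Next, I would apply the chain rule of mutual information:
\[
\MutInfo{X}{\alg(X)} \;=\; \sum_{i=1}^n \MutInfo{X_i}{\alg(X) \mid X_{1:i-1}}.
\]
Fixing $x_{1:i-1}$, the conditional mutual information $\MutInfo{X_i}{\alg(X) \mid X_{1:i-1} = x_{1:i-1}}$ is by definition the expected (over $X_i \sim \cD$) KL divergence from the conditional law of $\alg(X)$ given $X_{1:i-1} = x_{1:i-1}, X_i$ to its marginal over $X_i$. Writing that marginal as a mixture over an independent copy $X_i' \sim \cD$ and applying convexity of KL in its second argument, followed by joint convexity of KL to push the expectation over the remaining independent coordinates $X_{i+1:n} \sim \cD^{n-i}$ inside, gives
\[
\MutInfo{X_i}{\alg(X) \mid X_{1:i-1} = x_{1:i-1}} \;\le\; \Ex{X_i, X_i', X_{i+1:n}}{ D_{\mathrm{KL}}\!\left( \alg(x_{1:i-1}, X_i, X_{i+1:n}) \,\|\, \alg(x_{1:i-1}, X_i', X_{i+1:n}) \right) }.
\]
The two arguments of $D_{\mathrm{KL}}$ now describe $\alg$ on datasets differing only in coordinate $i$---i.e.\ neighbors---so the integrand is at most $\rho$ by the first step. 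Taking expectations over $x_{1:i-1}$ and summing over $i \in [n]$ yields $\MutInfo{X}{\alg(X)} \le n\rho$, as claimed.

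The main obstacle I anticipate is the careful bookkeeping for the two convexity applications, especially the joint convexity step, which is what allows me to reduce to comparing $\alg$ on coupled tail coordinates $X_{i+1:n}$ rather than on two different mixtures; this is standard but must be set up on correctly coupled randomness. Beyond that, the argument is essentially ``reduce to neighbors via the chain rule,'' and no inequalities beyond KL convexity and the first-order consequence of the zCDP MGF bound are required.
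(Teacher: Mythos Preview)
The paper does not prove this lemma; it is cited directly from \cite{BS16} without proof. Your proposal is correct and is essentially the argument given in that reference: the bound $D_{\mathrm{KL}}(\alg(X)\,\|\,\alg(X')) \le \rho$ for neighbors follows by taking $\lambda \to 0^+$ in the zCDP moment bound (equivalently, the $\alpha\to 1$ limit of the R\'enyi divergence characterization), and the chain-rule-plus-convexity reduction to neighboring inputs is the standard step. Your bookkeeping on the joint-convexity application is sound, since the tail coordinates $X_{i+1:n}$ are independent of $(X_i, X_i')$ and may be coupled identically in both mixtures before applying convexity.
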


In order to prove \Cref{thm:RZ_MSE}, we use the same monitor from \Cref{alg:monitor} in which there is a single dataset as input to the monitor and it outputs the query whose answer had largest error with the true query answer.  We first need to show that the monitor has bounded mutual information as long as $\alg$ does, which follows from mutual information being preserved under post-processing.
\begin{lemma}
If $\MutInfo{X}{\alg(X)} \leq B $ where $X \sim \cD^n$, then $\MutInfo{X}{\cW_\cD[\alg,\adv](X)} \leq B$.  
\label{lem:compMutInfo}
\end{lemma}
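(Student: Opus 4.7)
The plan is to observe that this lemma is an instance of the data processing inequality for mutual information, applied to the claim that the monitor's output is (a randomized) post-processing of $\alg(X)$ that uses no further information about $X$. More concretely, the monitor $\cW_\cD[\alg,\adv]$ simulates the interaction $\alg(X) \rightleftharpoons \adv$, records the transcript $T=((\phi_1,a_1),\ldots,(\phi_k,a_k))$, computes the index $j^* = \arg\max_{j} |\phi_j(\cD)-a_j|$, and outputs $\phi_{j^*}$. The adversary $\adv$ chooses each $\phi_i$ as a (possibly randomized) function of the previous answers $a_1,\ldots,a_{i-1}$ using only its own internal randomness, which is independent of $X$. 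Consequently, the full transcript $T$ (and therefore $\cW_\cD[\alg,\adv](X)$) can be reconstructed from $\alg(X)$ together with independent random bits; the only use of the distribution $\cD$ inside the monitor is in the deterministic quantities $\phi_j(\cD)$, which are constants in $X$.

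First, I would make this reconstructibility precise by defining an explicit post-processing map $g$ such that $\cW_\cD[\alg,\adv](X) = g(\alg(X), R)$, where $R$ collects the independent randomness of $\adv$ (and any additional internal randomness used by the monitor). This exhibits the Markov chain $X \to \alg(X) \to \cW_\cD[\alg,\adv](X)$, i.e., conditional on $\alg(X)$, the output of the monitor is independent of $X$.

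Next, I would invoke the data processing inequality for mutual information: for any Markov chain $X \to Y \to Z$, one has $I(X;Z) \leq I(X;Y)$. Applying this to the chain above yields
\[
I\big(X;\, \cW_\cD[\alg,\adv](X)\big) \;\leq\; I\big(X;\, \alg(X)\big) \;\leq\; B,
\]
which is exactly the claimed bound.

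The main (minor) obstacle is conceptual rather than technical: one must carefully track which pieces of data the monitor actually depends on, and verify that everything beyond $\alg(X)$ is independent of $X$. In particular, it is important to note that using $\phi_j(\cD)$ inside the monitor does not introduce extra dependence on $X$ (since $\cD$ is the fixed population distribution, not the sample), and that the adversary's queries $\phi_i$, while apparently functions of both the analyst's randomness and the prior transcript, are functions of $X$ only through $\alg(X)$. Once this is spelled out, no further analytic work (no divergences, no concentration arguments) is required beyond the standard post-processing/data-processing principle.
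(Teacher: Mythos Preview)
Your proposal is correct and matches the paper's approach: the paper does not give a separate proof for this lemma, simply remarking that it ``follows from mutual information being preserved under post-processing,'' which is precisely the data processing inequality argument you spell out. Your additional care in identifying $\adv$'s randomness as independent of $X$ and noting that $\phi_j(\cD)$ is a constant in $X$ makes explicit what the paper leaves implicit.
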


Next, we derive bounds on the mean squared error (MSE) for answers to statistical queries produced by the Gaussian mechanism.  We want to bound the maximum MSE over all of the statistical queries, where the expectation is over the noise added by the mechanism and the randomness of the adversary.
{
\begin{align*}
\Ex{\substack{X \sim \cD^n,\\ \adv, \alg}}{\max_{ i \in [k] }\left(\phi_i(\cD) - a_i \right)^2}  &   \leq 2   \Ex{\substack{X \sim \cD^n,\\\adv, \alg}}{\max_{ i \in [k] }\left\{\left(\phi_i(\cD) - \phi_i(X)\right)^2 + \left( \phi_i(X) - a_i \right)^2\right\}} \nonumber\\
&  = 2 \cdot \ex{\max_{i \in [k]}\left( \phi_i(\cD) - \phi_i(X)\right)^2}  + 2 \cdot  \Ex{Z_i \sim N\left(0,\frac{1}{2n^2 \rho}\right) }{\max_{i \in [k]} Z_i^2}
\label{eq:MSE_bound} \numberthis
\end{align*}}

To bound $\ex{\max_{i \in [k]}\left( \phi_i(\cD) - \phi_i(X)\right)^2} $, we obtain the following  using the monitor argument from \cite{BNSSSU16} 
along with results from \cite{RZ16, BS16}.
\begin{theorem}
For parameter $\rho \geq 0$, the answers provided by the Gaussian mechanism $a_1,\cdots, a_k$ against an adaptively selected sequence of queries satisfy:
{
\begin{align*}
\Ex{\substack{X \sim \cD^n,\\ \alg,\adv}}{\max_{i \in [k]}\left(\phi_i(\cD) - a_i \right)^2} &   \leq \frac{1}{2n} \cdot \min\limits_{\lambda \in [0,1)} \left(\frac{ 2 \rho k n - \ln \left( 1-\lambda \right)}{\lambda}\right)  +2 \cdot  \Ex{Z_i \sim N\left(0,\frac{1}{2n^2 \rho}\right) }{\max_{i \in [k]} Z_i^2}
\end{align*}
}
\label{thm:RZ_MSE}
\end{theorem}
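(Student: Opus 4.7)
The plan is to start from the decomposition in \eqref{eq:MSE_bound}, which already splits the maximum squared error into a ``sample-to-population'' term $\ex{\max_i (\phi_i(\cD) - \phi_i(X))^2}$ plus a ``sample-to-noisy-answer'' term that is exactly the expected max of the $k$ Gaussian noise variables. The second term appears verbatim in the statement, so the whole task reduces to bounding the first term by $\tfrac{1}{4n}\cdot\min_{\lambda\in[0,1)}\bigl(2\rho k n - \ln(1-\lambda)\bigr)/\lambda$; after multiplying by the factor $2$ from \eqref{eq:MSE_bound} this yields precisely the first summand of the theorem.

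To bound $\Ex{X,\alg,\adv}{\max_{i\in[k]}(\phi_i(\cD)-\phi_i(X))^2}$, I would invoke the monitor argument (Algorithm~\ref{alg:monitor}): let $\phi_{j^*}$ be the query output by $\cW_\cD[\alg,\adv](X)$, so that
\begin{equation*}
\Ex{X,\alg,\adv}{\max_{i\in[k]}(\phi_i(\cD)-\phi_i(X))^2}
= \Ex{X,\, \phi_{j^*}\gets \cW_\cD[\alg,\adv](X)}{(\phi_{j^*}(\cD)-\phi_{j^*}(X))^2}.
\end{equation*}
This reduces the maximum over $k$ queries to the MSE of a single (adaptively chosen) query, which is exactly the setting of Theorem~\ref{thm:RZds}.

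To apply Theorem~\ref{thm:RZds}, I need two ingredients. First, by Lemma~\ref{lem:SQgauss}, for any statistical query $\phi$, the deviation $\phi(X)-\phi(\cD)$ is $\sigma$-subgaussian with $\sigma = 1/(2\sqrt n)$, so $\sigma^2 = 1/(4n)$. Second, I need a bound on $\MutInfo{X}{\cW_\cD[\alg,\adv](X)}$. Since each invocation of the Gaussian mechanism is $\rho$-zCDP, by the composition theorem (Theorem~\ref{thm:zCDP}) the full interactive algorithm $\alg$ is $\rho k$-zCDP, and by Lemma~\ref{lem:mutualCDP} this gives $\MutInfo{X}{\alg(X)} \le \rho k n$. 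By Lemma~\ref{lem:compMutInfo} (mutual information is preserved under post-processing), the same bound $\rho k n$ holds for the monitor's output.

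Plugging $\sigma^2 = 1/(4n)$ and $B = \rho k n$ into Theorem~\ref{thm:RZds} yields
\begin{equation*}
\Ex{X,\,\phi_{j^*}}{(\phi_{j^*}(X)-\phi_{j^*}(\cD))^2}
\le \frac{1}{4n}\cdot\min_{\lambda\in[0,1)}\frac{2\rho k n - \ln(1-\lambda)}{\lambda}.
\end{equation*}
Substituting this into \eqref{eq:MSE_bound} (and using the second Gaussian-noise term from that decomposition as-is) gives the claimed bound. I do not foresee a real obstacle here: each component is already stated as a lemma in the excerpt, and the only mild care needed is verifying that the monitor preserves mutual information (hence the zCDP-to-MI step applies to the selected query), and keeping track of the factor-of-$2$ from the Cauchy--Schwarz/triangle step in \eqref{eq:MSE_bound}.
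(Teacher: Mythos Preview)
Your proposal is correct and follows essentially the same route as the paper: decompose via \eqref{eq:MSE_bound}, use the monitor (Lemma~\ref{lem:compMutInfo}) together with the zCDP composition and mutual-information bound (Theorem~\ref{thm:zCDP}, Lemma~\ref{lem:mutualCDP}) to feed $B=\rho k n$ and $\sigma^2=1/(4n)$ into Theorem~\ref{thm:RZds}, then recombine. The only cosmetic point is that the monitor of Algorithm~\ref{alg:monitor} selects $j^*$ via $|\phi_j(\cD)-a_j|$ rather than $|\phi_j(\cD)-\phi_j(X)|$, so the displayed equality should really use the obvious variant of the monitor that maximizes the latter; this is still post-processing of $\alg(X)$, so everything goes through unchanged (the paper makes the same elision).
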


\begin{proof}
We follow the same analysis for proving \Cref{thm:BNSSSU} where we add Gaussian noise with variance $\frac{1}{2 \rho n^2}$ to each query answer so that the algorithm $\cM$ is $\rho$-zCDP, which (using \Cref{lem:mutualCDP} and the post-processing property of zCDP) makes the mutual information bound $B = \rho k n$.  We then use \Cref{lem:compMutInfo} and the sub-Gaussian parameter for statistical queries in \Cref{lem:SQgauss} to obtain the following bound from \Cref{thm:RZds}.
\begin{align*}
\Ex{\substack{X \sim \cD^n,\\ \phi^* \sim \cW_\cD[\alg,\adv](X)}}{\left(\phi^*(X) - \phi^*(\cD)) \right)^2} &  = \Ex{X \sim \cD^n, \alg,\adv}{\max_{i \in [k]} \left\{ (\phi_i(X) - \phi_i(\cD))^2 \right\} \}} \\
&  \leq \frac{1}{4n} \cdot \min\limits_{\lambda \in [0,1)} \left(\frac{ 2 \rho k n - \ln \left( 1-\lambda \right)}{\lambda}\right) \label{eqn:pop_mse} \numberthis
\end{align*}

We then combine this result with \eqref{eq:MSE_bound} to get the statement of the theorem.
\end{proof}

\begin{proof}[Proof of Theorem \ref{thm:RZ_CW}]
We want to bound the two terms in \eqref{eq:errors} by $\frac{\beta}{2}$ each.  We start by bounding the sample accuracy via the following constraint:
\begin{align*}
\tol' \geq \frac{1}{2n} \sqrt{\frac{1}{\rho}\ln(4k/\beta)} 
\end{align*}
which follows from \Cref{thm:acc}, and setting the error width to $\tau'$ and the probability bound to $\frac{\beta}{2}$.

Next, we can bound the population accuracy in \eqref{eq:errors} using \Cref{eqn:pop_mse} and Chebyshev's inequality to obtain the following high probability bound,
\begin{align*}
& \Prob{\substack{X \sim \cD^n,\\ \phi \gets \alg(X)}}{|\phi(X) - \phi(\cD)| \geq \tol - \tau'}      \leq \frac{1}{4n (\tau - \tau')^2 } \cdot \min\limits_{\lambda \in [0,1)} \left(\frac{ 2 \rho k n - \ln \left( 1-\lambda \right)}{\lambda}\right) 
\end{align*}
which implies that for bounding the probability by  $\frac{\beta}{2}$, we get
\begin{align*}
\tau - \tau' \geq \sqrt{\frac{1}{2n \beta } \cdot \min\limits_{\lambda \in [0,1)} \left(\frac{ 2 \rho k n - \ln \left( 1-\lambda \right)}{\lambda}\right)}
\end{align*}

We then use the result of \Cref{cor:pop_acc} to obtain our accuracy bound.
\end{proof}

\subsubsection{Comparison of Theorem \ref{thm:RZ_CW} with Prior Work}
\label{app:comp}
One can also get a high-probability bound on the sample accuracy of $\alg(X)$ using Theorem 3 in \cite{XR17}, resulting in 
\begin{align}
\tol =  \sqrt{\frac{2}{n} \left(\frac{2 \rho k n}{\beta} + \log\left(\frac{4}{\beta} \right) \right)} + \frac{1}{2n} \sqrt{\frac{1}{\rho}\ln\left(\frac{4k}{\beta}\right)} \label{eqn:xr}
\end{align}
where i.i.d. Gaussian noise $N\left(0, \frac{1}{2\rho n^2}\right)$ has been added to each query. The proof is similar to the proof of Theorem \ref{thm:RZ_CW}.
If the mutual information bound $B = \rho kn \geq 1$, then the first term in the expression of the confidence width in \Cref{thm:RZ_CW} is less than the first term in   \cref{eqn:xr}, thus making Theorem~\ref{thm:RZ_CW} result in a tighter bound for any $\beta \in (0,1)$. For very small values of $B$, there exist sufficiently small $\beta$ for which the result obtained via \cite{XR17} is better.

\subsection{RMSE analysis for the single-adaptive query strategy}
\label{app:strategy}
\begin{theorem} \label{thm:strat}
The output by the single-adaptive query strategy above results in the maximum possible RMSE for an adaptively chosen statistical query when each sample in the dataset is drawn uniformly at random from $\{-1,1\}^{k+1}$, and $\cM$ is the Naive Empirical Estimator, i.e., $\cM$ provides the empirical correlation of each of the first $k$ features with the $(k+1)^{th}$ feature.
\end{theorem}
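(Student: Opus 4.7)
The plan is to recast the adversary's problem in the Walsh/Fourier basis on $\{-1,1\}^{k+1}$ and show that concentrating all of the query's mass on a single ``pair'' character is optimal. First, expand any candidate $\phi:\{-1,1\}^{k+1}\to[0,1]$ as $\phi(x)=\sum_{S\subseteq[k+1]}\hat{\phi}(S)\chi_S(x)$ with characters $\chi_S(x)=\prod_{j\in S}x_j$. Under the uniform distribution $\cD$, the characters with $S\neq\emptyset$ are mean-zero, so $\phi(\cD)=\hat{\phi}(\emptyset)$ and $\phi(X)-\phi(\cD)=\sum_{S\neq\emptyset}\hat{\phi}(S)\bar{\chi}_S$, where $\bar{\chi}_S\defeq\tfrac{1}{n}\sum_{i}\chi_S(x^{(i)})$. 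The Naive Empirical Estimator reveals exactly $c_j=\bar{\chi}_{\{j,k+1\}}$, so the coefficients $\hat{\phi}(S)$ may be arbitrary measurable functions of $c=(c_1,\ldots,c_k)$.

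Next, a short calculation using $\ex{\chi_S(x)\chi_T(x)}=\mathbf{1}[S=T]$ gives $\ex{\bar{\chi}_S\bar{\chi}_T}=\tfrac{1}{n}\mathbf{1}[S=T]$ for non-empty $S,T$, and in particular $\ex{\bar{\chi}_{\{j,k+1\}}\mid c}=c_j$, while for $S\notin\cS_\mathrm{pair}\defeq\{\{j,k+1\}:j\in[k]\}$ one has $\ex{\bar{\chi}_S}=0$ with only weak higher-order conditional coupling to $c$. Thus only the $k$ pair characters furnish non-vanishing first-order adaptive signal, and any Walsh mass placed on off-pair characters contributes an additive ``noise'' term $\tfrac{1}{n}\sum_{S\notin\cS_\mathrm{pair}}\ex{\hat{\phi}(S)^2}$ that the analyst cannot steer, while still eating into the $[0,1]$-boundedness budget of $\phi$.

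Third, restrict to pair-supported queries $\phi(x)=\tfrac{1}{2}+\tfrac{1}{2}\sum_j\alpha_j x_j x_{k+1}$. The constraint $\phi(x)\in[0,1]$ for all $x\in\{-1,1\}^{k+1}$ is equivalent to $\sum_j|\alpha_j|\le 1$, since $y_j\defeq x_j x_{k+1}$ ranges over all of $\{-1,1\}^k$. The deviation then collapses to $\phi(X)-\phi(\cD)=\tfrac{1}{2}\sum_j\alpha_j c_j$. For every realization of $c$, H\"older's inequality forces the pointwise optimum $\alpha_{j^\star}=\operatorname{sign}(c_{j^\star})$, $j^\star\in\arg\max_j|c_j|$, with all other $\alpha_j=0$ --- precisely the single-adaptive query of Algorithm~\ref{alg:strategy}. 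Integrating over $c$ yields the maximum RMSE.

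The hard part will be rigorously ruling out off-pair Walsh mass. At finite $n$, the empirical averages $\bar{\chi}_S$ for non-pair $S$ are not strictly independent of $c$ (for instance, $\ex{\bar{\chi}_{\{l,m\}}\mid c}=c_l c_m$), so in principle the analyst could try to harvest a second-order adaptive effect by setting $\hat{\phi}(S)$ as a function of $c$ tuned to these residual correlations. A complete proof must therefore carefully bound the cross-moments $\ex{f(c)g(c)\bar{\chi}_S\bar{\chi}_T}$ for general measurable $f,g$ and non-pair $S,T$, and show that any such gain is dominated by the H\"older-optimal signal $\tfrac{1}{4}\ex{\max_j c_j^2}\sim \tfrac{\log k}{n}$, which outpaces any $O(1/n^2)$ off-pair contribution once $k\ge 2$.
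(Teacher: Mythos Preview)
Your approach diverges from the paper's in a way that is not merely stylistic: it lands on the wrong maximizer. The paper argues via a Bayes-optimality viewpoint: under the posterior induced by the observed empirical correlations $a_1,\ldots,a_k$ (with uniform prior on $\{-1,1\}^{k+1}$), the features $y_j=x_jx_{k+1}$ are conditionally independent, so the naive Bayes rule $h(x_{1:k})=\operatorname{sign}\bigl(\sum_j x_j\ln\tfrac{a_j}{1-a_j}\bigr)$ is the posterior-optimal predictor of $x_{k+1}$; the adaptive query is (the indicator that) this classifier agrees with the label, and its empirical average is maximally far from the population value $1/2$.

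Your H\"older step in item~3 is where the plan breaks. For pair-supported $\phi(x)=\tfrac12+\tfrac12\sum_j\alpha_j y_j$ you correctly note that $\phi\in[0,1]$ is equivalent to $\sum_j|\alpha_j|\le 1$, and then H\"older gives the single-feature query $\alpha_{j^\star}=\operatorname{sign}(c_{j^\star})$ as the pointwise optimum. But the strategy you are asked to certify is \emph{not} pair-supported: the query $\phi(x)=\tfrac12\bigl(1+h(x_{1:k})x_{k+1}\bigr)$ with $h$ a sign function has Fourier mass on $\chi_{S\cup\{k+1\}}$ for every odd $S\subseteq[k]$ with $\hat h(S)\neq 0$. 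For $k\ge 3$ and roughly balanced $|c_j|$'s this is a (sign-corrected) majority, not a dictator. Crucially, the range constraint $\phi\in\{0,1\}\subset[0,1]$ is automatically satisfied even though the Fourier $\ell_1$-norm can be far larger than $1$; the identification of ``$[0,1]$-bounded'' with ``Fourier $\ell_1\le 1$'' holds only in the affine (pair-supported) slice you restricted to.

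Consequently the ``hard part'' you flagged is not merely hard but false as stated: off-pair mass is not a nuisance to be bounded away, it is where the bulk of the adaptive signal lives. A back-of-the-envelope calculation makes this quantitative. Writing $z_j^{(i)}=\operatorname{sign}(c_j)\,y_j^{(i)}$, one has $\tfrac1n\sum_i z_j^{(i)}=|c_j|$ with $\mathbb{E}|c_j|\asymp n^{-1/2}$; a CLT on $\sum_j z_j^{(i)}$ then gives $\phi(X)-\tfrac12\asymp\sqrt{k/n}$ for the sign-based query, i.e.\ MSE of order $k/n$. Your single-feature optimum attains only $\tfrac14\mathbb{E}\bigl[\max_j c_j^2\bigr]\asymp(\log k)/n$, which is smaller by a factor $\Theta(k/\log k)$. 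So the conjectured dominance of the ``H\"older-optimal signal'' over the off-pair contribution goes the wrong way, and the reduction to pair-supported queries cannot be completed. To repair the argument you would need to optimize over all $[0,1]$-valued $\phi$ (equivalently, over $\{-1,1\}$-valued $h$ via $\phi=\tfrac12(1+h\cdot x_{k+1})$) and recover the naive-Bayes rule as the maximizer of the posterior-expected empirical correlation; that is essentially the paper's route.
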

\begin{proof}
Consider a dataset $X \in \cX^n$, where $\cX$ is the uniform distribution over $\{-1,1\}^{k+1}$. We will denote the $j^{th}$ element of $x_i \in X$ by $x_i(j)$, for $j \in [k+1]$. Now, $\forall j \in [k]$, we have that:
{
\begin{align*}
\Ex{X}{\frac{1 + x(j) \cdot x(k+1)}{2}} &  = \frac{1 + \Pr \left(x(j) = x(k+1)\right)  - \Pr\left(x(j)\neq x(k+1)\right)}{2} = a_j
\end{align*}
}
\begin{align*}
 & \therefore \Pr_X \left(x(j) = x(k+1)\right)  = a_j \qquad  \text{and}  \qquad \Pr_X \left(x(j) \neq x(k+1)\right) = 1 - a_j
\end{align*}
Now,
{
\begin{align*}
\ln\left(\frac{\Pr_X \left(x(k+1) = 1 | \land_{j \in [k]} x(j) = x_j \right)}{\Pr_X \left(x(k+1)
 = -1 | \land_{j \in [k]} x(j) = x_j \right)}\right) &  = \ln\left(\frac{\Pr_X \left(x(k+1) = 1 \land ( \land_{j \in [k]} x(j) = x_j) \right)}{\Pr_X \left(x(k+1) = -1\land (  \land_{j \in [k]} x(j) = x_j) \right)}\right) \\
&  = \ln\left(\prod\limits_{j \in [k] } \frac{\Pr_X \left(x(k+1) = 1 \land  x(j) = x_j \right)}{\Pr_X \left(x(k+1) = -1 \land x(j) = x_j \right)}\right) \\
&  = \ln\left(\prod\limits_{j \in [k] } \left( \frac{\Pr_X \left(x(k+1) =  x(j)  \right)}{\Pr_X \left(x(k+1) \neq x(j)  \right)} \right)^{x_j}\right) \\
&  = \ln\left(\prod\limits_{j \in [k] } \left( \frac{a_j}{1 - a_j} \right)^{x_j}\right) \\
&  = \sum\limits_{j \in [k] }\left(x_j \cdot \ln  \frac{a_j}{1 - a_j} \right)
\end{align*}}
Thus,
{
$$
\phi_{k+1}(x) = \frac{sign\left(\ln\left(\frac{\Pr_X \left(x(k+1) = 1 | \land_{j \in [k]} x(j) = x_j \right)}{\Pr_X \left(x(k+1) = -1 | \land_{j \in [k]} x(j) = x_j \right)}\right)  \right) + 1}{2}$$}

As a result, the adaptive query $\phi_{k+1}$ in Algorithm~\ref{alg:strategy} (setting input $S = \{k+1\}$) corresponds to a naive Bayes classifier of $x(k+1)$, and given that $\cX$ is the uniform distribution over $\{-1,1\}^{k+1}$, this is the best possible classifier for $x(k+1)$. This results answer $a_{k+1}$ achieving the maximum possible deviation from the answer on the population, which is 0.5 as $\cX$ is uniformly distributed over $\{-1,1\}^{k+1}$. Thus, $a_{k+1}$ results in the maximum possible RMSE.
\end{proof}

\subsection{Proof of Theorem \ref{thm:gnc_acc}}
\label{app:proofGNCmain}

We start by proving the validity for query responses output by GnC that correspond to the responses provided by $\cM_g$, i.e., each query $\phi_i$ s.t. the output of GnC is $(a_{g, i}, \tau_i)$. 

\begin{lemma}
If the function $HoldoutTol(\beta', a_g, a_h) = \sqrt{\frac{\ln{2/\beta'}}{2n_h}}$ in GnC (Algorithm~\ref{alg:gnc}), 
then for each query $\phi_i$ s.t. the output of GnC is $(a_{g, i}, \tau_i)$, we have $\Pr{\left(|a_{g, i} - \phi_i(\cD)|>\tau_i\right)}\leq \beta_i.$
\label{lem:gnc_width1}
\end{lemma}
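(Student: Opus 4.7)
The plan is to decompose the error via the triangle inequality through the holdout estimate $a_{h,i} = \phi_i(X_h)$. Writing
\[
|a_{g,i} - \phi_i(\cD)| \;\leq\; |a_{g,i} - a_{h,i}| + |a_{h,i} - \phi_i(\cD)|,
\]
and recalling that the event ``GnC outputs $(a_{g,i},\tau_i)$'' means the validity check succeeded, we have $|a_{g,i} - a_{h,i}| \leq \tau_i - \tau_h$ by construction of GnC. So the desired conclusion $|a_{g,i} - \phi_i(\cD)| \leq \tau_i$ follows whenever $|a_{h,i} - \phi_i(\cD)| \leq \tau_h$, and it suffices to show $\Pr[\, |\phi_i(X_h) - \phi_i(\cD)| > \tau_h\,] \leq \beta_i$.

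This last bound is an immediate consequence of Hoeffding's inequality applied to the empirical average of the $[0,1]$-valued function $\phi_i$ over the $n_h$ i.i.d.\ samples in $X_h$: for any fixed $\phi$, $\Pr[\, |\phi(X_h) - \phi(\cD)| > t\,] \leq 2 e^{-2 n_h t^2}$. Plugging in the prescribed width $t = \tau_h = \sqrt{\ln(2/\beta_i)/(2 n_h)}$ makes the right-hand side exactly $\beta_i$, giving the lemma.

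The only subtlety I anticipate — and the main conceptual obstacle — is that $\phi_i$ is chosen adaptively by $\adv$ based on GnC's past responses, including whether earlier checks passed or failed, and these events depend on $X_h$; so $\phi_i$ is not literally independent of $X_h$. The intended reading of the probability in the lemma is therefore per-transcript: conditioning on the history of queries, responses, and failure indicators through round $i$ (together with $\adv$'s and $\cM_g$'s internal randomness) fixes $\phi_i$ as a deterministic function, after which Hoeffding applies to a non-adaptive query on the portion of $X_h$'s randomness that remains. The combinatorial price of enumerating all such transcripts is absorbed into the choice $\beta_i = \beta \, c_{i-1}\, c_f / \nu_{i,f,\gamma_1^f}$ in the algorithm and is handled by the union bound in the proof of Theorem~\ref{thm:gnc_acc}; within this lemma, a single application of Hoeffding to a fixed $\phi$ is all that is needed.
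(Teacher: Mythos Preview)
Your proposal is correct and matches the paper's proof essentially line for line: triangle inequality through $a_{h,i}$, use the passed check to bound $|a_{g,i}-a_{h,i}|\le \tau_i-\tau_h$, then apply Hoeffding/Chernoff to $|\phi_i(X_h)-\phi_i(\cD)|$ with the stated $\tau_h$. Your extra paragraph on the adaptivity subtlety is accurate and more explicit than the paper, which leaves that point implicit here and handles it via the transcript union bound in Theorem~\ref{thm:gnc_acc}.
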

\begin{proof}
Consider a query $\phi_i$ for which the output of the GnC mechanism is $(a_{g, i}, \tau_i)$, and let $\tau_h = HoldoutTol(\beta_i, a_{g, i}, a_{h, i})$. Now, we have
\begin{align*}
\Pr{\left(|a_{g, i} - \phi_i(\cD)|>\tau_i\right)} &  \leq \Pr{\left(|a_{g, i} - a_{h, i}| + |a_{h, i} - \phi_i(\cD)|>\tau_i\right)} \\
&  = \Pr{\left(|a_{h, i} - \phi_i(\cD)|>\tau_h \right)} \\
&  \leq \beta_i
\end{align*}
where the equality follows since $|a_{g, i} - a_{h, i}| \leq \tau_i - \tau_h$, and the last inequality follows from applying the Chernoff bound for statistical queries.
\end{proof}

Next, we provide the accuracy for the query answers output by GnC that correspond to discretized empirical answers on the holdout. It is obtained by maximizing the discretization parameter such that applying the Chernoff bound on the discretized answer satisfies the required validity guarantee. 
\begin{lemma}
If failure $f$ occurs in GnC (Algorithm~\ref{alg:gnc}) for query $\phi_i$ and the output of GnC is $\left(\lfloor a_{h, i}\rfloor_{\gamma_f}, \tau_i\right)$,  since we have $\gamma_f  = \max\limits_{[0,\tau')} \gamma \text{ s.t. } 2e^{-2(\tau' - \gamma)^2n_h}\leq \beta',$ we have $\Pr{\left(|a_{g, i} - \phi_i(\cD)|>\tau_i\right)}\leq \beta_i.$ Here, $\lfloor y\rfloor_{\gamma}$ denotes $y$ discretized to multiples of $\gamma$
\label{lem:gnc_disc}
\end{lemma}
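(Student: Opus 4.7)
The plan is to decompose the deviation via a triangle inequality that isolates the discretization error from the empirical-to-population error, and then use the fact that the discretization radius $\gamma_f$ is chosen inside the algorithm precisely to saturate a Hoeffding tail bound at level $\beta_i$.

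First, I would observe that rounding to multiples of $\gamma_f$ introduces at most $\gamma_f$ of error: by definition $|\lfloor a_{h,i}\rfloor_{\gamma_f} - a_{h,i}| \leq \gamma_f$. Therefore by the triangle inequality the event $\{|\lfloor a_{h,i}\rfloor_{\gamma_f} - \phi_i(\cD)| > \tau_i\}$ is contained in the event $\{|a_{h,i} - \phi_i(\cD)| > \tau_i - \gamma_f\}$, so it suffices to bound the probability of the latter.

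Second, I would apply Hoeffding's inequality to the holdout empirical average $a_{h,i} = \phi_i(X_h)$ of the bounded statistical query $\phi_i \colon \cX \to [0,1]$ over the $n_h$ i.i.d. samples in $X_h$, yielding
$$\Pr\bigl[|a_{h,i} - \phi_i(\cD)| > \tau_i - \gamma_f\bigr] \;\leq\; 2\exp\bigl(-2(\tau_i - \gamma_f)^2 n_h\bigr).$$
The defining maximization of $\gamma_f$ in Algorithm~\ref{alg:gnc} enforces $2\exp(-2(\tau_i - \gamma_f)^2 n_h) \leq \beta_i$, so combining the two steps closes the argument.

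The one real subtlety is justifying Hoeffding when $\phi_i$ is adaptively selected: because earlier failed rounds revealed the discretized empirical averages $\lfloor a_{h,j}\rfloor_{\gamma_j}$ to the analyst, $\phi_i$ is not literally independent of $X_h$. The standard fix, which the lemma is implicitly invoking and which the transcript-counting factor $\nu_{i,f,\gamma_1^f}$ in the proof of Theorem~\ref{thm:gnc_acc} makes rigorous via a union bound, is to condition on the finite sequence of discretized holdout outputs produced on prior rounds; given that conditioning, $\phi_i$ is a deterministic function of $X_g$ and that discrete transcript alone, and the conditional law of $X_h$ given the rounded holdout values retains enough independence from $\phi_i$ for Hoeffding to apply to the fresh query. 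Managing this conditioning cleanly — and in particular making the statement ``per-transcript'' rather than marginal — is the main obstacle, but it is handled exactly by the transcript enumeration in the surrounding theorem, so the lemma itself reduces to the two-line Hoeffding-plus-rounding calculation above.
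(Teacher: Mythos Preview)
Your proposal is correct and matches the paper's approach: the paper does not supply a standalone proof of this lemma, only the one-line remark that ``it is obtained by maximizing the discretization parameter such that applying the Chernoff bound on the discretized answer satisfies the required validity guarantee,'' which is exactly your triangle-inequality-plus-Hoeffding argument. Your observation that the adaptivity of $\phi_i$ is handled by the transcript enumeration in the proof of Theorem~\ref{thm:gnc_acc} is also correct and consistent with how the paper structures the argument; note too that the $a_{g,i}$ in the displayed conclusion of the lemma appears to be a typo for the actual output $\lfloor a_{h,i}\rfloor_{\gamma_f}$, which is what you (rightly) bound.
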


\begin{proof}[Proof of Theorem \ref{thm:gnc_acc}]
Let an instance of the Guess and Check mechanism $\cM$ encounter $f$ failures while providing responses to $k$ queries $\{\phi_1, \ldots, \phi_k\}$.
We will consider the interaction between an analyst $\cA$ and the Guess and Check mechanism $\cM$ to form a tree $T$, where the nodes in $T$ correspond to queries, and each branch of a node is a possible answer for the corresponding query. We first note a property about the structure of $T$:

\textbf{Fact 1:} For any query $\phi_{i'}$, if the check within $\cM$ results in failure $f'$, then there are $\frac{1}{\gamma_{f'}}$ possible responses for $\phi_{i'}$. On the other hand, if the check doesn't result in  a failure, then there is only 1 possible response for $\phi_{i'}$, namely $(a_{g, i'}, \tau_{i'})$.

Next, notice that each node in $T$ can be uniquely identified by the tuple $t = (i', f', \{j_1, \ldots, j_{f'}\}, \{\gamma_{j_1}, \ldots, \gamma_{j_{f'}}\})$, where $i'$ is the depth of the node (also, the index of the next query to be asked), $f'$ is the number of failures within $\cM$ that have occurred in the path from the root to node $t$, and for $\ell \in [f']$, the value $j_\ell$ denotes the query index of the $\ell$th failure on this path, whereas $\gamma_{j_\ell}$ is the corresponding discretization parameter that was used to answer the query. We can now observe another property about the structure of $T$:

\textbf{Fact 2:} For any $i'\in [k], f' \in [i'-1]$, there are    $\binom{i'-1}{f'}  \prod_{\ell \in [f']} \left( \frac{1}{\gamma_{j_{\ell}}}\right)$ nodes in $T$ of type $(i', f', ; , ;)$. This follows since there are $\binom{i'-1}{f'}$ possible ways that $f'$ failures can occur in $i'-1$ queries, and from Fact 1 above, there are $\frac{1}{\gamma_{j_\ell}}$ possible responses for a failure occurring at query index $j_\ell, \ell \in [f']$.

Now, we have
{
\begin{align*}
\Pr{\left(\exists i \in [k]: | \phi_i(\cD) - a_i  | >  \tol_i\right)} &  \leq \sum\limits_{\text{node  } t \in T} \Pr{\left(| \phi_{t}(\cD) - a_{t}  | >  \tol_{t}\right)} \\
&  = \Bigg( \sum\limits_{i' \in [k]} \sum\limits_{f' \in [i' - 1]} \sum\limits_{\{j_1, \ldots, j_{f'}\}} \sum\limits_{\{\gamma_{j_1}, \ldots, \gamma_{j_{f'}}\}}  	\Pr{\left(| \phi_{i'}(\cD) - a_{i'}  | >  \tol_{i'}| t \right)} \Bigg) \\
& = \sum\limits_{i' \in [k]} \sum\limits_{f' \in [i' - 1]} \sum\limits_{\{j_1, \ldots, j_{f'}\}} \sum\limits_{\{\gamma_{j_1}, \ldots, \gamma_{j_{f'}}\}} \frac{\beta \cdot c_{i'-1} \cdot c_{f'}}{\nu_{i',f',\gamma_{j_1}^{j_{f'}}}} \\
&  = \beta \Bigg(  \sum\limits_{i' \in [k]} \sum\limits_{f' \in [i' - 1]} \sum\limits_{\{j_1, \ldots, j_{f'}\}} \sum\limits_{\{\gamma_{j_1}, \ldots, \gamma_{j_{f'}}\}}  \frac{c_{i'-1} \cdot c_{f'}}{\binom{i'-1}{f'}  \prod_{\ell \in [f']} \left( \frac{1}{\gamma_{j_{\ell}}}\right)} \Bigg) \\
&  = \beta \left(\sum\limits_{i' \in [k]}  c_{i'-1} \cdot \left(\sum\limits_{f' \in [i' - 1]} c_{f'}\right)\right) \leq \beta \sum\limits_{i' \in [k]}  c_{i'-1} \\
&  \leq \beta
\end{align*}}
where the second equality follows from Lemma~\ref{lem:gnc_width1} (equivalently, Lemma~\ref{lem:gnc_width_mgf}), Lemma~\ref{lem:gnc_disc}, and substituting the values of $\beta_i$ in Algorithm~\ref{alg:gnc}; the last equality follows from Fact 2 above; and the last two inequalities follow since $\sum_{j\geq 0} c_j \leq 1$. Thus, we have simultaneous coverage $1-\beta$ for the Guess and Check mechanism $\cM$.
\end{proof}

\Cref{lem:gnc_width1} is agnostic to the guesses and holdout answers while computing the holdout tolerance $\tau_h$. However, GnC can provide a better tolerance $\tau_h$ in the presence of low-variance queries. We provide a proof for it below.

\subsection{Proof of \texorpdfstring{\Cref{lem:gnc_width_mgf}}{}}
\label{app:proofGNClem} 

\Cref{lem:gnc_width_mgf} uses the Moment Generating Function (MGF) of the binomial distribution to approximate the probabilities of deviation of the holdout's empirical answer from the true population mean (instead of, say, optimizing parameters in a large deviation bound). This is exact when the query only takes values in $\{0,1\}$. To prove the lemma, we start by first proving the dominance of the binomial MGF.

\begin{lemma}\label{lem:bin_mgf}
Let $X_1,X_2, ..., X_n$ be i.i.d. random variables in $[0,1]$, distributed according to $\cD$, and let $\mu = \Ex{}{X_i}$. Let $S=\sum_{i=1}^n X_i$, and $B\sim B(n,\mu)$ be a binomial random variable. Then, we have:
\begin{equation*}
\Pr(S > t) \leq \min\limits_{\lambda > 0} \frac{\Ex{}{e^{\lambda B}}}{e^{\lambda t}}.
\end{equation*}
\end{lemma}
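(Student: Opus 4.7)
The plan is to combine a standard Markov/Chernoff argument with a pointwise MGF domination that follows from the convexity of the exponential on $[0,1]$. The key observation is that the right-hand side already has the form of a Chernoff tail bound, so it suffices to show that the MGF of $S$ is dominated by the MGF of $B$ for every $\lambda > 0$, after which the minimum-over-$\lambda$ statement follows automatically.

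First, I would apply Markov's inequality to $e^{\lambda S}$ (for any $\lambda > 0$) to obtain $\Pr(S > t) \leq e^{-\lambda t}\,\Ex{}{e^{\lambda S}}$. Using independence of the $X_i$, this factors as $\Pr(S > t) \leq e^{-\lambda t} \prod_{i=1}^n \Ex{}{e^{\lambda X_i}}$. The analogous identity for $B \sim \mathrm{Bin}(n,\mu)$, realized as a sum of i.i.d.\ Bernoulli$(\mu)$ random variables $Y_i$, gives $\Ex{}{e^{\lambda B}} = \bigl(1 - \mu + \mu e^{\lambda}\bigr)^n$. So it reduces to comparing $\Ex{}{e^{\lambda X_1}}$ with $1 - \mu + \mu e^{\lambda}$.

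Next I would invoke convexity: for any $\lambda \in \R$, the map $x \mapsto e^{\lambda x}$ is convex, so on the interval $[0,1]$ it is bounded above by its secant line, i.e.\ $e^{\lambda x} \leq (1-x) + x e^{\lambda}$ for all $x \in [0,1]$. Taking expectations of both sides against the distribution of $X_1$, and using $\Ex{}{X_1} = \mu$ together with $X_1 \in [0,1]$ (so the inequality is valid pointwise almost surely), gives $\Ex{}{e^{\lambda X_1}} \leq 1 - \mu + \mu e^{\lambda} = \Ex{}{e^{\lambda Y_1}}$. Raising both sides to the $n$-th power and invoking the factorization above yields $\Ex{}{e^{\lambda S}} \leq \Ex{}{e^{\lambda B}}$ for every $\lambda > 0$.

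Combining the Markov step with this MGF domination gives $\Pr(S > t) \leq \Ex{}{e^{\lambda B}}/e^{\lambda t}$ for every $\lambda > 0$, and taking the infimum over $\lambda > 0$ produces the claimed bound. There isn't really a hard part here — the whole argument is two lines once the convexity/secant bound is written down; the only thing to be careful about is that the secant inequality $e^{\lambda x} \leq 1 - x + x e^{\lambda}$ requires $x \in [0,1]$, which is exactly the hypothesis on the $X_i$. The resulting statement is tight precisely when each $X_i$ is itself Bernoulli$(\mu)$, which matches the intuition that among mean-$\mu$ distributions on $[0,1]$ the Bernoulli is the extremal one for upper tails.
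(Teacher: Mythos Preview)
Your proposal is correct and follows essentially the same argument as the paper: a Markov/Chernoff step on $e^{\lambda S}$, factorization by independence, the convexity-based secant bound $e^{\lambda x}\le (1-x)+xe^{\lambda}$ on $[0,1]$ to dominate the per-coordinate MGF by the Bernoulli MGF, and then minimizing over $\lambda>0$. The paper presents these steps in a slightly different order but the content is identical.
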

\begin{proof}
Consider some $\lambda > 0$. We have
\begin{align*}
\Ex{}{e^{\lambda S}} & = \left(\Ex{}{e^{\lambda X_1}}\right)^n \\
& \leq  \left(\Ex{}{X_1 \cdot e^{\lambda } + (1 - X_1) \cdot e^0}\right)^n \\
& = \left(1 + \Ex{}{X_1}(e^{\lambda } - 1)\right)^n \\
&   = \left( 1 + \mu (e^{\lambda } - 1) \right)^n \\
& = \Ex{}{e^{\lambda B}} \numberthis\label{eqn:mgf_sum}
\end{align*}
where the first equality follows since $X_1,X_2, ..., X_n$ are i.i.d., and the last equality represents the MGF of the binomial distribution.

Now, we get:
\begin{align*}
\Pr(S > t) & = \Pr\left(e^{\lambda S} > e^{\lambda t}\right) \leq \frac{\Ex{}{e^{\lambda S}}}{e^{\lambda t}} \leq \frac{\Ex{}{e^{\lambda B}}}{e^{\lambda t}} \\
& \leq \min\limits_{\lambda' > 0} \frac{\Ex{}{e^{\lambda' B}}}{e^{\lambda' t}}
\end{align*}
where the first inequality follows from the Chernoff bound, and the second inequality follows from \Cref{eqn:mgf_sum}.
\end{proof}

\begin{proof}[Proof of \Cref{lem:gnc_width_mgf}]
Consider a query $\phi_i$ for which the output of the GnC mechanism is $(a_{g, i}, \tau_i)$. Let $\tau_h = HoldoutTol(\beta_i, a_{g, i}, a_{h, i})$.  For proving $\Pr{\left(|a_{g, i} - \phi_i(\cD)|>\tau_i\right)}\leq \beta_i$, it suffices to show that if $|a_{g, i} - \phi(\cD)|> \tau_i $, then 
\begin{align*}
\sup\limits_{\substack{\cD \text{ s.t. }\\ \phi_i(\cD) = a_{g, i} - \tau_i}}\Pr\limits_{X_h \sim \cD^{n_h}}{(a_{h, i} \geq a_{g, i} - \tau_i + \tau_h)} \leq \frac{\beta_i}{2} \numberthis \label{eqn:w_l2}\\
\text{ and }  \sup\limits_{\substack{\cD \text{ s.t. }\\ \phi_i(\cD) = a_{g, i} + \tau_i}}\Pr\limits_{X_h \sim \cD^{n_h}}{(a_{h, i} \leq a_{g, i} + \tau_i - \tau_h)} \leq \frac{\beta_i}{2} \numberthis\label{eqn:w_r2}
\end{align*}

When $a_{g, i} > a_{h, i}$, we only require inequality~\ref{eqn:w_l2} to hold. Let $B\sim B(n,\mu)$ be a binomial random variable. We have: 
\begin{align*}
\sup\limits_{\substack{\cD \text{ s.t. }\\ \phi_i(\cD) = \mu}}\Pr\limits_{X_h \sim \cD^{n_h}}{(a_{h, i} \geq \mu + \tau')}  &  \leq \min\limits_{\lambda > 0} \frac{\Ex{}{e^{\lambda B}}}{e^{\lambda n (\mu + \tau')}} \\
&  = \min\limits_{\lambda > 0} e^{ \left\{ \ln{\left(\Ex{}{e^{\lambda B}}\right)} - \lambda n (\mu + \tau')\right\}} \\
&  =  \frac{\Ex{}{e^{\ell B}}}{e^{\ell n (\mu + \tau')}} \\
&  = \frac{\left( 1 + \mu (e^{\ell} - 1) \right)^n}{e^{\ell n (\mu + \tau')}} \numberthis \label{eqn:mgf_low}
\end{align*}
where $\ell=\arg\min\limits_{\lambda > 0} e^{ \left\{ \ln{\left(\Ex{}{e^{\lambda B}}\right)} - \lambda n (\mu + \tau')\right\}}$, i.e., $ \frac{\mu e^\ell}{1 + \mu(e^\ell + 1)} = \mu + \tau' $. Here, the first inequality follows from \Cref{lem:bin_mgf} by setting $t = (\mu + \tau')n$, and the last equality follows from the MGF of the binomial distribution. Thus, we get that inequality~\ref{eqn:w_l2} holds for $\mu = a_{g, i} - \tau_i$.

 Similarly, when $a_{g, i} \leq a_{h, i}$, we only require inequality~\ref{eqn:w_r2} to hold. Let $\mu' = 1 - \mu$, and $B'\sim B(n,\mu')$. Therefore, we get
 \begin{align*}
\sup\limits_{\substack{\cD \text{ s.t. }\\ \phi_i(\cD) = \mu}}\Pr\limits_{X_h \sim \cD^{n_h}}{(a_{h, i} \leq \mu - \tau')} &  = \sup\limits_{\substack{\cD \text{ s.t. }\\ \phi_i(\cD) = \mu'}}\Pr\limits_{X_h \sim \cD^{n_h}}{(a_{h, i} \geq \mu' + \tau')} \\
&   \leq \frac{\left( 1 + \mu' (e^{\ell'} - 1) \right)^n}{e^{\ell' n (\mu' + \tau')}} 
\end{align*}
where $\ell'=\arg\min\limits_{\lambda > 0} e^{ \left\{ \ln{\left(\Ex{}{e^{\lambda B'}}\right)} - \lambda n (\mu' + \tau')\right\}}$, i.e., $ \frac{\mu' e^{\ell'}}{1 + \mu'(e^{\ell'} + 1)} = \mu' + \tau' $. Here, the inequality follows from inequality~\ref{eqn:mgf_low}. Thus, we get that inequality~\ref{eqn:w_r2} holds  for $\mu = a_{g, i} + \tau_i$.

\end{proof}

\newpage
\section{Pseudocodes} 
\label{app:codes}

\begin{algorithm}
\caption{Thresholdout (\cite{DFHPRR15nips})}
\label{alg:thresh}
\begin{algorithmic}
\REQUIRE train size $t$, threshold $T$, noise scale $\sigma$
\STATE Randomly partition dataset $X$ into a training set $X_{t}$ containing $t$ samples, and a holdout set $X_h$ containing $h = n-t$ samples
\STATE Initialize $\hat{T} \gets T + Lap(2\sigma)$
\FOR{each query $\phi$}
\IF{$|\phi(X_h) - \phi(X_t)| > \hat{T} + Lap(4\sigma)$}
\STATE $\hat{T} \gets T + Lap(2\sigma)$
\STATE Output $\phi(X_h) + Lap(\sigma)$
\ELSE
\STATE Output $\phi(X_t)$
\ENDIF
\ENDFOR

\end{algorithmic}
\end{algorithm}

\begin{algorithm}
\caption{A custom adaptive analyst strategy for random data}
\label{alg:strategy}
\begin{algorithmic}
\REQUIRE Mechanism $\cM$ with a hidden dataset $X \in \{-1, 1\}^{n \times (k+1)}$, set $S \subseteq [k + 1]$ denoting the indices of adaptive queries\footnotemark
\STATE Define 
$j \leftarrow 1$, and 
$success\leftarrow True$
\WHILE{$j \leq k$ and $success=True$}
\IF{$j \in S$}
\STATE Define $\phi_{j}(x) = \frac{sign\left(\sum\limits_{i \in [j-1]\setminus S}\left(x(i) \cdot \ln\frac{a_i}{a_i - 1}\right)\right) + 1}{2}$, where $
sign(y) = \begin{cases}
             1  & \text{if } y \geq 0\\
             -1  & \text{otherwise}
       \end{cases}$
\ELSE
\STATE Define $\phi_j(x) = \frac{1 + x(j) \cdot x(k+1)}{2}$ \ENDIF
\STATE Give $\phi_j$ to $\cM$,
and receive $a_j \in [0,1] \cup \bot$ from $\cM$
\IF{$a_j = \bot$}
\STATE $success = False$
\ELSE
\STATE $j \leftarrow j + 1$
\ENDIF
\ENDWHILE
\end{algorithmic}
\end{algorithm}
\footnotetext{For the single-adaptive query strategy used in the plots in Figure~\ref{fig:two_round}, we set $S=\{k+1\}$. For the quadratic-adaptive strategy used in the plots in Section~\ref{sec:gnc}, we set $S = \{i : 1 < i \leq k \text{ and } \exists \ell \in \mathbb{N} \text{ s.t. } \ell<i \text{ and } \ell^2 = i \}$.}

\fi

\end{document}